\def\algname{DARAIL}
\newcommand{\algrule}[1][.4pt]{\par\vskip.2\baselineskip\hrule height #1\par\vskip.2\baselineskip}
\def \citet{\cite}
\title{Off-Dynamics Reinforcement Learning via Domain Adaptation and Reward Augmented Imitation}
\author{%
    Yihong Guo$^1$, Yixuan Wang$^1$, Yuanyuan Shi$^2$, Pan Xu$^3$, Anqi Liu$^1$\\
    $^1$Johns Hopkins University\\
    $^2$University of California San Diego\\
    $^3$Duke University\\
    \texttt{\{yguo80,ywang830,aliu.cs\}@jhu.edu,
    yyshi@ucsd.edu,
    pan.xu@duke.edu}
  % David S.~Hippocampus\thanks{Use footnote for providing further information
  %   about author (webpage, alternative address)---\emph{not} for acknowledging
  %   funding agencies.} \\
  % Department of Computer Science\\
  % Cranberry-Lemon University\\
  % Pittsburgh, PA 15213 \\
  % \texttt{hippo@cs.cranberry-lemon.edu} \\
  % examples of more authors
  % \And
  % Coauthor \\
  % Affiliation \\
  % Address \\
  % \texttt{email} \\
  % \AND
  % Coauthor \\
  % Affiliation \\
  % Address \\
  % \texttt{email} \\
  % \And
  % Coauthor \\
  % Affiliation \\
  % Address \\
  % \texttt{email} \\
  % \And
  % Coauthor \\
  % Affiliation \\
  % Address \\
  % \texttt{email} \\
}
\begin{document}

\maketitle

\begin{abstract}
Training a policy in a source domain for deployment in the target domain under a dynamics shift can be challenging, often resulting in performance degradation. Previous work tackles this challenge by training on the source domain with modified rewards derived by matching distributions between the source and the target optimal trajectories. However, pure modified rewards only ensure the behavior of the learned policy in the source domain resembles trajectories produced by the target optimal policies, which does not guarantee optimal performance when the learned policy is actually deployed to the target domain. In this work, we propose to utilize imitation learning to transfer the policy learned from the reward modification to the target domain so that the new policy can generate the same trajectories in the target domain. Our approach, \emph{Domain Adaptation and Reward Augmented Imitation Learning} (DARAIL), utilizes the reward modification for domain adaptation and follows the general framework of \emph{generative adversarial imitation learning from observation} (GAIfO) by applying a reward augmented estimator for the policy optimization step. Theoretically, we present an error bound for our method under a mild assumption regarding the dynamics shift to justify the motivation of our method. Empirically, our method outperforms the pure modified reward method without imitation learning and also outperforms other baselines in benchmark off-dynamics environments.
% and achieves comparable performance with the expert policy reward in the source domain. 
\end{abstract}

%Training a policy in a source domain for the applications in the target domain under a dynamics shift can be challenging, often resulting in performance degradation, especially when the source domain is less comprehensive than the target domain. 

%Previous work tackles the off-dynamics domain adaptation in reinforcement learning using modified rewards derived by distribution matching between source and target optimal trajectories. However, this distribution matching objective does not necessarily achieve the optimal performance in the target domain due to the dynamics shift, but the policy experience in the source domain resembles trajectories produced by the target optimal policies.

%To address this issue, we propose a novel imitation learning objective to mimic the source trajectories produced by the target optimal policies in the target domain. We follow the general framework of \emph{generative adversarial imitation learning from observation} (GAIfO) and utilize a robust reward estimator for the policy optimization step in imitation learning. We call our method \emph{Domain Adaptation and Robust Reward Imitation Learning} (DARAIL). Theoretically, we present an error bound for our method with a less restrictive assumption than that required by the previous work. Empirically, our method outperforms previous baselines by at least 30\% in challenging off-dynamics scenarios.

\section{Introduction}
The objective of reinforcement learning (RL) is to learn an optimal policy that maximizes rewards through interaction and observation of environmental feedback. However, in domains such as medical treatment \cite{liu2017deep} and autonomous driving \cite{kiran2021deep}, we cannot interact with the environment freely as the errors are too costly or the amount of access to the environment is limited. Instead, we might have access to a simpler or similar source domain. 
 % In these scenarios, training a policy in a safer simulator and deploying it in the target domain is preferred, and the policy should function well in the target domain. 
 This requires domain adaptation in reinforcement learning. In this paper, we study a specific problem of domain adaptation in reinforcement learning (RL), where only the dynamics (transition probability) are different in two domains. This is called \textit{off-dynamics RL} \citep{eysenbach2020off,wu2021sim,liu2022dara}. Specifically, we focus on a problem setting in which we have limited access to rollout data from the target domain, but we do not have access to the target domain reward, following the previous off-dynamics work \cite{eysenbach2020off,wu2021sim,liu2022dara}. 

% One common solution to off-dynamics RL is importance sampling, which re-weights the transitions in the source domain with $\frac{p_{\text{trg}}(s_{t+1}|s_t,a_t)}{p_\text{src}(s_{t+1}|s_t,a_t)}$ \citep{horvitz1952generalization,zadrozny2004learning, cortes2014domain,swaminathan2015self}. However, the importance sampling method is known for its high variance \citep{dudik2011doubly}.
% % Fine-tuning the policy in the target domain is also straightforward in off-dynamics RL and offline RL \citep{nagabandi2018neural,villaflor2020fine}. However, a policy might overfit the simulator and perform badly in the target domain  \citep{wulfmeier2017mutual,eysenbach2020off}. 
% An alternative solution is imitation learning under dynamics mismatch \citep{gangwani2020state,desai2020imitation,kim2020domain}. To mimic the expert under dynamics mismatch, these methods alternatively train a policy and a discriminator similar to the \emph{generative adversarial network} (GAN) \citep{goodfellow2014generative}. Specifically, it trains the policy with reward signals provided by a discriminator that classifies whether the generated trajectories are from the expert trajectories distribution. However, these imitation learning methods require high-quality and sufficiently diverse expert demonstrations, which sometimes are not feasible in real-world scenarios.
\begin{figure*}[t]
    \centering
    \setlength{\abovecaptionskip}{0cm} 
    \setlength{\belowcaptionskip}{0cm}
    \begin{tabular}{cc}    \includegraphics[height=0.27\textwidth]{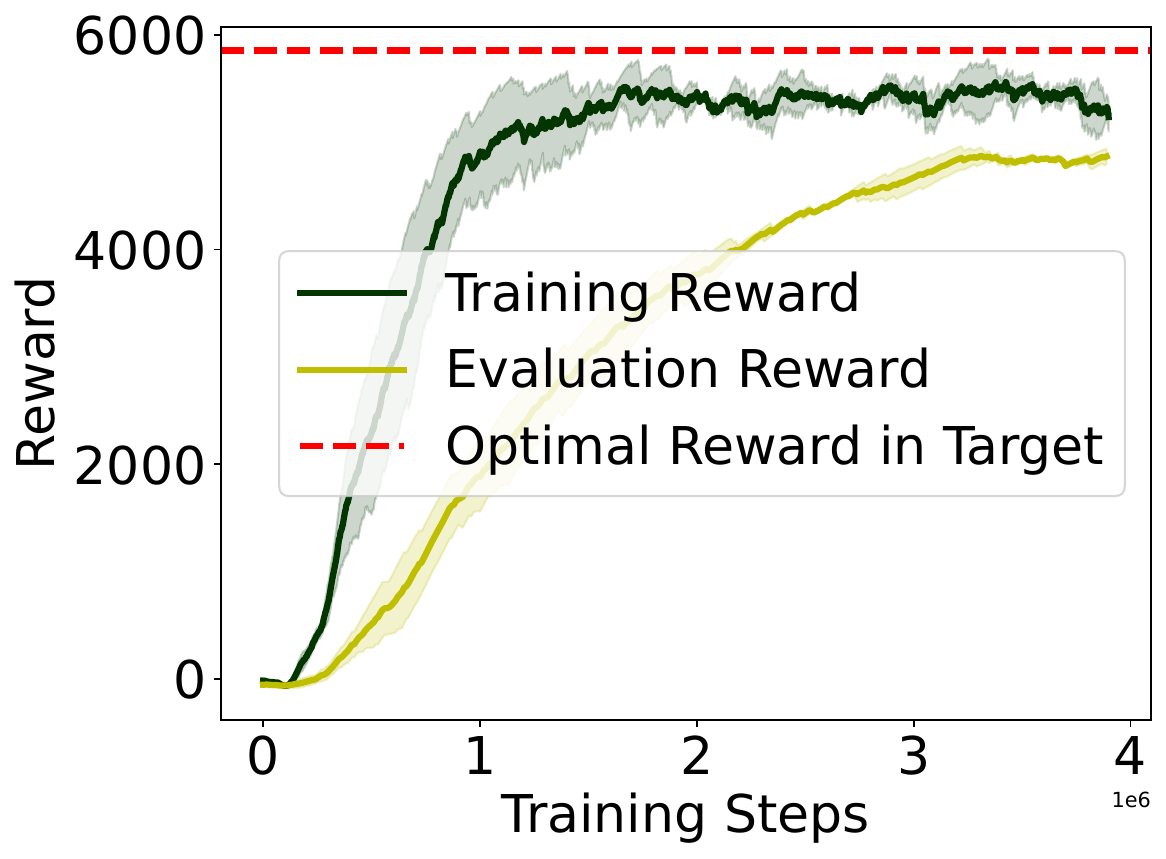}&
    \includegraphics[height=0.32\textwidth]{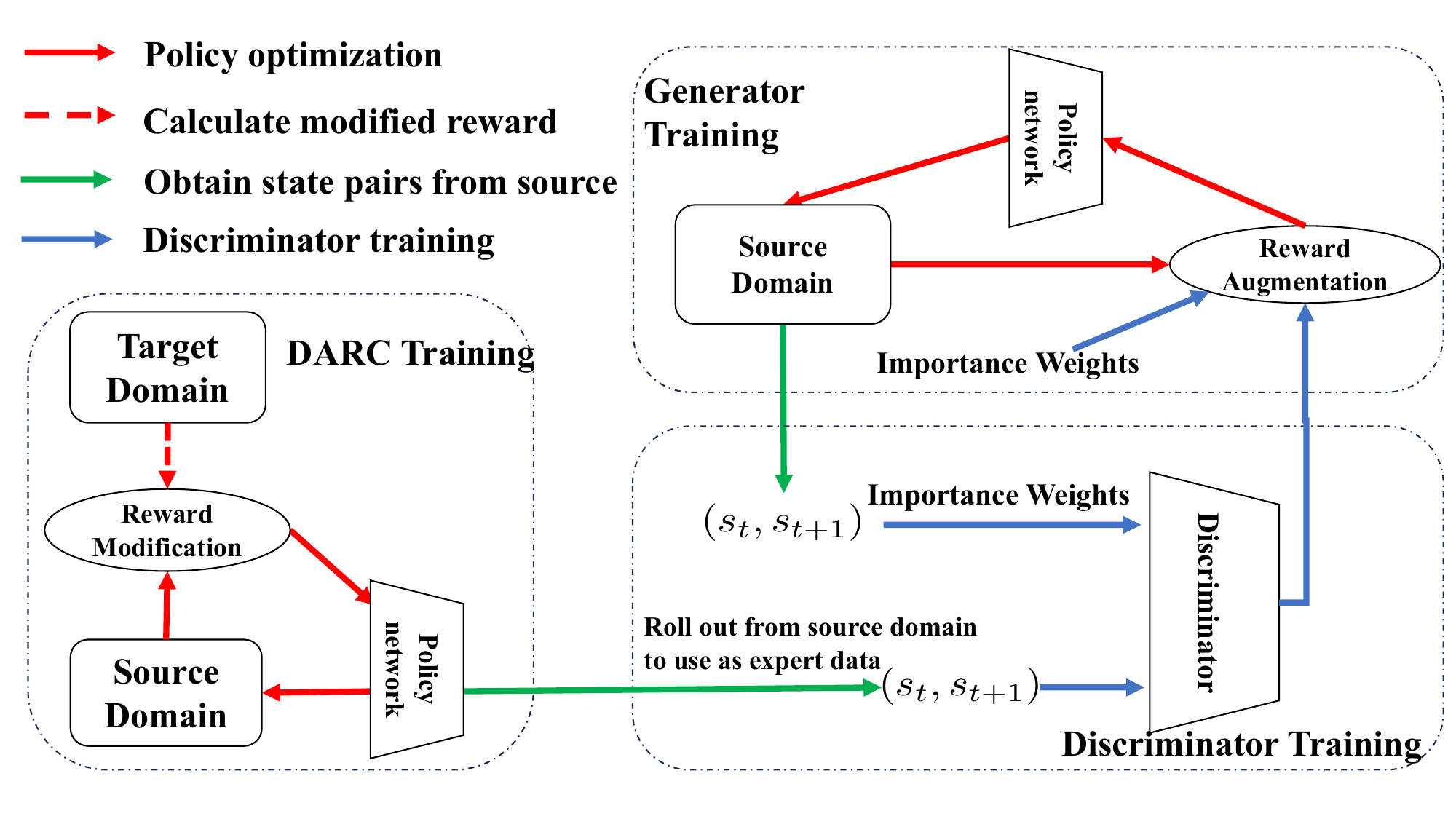}\\
    % (a) DARC objective & (b) 
    % DARAIL learning framework
     \end{tabular}
    \caption{(a) Training reward in the source domain, i.e. $\EE_{\pi_{\text{DARC},p_{\text{src}}}} [\sum_t r(s_t,a_t)]$, evaluation reward in the target domain, i.e. $\EE_{\pi_{\text{DARC},p_{\text{trg}}}} [\sum_t r(s_t,a_t)]$ and optimal reward in target domain, for DARC in Ant. Evaluating the trained DARC policy in the target domain will cause performance degradation compared with its training reward, which should be close to the optimal reward in the target given DARC's objective function. Results of HalfCheetah, Walker2d, and Reacher are in Figure \ref{fig:gap_between_darc_on_target_source} in Appendix.
    (b) Learning framework of DARAIL. DARC Training: we first train the DARC in the source domain with a modified reward that is derived from the minimization of the reverse divergence between optimal policies on target and learned policies on the source. Details of DARC and the modified reward are in Section \ref{section: introduction of darc} and Appendix \ref{appendix: darc objective}. Discriminator training: the discriminator is trained to classify whether the data is from the expert demonstration (DARC trajectories) and provide a local reward function for policy learning. Generator training: the policy is updated with augmented reward estimation, which integrates the reward from the source domain and information from the discriminator. We first train DARC, collect DARC trajectories from the source domain, and then train the discriminator and the generator alternatively. }
    \label{fig: description of DARAIL}
    \vspace{-0.05in}
\end{figure*}
Previous work on off-dynamics RL, such as \emph{Domain Adaptation with Rewards from Classifiers} (DARC) \citep{eysenbach2020off} and \citep{liu2021unsupervised,liu2022dara}, focuses on training the policy in the source domain with a modified reward function that compensates for the dynamics differences. 
The reward modification is derived so that the distribution of the learning policy's experience in the source domain matches that of the optimal trajectories in the target domain. As a result, their experience in the source domain will produce a trajectory distribution close to the target domain's optimal one. However, deploying the resulting policy in the target domain usually causes performance degradation compared to its training performance in the source domain. Figure \ref{fig: description of DARAIL} (a) shows the experiment result of DARC under a broken source environment setting, where the broken source environment means the value of 0-index in the action of the source domain is frozen to 0, and the target environment remains intact. Consequently, existing reward modification methods will only obtain a sub-optimal policy in the target domain. Details of DARC and its suboptimality in the target domain will be introduced in Section \ref{section: introduction of darc}. More details about why DARC fails in more general dynamics shift cases are in Appendix \ref{section: broken target environment}.

% In Figure \ref{fig: description of DARAIL}(a) shows the experiment result of DARC under a broken target environment setting, where the broken environment means the value of 0-index in action is frozen to 0 no matter what value is and the source environment remains intact. 

In this paper, we present an off-dynamics reinforcement learning algorithm described in Figure \ref{fig: description of DARAIL} (b). Our method, Domain Adaptation and Reward Augmented Imitation Learning (DARAIL) consists of two components. Following previous work like DARC \cite{eysenbach2020off} on off-dynamics RL, we first obtain the source domain trajectories that resemble the target domain's optimal ones. We then transfer the policy's behavior from the source to the target domain through imitation learning from observation \cite{ho2016generative}, which can mimic the policy's behavior from the state space. 

In particular, we consider the dynamics shift in the framework of generative adversarial imitation from observation (GAIfo) \citep{torabi2018generative}, and propose a novel and practical reward estimator called the \emph{reward augmented estimator} ($R_{AE}$) for the policy optimization step in imitation learning.

{\bf Our contributions} can be summarized as follows:
\begin{itemize}[itemsep=1pt,topsep=0pt,parsep=0pt,leftmargin=*]
% [topsep= 1ex,leftmargin=2.5\labelsep] 
% \item We propose the Domain Adaptation and Reward Augmented Imitation Learning (\algname) algorithm by transferring the behavior of reward modification approaches from the source domain to the target domain. This is done by mimicking state-space trajectories but not the actions generated by DARC or similar algorithms in the source domain. We propose \emph{reward augmented estimator} ($R_{AE}$) to leverage the reward from the source domain to stabilize the learning.
\item We propose the Domain Adaptation and Reward Augmented Imitation Learning (\algname) algorithm by transferring the learned policy of reward modification approaches from the source domain to the target domain via mimicking state-space trajectories in the source domain. We propose \emph{reward augmented estimator} ($R_{AE}$) to leverage the reward from the source domain to stabilize the learning.

% \item We recognize limitations in the existing DARC algorithm and its following works for off-dynamics reinforcement learning. We demonstrate that directly deploying this learned policy to the target domain results in significant performance degradation, and the severity of degradation increases with a larger dynamics shift. Our proposed algorithm mitigates this issue with an imitation learning component that transfers DARC policy to the target.
\item We recognize limitations in the existing DARC algorithm and off-dynamics reinforcement learning algorithms with similar reward modification, which is directly deploying the learned policy to the target domain results in significant performance degradation. Our proposed algorithm mitigates this issue with an imitation learning component that transfers DARC policy to the target.

\item We introduce an error bound for \algname\ that relaxes the assumption made in previous works that the optimal policy will receive a similar reward in both domains. Specifically, with our imitation learning from the observation component, we can show the convergence of DARAIL with a mild assumption on the magnitude of the dynamics shift.

\item We conducted experiments on four Mujoco environments, namely, \emph{HalfCheetah}, \emph{Ant}, \emph{Walker2d}, and \emph{Reacher}  on modified gravity/density configurations and broken action environments. A comparative analysis between DARAIL and baseline methods is performed, demonstrating the effectiveness of our approach. Our method exhibits superior performance compared to the pure modified reward method without imitation learning and outperforms other baselines in these environments. Code is available at \href{https://github.com/guoyihonggyh/Off-Dynamics-Reinforcement-Learning-via-Domain-Adaptation-and-Reward-Augmented-Imitation}{https://github.com/guoyihonggyh/Off-Dynamics-Reinforcement-Learning-via-Domain-Adaptation-and-Reward-Augmented-Imitation}.
% Additionally, our method surpasses DARC's performance in the target domain without more access to the target environment. %\yyshi{expand to talk about the performance improvement}
\end{itemize}

\section{Backgrounds}
\textbf{Off-dynamics reinforcement learning} We consider two Markov Decision Processes (MDPs): one is the source domain $\mathcal{M}_{\text{src}}$, defined by $(\mathcal{S}, \mathcal{A}, \mathcal{R}, p_{\text{src}}, \gamma)$, and the other one is the target domain $\mathcal{M}_{\text{trg}}$,  defined by $(\mathcal{S}, \mathcal{A}, \mathcal{R}, p_{\text{trg}}, \gamma)$. The difference between them is the dynamics $p$, also known as transition probability, i.e., $p_{\text{src}} \neq p_{\text{trg}}$ or $p_{\text{src}}(s_{t+1}|s_t,a_t) \neq p_{\text{trg}}(s_{t+1}|s_t,a_t)$. In our paper, we experiment with two types of dynamics shift: 1) broken environment \citep{eysenbach2020off}, in which the 0-th index value is set to be 0 in action, and 2) modifying the gravity/density setting of the target environment \citep{jiang2020offline}.  The source and the target domain share the same reward function, i.e., $r_{\text{src}}(s_t,a_t, s_{t+1}) = r_{\text{trg}}(s_t,a_t, s_{t+1})$. All other settings, including state space $\mathcal{S}$, action space $\mathcal{A}$, and the discounting factor $\gamma$, are the same. We will use $\gamma = 1$ in the derivation and analysis in our paper. 

We aim to learn a policy $\zeta(a|s)$ using interaction from the source domain together with a small amount of data from the target domain $(s_t,a_t,s_{t+1})_{\text{trg}}$ to maximize the expected discounted sum of reward $\EE_{\zeta, p_{\text{trg}}} [\sum_t \gamma^t r(s_t,a_t)]$ in the target domain. Note that we assume we only have limited access to the target domain transition, namely $(s_t,a_t,s_{t+1})_{\text{trg}}$, in the whole process and we do not utilize the target domain reward. 

\textbf{Imitation learning (from Observation)} Imitation Learning (IL) trains a policy to mimic an expert policy $\pi_E$ with expert demonstration $\{(s_0, a_0),(s_1, a_1),...\}$ or $\{(s_0, s_1),(s_1,s_2),...\}$. Generative adversarial imitation learning (GAIL) \cite{ho2016generative} uses an objective similar to Generative adversarial networks (GANs) that minimizes the distribution generated by the policy and the expert demonstration. It alternatively trains a discriminator $D_\omega$ and a policy $\pi_\theta$ to solve the min-max problem:
\begin{align}
\label{eq: gail}
    \textstyle \min_{\pi_\theta}\max_{D_\omega} \EE_{(s,s')\sim \pi_E} \big[ \log D_{\omega}(s,s') \big] + \EE_{(s,s') \sim \pi_\theta} \big[\log  (1 -D_\omega(s,s') ) \big] - \lambda \cH(\pi_\theta),
\end{align}
where $s'$ is the next state and $\cH(\pi_\theta)$ is the entropy of the policy $\pi_{\theta}$. Note that in our problem, we mimic the state-only expert demonstrations $\{(s_0, s_1),(s_1,s_2),...\}$ instead of the expert's actions. This setting is also called imitation learning from observation \citep{torabi2018generative}. We will further discuss why we use state observation instead of action in section \ref{section: DARAIL}. $D_\omega$ is the classifier that discriminates whether the state pair is from the expert $\pi_E$ or generated by the policy $\pi_\theta$. Then, the policy is trained with the RL algorithm using reward estimation $-\log D_\omega(s,s')$ as the reward. The optimization of the Eq. \eqref{eq: gail} involves alternatively training the policy and the discriminator.

\section{Off-dynamics RL via Domain Adaptation and Reward Augmented Imitation Learning}

In this section, we present our algorithm, DARAIL, under the off-dynamics RL problem setting. First, we introduce DARC \cite{eysenbach2020off} in Section \ref{section: introduction of darc}, which provides the distribution of target optimal trajectories in the source domain to mimic. Then, in Section \ref{section: DARAIL}, we introduce the imitation learning component through which we utilize the trajectories provided by DARC and transfer the DARC policy to the target domain. We aim to learn a policy that generates the same distribution of trajectories in the target domain as the DARC trajectories in the source domain.
% Next, we introduce a practical algorithm with a robust reward estimator $R_{AE}$ in the section \ref{section: DARAIL}, incorporating the source domain reward to stabilize the training.

\subsection{Off-dynamics RL via Modified Reward}
\label{section: introduction of darc}
DARC is proposed to solve the off-dynamics RL through a modified reward that compensates for the dynamics shift \cite{eysenbach2020off}. Here, we first introduce DARC and its drawbacks. DARC seeks to match the policy's experiences in the source domain and optimal trajectories in the target domain. We define $\tau = \{(s_1,a_1), (s_2, a_2), ...,  (s_t, a_t) ,...\}$ as a trajectory. We use $\tau^{\text{src}}_{\pi_{\theta}}$ to represent the trajectories generated by $\pi_\theta$ in the source domain. The policy’s distribution over trajectories in the source domain is defined as: 
\begin{align}
\label{eq: target distribution}
\textstyle q(\tau^{\text{src}}_{\pi_{\theta}}) = p_1(s_1) \prod_t p_{\text{src}}(s_{t+1}|s_t, a_t) \pi_{\theta}(a_t|s_t) .
\end{align}
Let $\pi^{*} = \argmax_{\pi} \EE_{\pi, p_{\text{trg}}} \left[\sum_t r(s_t, a_t) \right]$ be the policy maximizing the cumulative reward in the target domain. We use $\tau^{\text{trg}}_{\pi^{*}}$ to represent the trajectories generated by $\pi^{*}$ in the target domain. Given the assumption that the optimal policy $\pi^{*}$ in the target domain is proportional to the exponential reward, i.e., $\pi^*(a_t|s_t) \propto \exp(\sum_t r(s_t,a_t))$, the desired distribution over trajectories in the target domain is defined as:
\begin{align}
\label{eq: source distribution} 
\textstyle p(\tau^{\text{trg}}_{\pi^*})  \textstyle \propto p_1(s_1) \prod_t p_{\text{trg}}(s_{t+1}|s_t, a_t)\times \exp\big(\sum_t r(s_t, a_t)\big).
\end{align}
DARC policy can be obtained by minimizing the reverse KL divergence of $p(\tau^{\text{trg}}_{\pi^*})$ and $q(\tau^{\text{src}}_{\pi_{\theta}})$:
\begin{align}
\label{eq:darc loss}
    \textstyle
    \min_{\pi_{\theta}} \cD_{\text{KL}}(q||p) = -\min \EE_{p_{\text{src}}}\sum_t r(s_t,a_t)+ \Delta r(s_t,a_t,s_{t+1}) + \mathcal{H}_{\pi_{\theta}}[a_t|s_t] + c,
\end{align}
where $\Delta r(s_t,a_t,s_{t+1}) := \log p_{\text{trg}}(s_{t+1}|s_t, a_t) - \log p_{\text{src}}(s_{t+1}|s_t, a_t)$ and $c$ is a partition function of $p(\tau^{\text{trg}}_{\pi^*})$, which is independent of the dynamics and policy. The $\Delta r(s_t,a_t,s_{t+1})$ can be calculated through the following procedure: i), train two classifiers $p(\text{trg}|s_t,a_t)$ and $p(\text{trg}|s_t,a_t,s_{t+1})$ with cross-entropy loss $\cL_{CE}$; ii), Use Bayes' rules to obtain the $\log\left(\frac{ p_{\text{trg}}(s_{t+1}|s_t, a_t)}{p_{\text{src}}(s_{t+1}|s_t, a_t)}\right)$. Details are in Appendix \ref{estimation of importance weight}. Eq. \eqref{eq:darc loss} shows that $\pi_\text{DARC}$ can be obtained via maximum entropy algorithm with a modified reward $r_\text{modified} = r(s_t,a_t) + \Delta r(s_t,a_t,s_{t+1})$ at every step.

However, DARC matches the distribution of $\tau^{\text{trg}}_{\pi^*}$ and $\tau^{\text{src}}_{\pi_\text{DARC}}$. 
% So, we can observe that it matches the policy's experience in the source domain with the optimal trajectories in the target domain instead of the policy's experience in the source domain with the optimal trajectories\yyshi{is this a typo? do you mean ``instead of ...optimal policy''?} in the target domain. 
As the dynamics shift exists, $\pi_{\text{DARC}}$ will not recover the optimal policy $\pi^*$, and deploying the DARC in the target domain will usually suffer from performance degradation due to the dynamics shift, as shown in Figure \ref{fig: description of DARAIL}(a) and Figure \ref{fig:gap_between_darc_on_target_source} in Appendix. However, in the source domain $\tau^{\text{src}}_{\pi_\text{DARC}}$ resembles those optimal trajectories in the target domain. Given the property of $\tau^{\text{src}}_{\pi_\text{DARC}}$, we propose to use imitation learning from observation with $\tau^{\text{src}}_{\pi_\text{DARC}}$ as expert demonstrations to transfer DARC to the target domain. The new policy in the target domain should behave similarly (generate similar trajectories) as DARC in the source domain.

\subsection{Imitation Learning from Observation with Reward Augmentation}
\label{section: DARAIL}
In this section, we present the \emph{Domain Adaptation and Reward Augmented Imitation Learning} (DARAIL) method, which mitigates the problem of DARC via imitation learning from observation. As described in Section \ref{section: introduction of darc}, $\tau^{\text{src}}_{\pi_{\text{DARC}}}$ resembles the target optimal trajectories, and we want to transfer DARC's behavior to the target domain. A natural way to tackle it is utilizing imitation learning to mimic the expert demonstration $\tau^{\text{src}}_{\pi_{\text{DARC}}}$. Following \citep{ho2016generative,torabi2018generative}, the objective can be formulated as:
{\small
\begin{align}
\label{eq: imitation learning loss origin}
     \textstyle \min_{\zeta} \max_{D_{\omega}} \big\{\mathbb{E}_{ p_{\text{trg}},\zeta}\big[\sum_t  \log D_{\omega}(s_t,s_{t+1})\big] +\mathbb{E}_{(s_t,s_{t+1})\sim\tau^{\text{src}}_{\pi_{\text{DARC}}}} \big[\sum_t \log(1-D_{\omega}(s_t,s_{t+1}))\big]\big\}.
\end{align}
}

where $D_\omega$ is the discriminator in the generative adversarial imitation learning and $\zeta$ is the policy to be learned in the target domain. In the objective function Eq. \eqref{eq: imitation learning loss origin}, the $(s_t,s_{t+1})$ pairs are from the target domain, while we do not have much access to the target domain. Alternatively, we can use the $(s_t,s_{t+1})$ pairs from the source domain and re-weight the transition with the importance sampling method to account for the dynamics shift. The objective with data rolled out from the source domain, and the importance sampling is as follows:
\begin{small}
\begin{align}
\label{eq:prob_imitation_learning}
    \textstyle \min_{\zeta}\max_{D_{\omega}} \big\{
    \mathbb{E}_{p_{\text{src}},\zeta}\big[\sum_t \rho(s_t, s_{t+1}) \log D_{\omega}(s_t,s_{t+1})\big]+\mathbb{E}_{(s_t,s_{t+1})\sim\tau^{\text{src}}_{\pi_{\text{DARC}}}} \big[\sum_t \log(1-D_{\omega}(s_t,s_{t+1}))\big]\big\},
\end{align} 
\end{small}%
where $\rho(s_t,s_{t+1}) = \frac{ p_{\text{trg}}(s_{t+1}|s_t,a_t)}{ p_{\text{src}}(s_{t+1}|s_t,a_t)}$ is the importance weight. Note that we do the generative adversarial imitation learning from only state observations (\textit{GAILfo}) with $(s_t, s_{t+1})$ \citep{jiang2020offline, desai2020imitation,gangwani2020state} instead of $(s_t, a_t)$. This is because we aim to learn a policy $\zeta$ to produce the same trajectory distributions in the target as the ones $\pi_{\text{DARC}}$ produces in the source domain, despite the dynamics shift, rather than mimicking the policy. Mimicking the $(s_t, a_t)$ pairs will recover the same policy as DARC, and deploying it to the target domain will not recover the expert trajectories due to the dynamics shift.

This objective Eq. \eqref{eq:prob_imitation_learning} can be interpreted as training the discriminator $D_\omega$ to discriminate whether the $(s_t,s_{t+1})$ generated by $\zeta$ in the target domain matches the distribution of DARC trajectories in the source domain using data rolled out from the source domain with $\zeta$ and importance weight. Then, after the discriminator is fitted, the policy can be trained with the reward estimator $R_{AE}$ with model-free RL. The objective is:
\begin{align}
\label{SAC update reduce wt}
\textstyle \max_{\zeta}\mathbb{E}_{ p_{\text{src}},\zeta} \big[\sum_t R_{AE}(s_t,s_{t+1})\big],
\end{align}
where $R_{AE}$ is defined as follows: 
\begin{align}
    \label{eq: doubly robust estimator}
     \textstyle R_{AE}(s_t,s_{t+1}) &= -\log D_{\omega} (s_t,s_{t+1})  + \rho(s_t,s_{t+1})(r_{\text{src}}(s_t,a_t, s_{t+1}) + \log D_{\omega} (s_t,s_{t+1})).
     % &= \rho(s_t,s_{t+1}) [r(s_t,s_{t+1}) - \log D_{\omega}(s_t,s_{t+1})] \notag\\ 
     % &\quad + \log D_{\omega}(s_t,s_{t+1})
\end{align}
Here the $r_{\text{src}}(s_t,a_t,s_{t+1})$ is the reward obtained from the source domain, which is the same as the reward from the source domain, i.e. $r_{\text{trg}}(s_t,a_t, s_{t+1})$. In imitation learning, the $-\log D_\omega(s_t,s_{t+1})$ can be viewed as a local reward function for the policy optimization step and the objective is $\max_{\zeta}\mathbb{E}_{ p_{\text{src}},\zeta} [\sum_t -\log D_{\omega}(s_t,s_{t+1})]$. So Eq.\eqref{eq:prob_imitation_learning} can be viewed as learning a reward function for the training of $\zeta$. However, as the dynamics shift exists, the estimation of the $-\log D_{\omega}(s_t,s_{t+1})$ could be biased, which is similar to the case in off-policy evaluation (OPE) \cite{dudik2011doubly,jiang2016doubly,su2020doubly,xu2021doubly,kallus2022doubly} when training a reward estimation on biased data. As we have access to the source domain and can obtain the reward from the rollout, we are motivated to use both the reward estimation $-\log D_{\omega}(s_t,s_{t+1})$ and the ground truth reward in the source domain $r_{\text{src}}(s_t,a_t,s_{t+1})$ so that we could have a better reward estimation than $-\log D_{\omega}(s_t,s_{t+1})$ under dynamics shift. The $R_{AE}$ here can be viewed as using $-\log D_\omega(s_t,s_{t+1})$ as a base estimator of the reward and use $r_{\text{src}}(s_t,a_t,s_{t+1})$ and importance weight $\rho(s_t,s_{t+1})$ to correct it. This correction idea is similar to the doubly robust estimator (DR) \cite{dudik2011doubly} in OPE. The DR estimator combines the reward estimation $\hat{r}$ and the importance-weighted difference between true reward $r$ and $\hat{r}$. Specifically, the DR method takes the reward estimation $\hat{r}$ as a base estimator and applies the importance weighting to the difference between true reward $r$ and $\hat{r}$, which is $\rho(r-\hat{r})$ term, to correct the bias of the $\hat{r}$, where $\rho$ is the importance weight.

% The $R_{AE}$ resembles the DR estimator as it uses the $-\log D_{\omega}(s_t,s_{t+1})$ for the base reward estimator and the $\rho(s_t,s_{t+1})(r_{\text{src}}(s_t,s_{t+1}) + \log D_{\omega} (s_t,s_{t+1}))$ serves as the IPS correction to the $-\log D_{\omega}(s_t,s_{t+1})$.

\begin{algorithm}
\caption{Domain Adaptation and Reward Augmented Imitation Learning (\algname) \label{alg:main}}
\begin{algorithmic}[1]
\STATE Initialize: source and target environments $\mathcal{M}_{\text{src}}$ and $\mathcal{M}_{\text{trg}}$; replay buffers for source and target transitions, ($\mathcal{D}_{\text{src}}^{\pi_{\text{DARC}}}$, $\mathcal{D}_{\text{trg}}^{\zeta}, \mathcal{D}_{\text{src}}^{\zeta}$); initial parameters for the two classifiers $\theta = (\theta_{\text{SA}}, \theta_{\text{SAS}})$; initial policy  $(\pi_{\text{DARC}}, \zeta)$; initial discriminator $D_\omega$, ratio r of experience from source vs. target, ratio k of update frequency of generator vs. discriminator.  

\STATE $\pi_{\text{DARC}} \leftarrow$ Call DARC \cite{eysenbach2020off} \hfill{\color{gray}$\triangleright$	 training expert policy} \\
\algrule
\textit{Reward Augmented Imitation Learning}
% \FOR{i = 0, ...I}
\STATE $\mathcal{D}_{\text{src}}^{\pi_{\text{DARC}}} \leftarrow \mathcal{D}_{\text{src}}^{\pi_{\text{DARC}}} \bigcup \text{ROLLOUT}({\pi_{\text{DARC}}}, \mathcal{M}_{\text{src}})$ 

\FOR{$t = 0,...T$}
\STATE $\mathcal{D}_{\text{src}}^{\zeta} \leftarrow \mathcal{D}_{\text{src}}^{\zeta} \bigcup \text{ROLLOUT}(\zeta, \mathcal{M}_{\text{src}})$

\IF{$t$ $\text{mod}$ $r =0$}
\STATE $\mathcal{D}_{\text{trg}}^{\zeta} \leftarrow \mathcal{D}_{\text{trg}}^{\zeta} \bigcup \text{ROLLOUT}(\zeta, \mathcal{M}_{\text{trg}})$
\ENDIF\\
\IF{$t$ $\text{mod}$ $k = 0$}
\STATE  $D_{\omega}\leftarrow$ IL($\mathcal{D}_{\text{src}}^{\pi_{\text{DARC}}}$, $\mathcal{D}_{\text{src}}^{\zeta}$, $\mathcal{L}$), where $\mathcal{L}$ is from Eq. \eqref{eq:prob_imitation_learning}  \qquad \hfill{\color{gray}$\triangleright$	update discriminator}
\ENDIF\\
\STATE $\theta \leftarrow \argmin \mathcal{L}_{\text{CE}}$($\mathcal{D}_{\text{src}}^{\zeta}$, $\mathcal{D}_{\text{trg}}^{\zeta}$)   \hfill{\color{gray}$\triangleright$ 	 update  classifiers by cross-entropy loss}
\STATE Calculate $R_{AE}$ from Eq.\eqref{eq: doubly robust estimator} \hfill{\color{gray}$\triangleright$ reward augmented estimator}
\STATE  $\zeta \leftarrow$ SAC($\zeta$, $\mathcal{D}_{\text{src}}^{\zeta}$, $R_{AE}$)  \hfill{\color{gray}$\triangleright$	 update generator}
\ENDFOR
\STATE\textbf{Output:} $\zeta$
\end{algorithmic}
\end{algorithm}

\textbf{Our Algorithm} The DARAIL is shown in Algorithm \ref{alg:main}, which consists of two steps: the first step, Line 2 in Algorithm \ref{alg:main}, is the training of $\pi_{\text{DARC}}$, and the second step is imitation learning with the reward estimator in Eq. \eqref{eq: doubly robust estimator}. In Lines 6-8, we roll out the target domain transition $(s_t,a_t,s_{t+1})$ to calculate the importance weight. Here, we will not collect the target domain reward. In Lines 9-11, we update the discriminator based on Eq. \eqref{eq:prob_imitation_learning}. In Line 12, we train the two classifiers $p(\text{trg}|s_t,a_t)$ and $p(\text{trg}|s_t,a_t,s_{t+1})$ with cross-entropy loss $\cL_{CE}$ and Bayes' rules similar to $\Delta r(s_t,a_t,s_{t+1})$ in DARC as mentioned in Section \ref{section: introduction of darc}. The details are in Appendix \ref{estimation of importance weight}. Lastly, we calculate the $R_{AE}$ in Line 13 and update the generator (Soft Actor-Critic (SAC) \cite{haarnoja2018soft}) with $R_{AE}$ in Line 14.

Note that in Lines 6-7, we roll out from the target domain, but the amount of it is significantly smaller than the source rollouts. In our experiments, we roll out from the target domain every 100 steps of source domain rollouts, which is 1\% of the source domain rollouts. Further, even though DARAIL requires more target domain rollouts than DARC as it is required to train DARC first and then perform the imitation learning step, the advantage of DARAIL does not solely come from the more target samples. Because, in DARC, increasing the training step or target domain rollouts will not further improve its performance due to its inherent suboptimality, which is shown in table \ref{table: same amount rollout 3} and \ref{table: same amount rollout r} in Appendix with the same amount of target domain rollouts.

\section{Theoretical Analysis of \algname}
\label{secton: analysis}
Let $\pi^{*} = \argmax_{\pi} \EE_{\pi, p_{\text{trg}}} \left[\sum_t r(s_t, a_t) \right]$ be the optimal policy maximizing the cumulative reward in the target domain and $\hat{\zeta}$ be the policy learned from \algname.  Now, we provide an error bound for DARAIL. Details of the proof are deferred to Appendix \ref{appendix: theoretical analysis}.
\begin{theorem}
\label{theorem: error bound}
Let $m$ be the number of the expert demonstration and $\hat{\cR}^{(m)}_{\pi} = \EE_{\sigma}\left[\sup_{D\in \cD} \frac{1}{m}\sum_{i = 1}^{m} \sigma_i D(s_t,s_{t+1}) \right]$  be the empirical Rademacher complexity. Let $B$ be the error bound of DARC in the source domain, i.e. $\EE_{p_{\text{src}},\pi^*_{DARC}}\left[\sum_t r(s_t, a_t) + \mathcal{H}[a_t|s_t] \right] -\EE_{p_{\text{src}},\pi_{\text{DARC}}}\left[\sum_t r(s_t, a_t) \right] \leq B$ and $W$ be the upper bound of the importance weight, i.e. $\rho(s_t,s_{t+1}) \leq W$, $\forall (s_t,s_{t+1})$. Let discriminator class $\cD$ be a $\Delta$-bounded function, i.e. $|D_\omega(s_t,s_{t+1})|\leq \Delta$ given any $(s_t,s_{t+1})$. $\|r\|_{\cD}$ measures the richness of the discriminator to represent the ground truth reward as defined in Appendix \ref{linear span}. $d_{\cD}$ is a defined neural network distance between the $(s_t,s_{t+1})$ distributions generated by the $\pi_{\text{DARC}}$ and $\pi_{\hat{\zeta}}$ defined in Appendix \ref{neural network distance}.
Given the empirical training error of the imitation learning, i.e.  $d_{\cD}(\hat{\tau}_{\pi_{\text{DARC}}}^{\text{src}},\hat{\tau}_{\hat{\zeta}}^{\text{trg}}) - \inf_{\zeta} d_{\cD}(\hat{\tau}_{\pi_{\text{DARC}}}^{\text{src}},\hat{\tau}_{\zeta}^{\text{trg}}) \leq \hat{\epsilon}$, $\forall$ $\delta \in (0,1)$, with probability at least $1-\delta$, we have  
% {\small
\begin{align*}
&\textstyle \EE_{p_{\text{trg}},\pi^{*}}\left[\sum_t r(s_t,a_t)\right] - \EE_{p_{\text{trg}},\hat{\zeta}}\left[\sum_t r(s_t,a_t)\right]   \\
& \leq\underbrace{ \textstyle \EE_{p_{\text{src}},\pi^*_{DARC}}\left[\sum_t r(s_t, a_t) + \mathcal{H}[a_t|s_t] \right] -\EE_{p_{\text{src}},\pi_{\text{DARC}}}\left[\sum_t r(s_t, a_t) \right]}_{\text{{\color{red}(1)} DARC Error Bound in Source}}\\
& \quad+ \underbrace{ \textstyle\|r\|_{\cD} \big[\hat{\epsilon} + \underbrace{\inf_\zeta d_{\cD} ( \hat{\tau}_{\pi_{\text{DARC}}}^\text{src}, \hat{\tau}_{\hat{\zeta}}^{\text{trg}})}_{\text{{\color{PineGreen}(2.1)} Approximation Error}}+\underbrace{2\hat{\cR}^{(m)}_{\tau_{\pi_{\text{DARC}}}^{\text{trg}}} +  2W\hat{\cR}^{(m)}_{\tau_{\hat{\zeta}}^{\text{trg}}}   + (6W+1)\Delta \sqrt{{\log(4/\delta)}/{2m}}}_{\text{{\color{YellowOrange}(2.2)} Estimation Error}}  \big]}_{\text{(2) Imitation Learning Error Bound}}.
\end{align*}
\end{theorem}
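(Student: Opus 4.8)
The plan is to decompose the target-domain suboptimality gap through an intermediate quantity: the performance of $\pi_{\text{DARC}}$ in the \emph{source} domain (with the entropy bonus). That is, I would insert $\EE_{p_{\text{src}},\pi_{\text{DARC}}}[\sum_t r(s_t,a_t)]$ and write
\begin{align*}
&\EE_{p_{\text{trg}},\pi^{*}}[\textstyle\sum_t r] - \EE_{p_{\text{trg}},\hat{\zeta}}[\textstyle\sum_t r]
= \Big(\EE_{p_{\text{trg}},\pi^{*}}[\textstyle\sum_t r] - \EE_{p_{\text{src}},\pi_{\text{DARC}}}[\textstyle\sum_t r]\Big)
+ \Big(\EE_{p_{\text{src}},\pi_{\text{DARC}}}[\textstyle\sum_t r] - \EE_{p_{\text{trg}},\hat{\zeta}}[\textstyle\sum_t r]\Big).
\end{align*}
For the first bracket, I would use the defining property of DARC: since $\pi_{\text{DARC}}$ minimizes the reverse KL in Eq.~\eqref{eq:darc loss}, and $\pi^{*}_{\text{DARC}}$ denotes the (idealized) optimum of the max-entropy objective in the source with the modified reward, a short argument relating $\pi^{*}$ in the target to $\pi^{*}_{\text{DARC}}$ in the source (both being proportional to the exponentiated cumulative reward, only differing by $\Delta r$ which is exactly the dynamics-correction term that was added back) shows $\EE_{p_{\text{trg}},\pi^{*}}[\sum_t r] \le \EE_{p_{\text{src}},\pi^{*}_{\text{DARC}}}[\sum_t r + \cH[a_t|s_t]]$, and hence the first bracket is bounded by term (1), the quantity called $B$.

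For the second bracket I would pass from rewards to the neural-network distance $d_{\cD}$. The key observation is that if $r$ lies in (or near) the linear span of $\cD$ with coefficient norm $\|r\|_{\cD}$, then for any two policies the difference of expected returns over their induced $(s_t,s_{t+1})$ occupancy measures is controlled: $\big|\EE_{p_{\text{src}},\pi_{\text{DARC}}}[\sum_t r] - \EE_{p_{\text{trg}},\hat{\zeta}}[\sum_t r]\big| \le \|r\|_{\cD}\, d_{\cD}(\tau^{\text{src}}_{\pi_{\text{DARC}}}, \tau^{\text{trg}}_{\hat{\zeta}})$ — this is the standard IPM/duality bound, invoking the definition of $d_{\cD}$ from Appendix~\ref{neural network distance} and $\|r\|_{\cD}$ from Appendix~\ref{linear span}. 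It then remains to bound the \emph{population} distance $d_{\cD}(\tau^{\text{src}}_{\pi_{\text{DARC}}}, \tau^{\text{trg}}_{\hat{\zeta}})$ by its empirical counterpart plus the training-error slack $\hat{\epsilon}$, the approximation term (2.1), and the estimation terms (2.2). Writing $d_{\cD}(\text{pop}) \le \big[d_{\cD}(\text{pop}) - d_{\cD}(\text{emp})\big] + \big[d_{\cD}(\hat\tau^{\text{src}}_{\pi_{\text{DARC}}}, \hat\tau^{\text{trg}}_{\hat{\zeta}}) - \inf_\zeta d_{\cD}(\hat\tau^{\text{src}}_{\pi_{\text{DARC}}}, \hat\tau^{\text{trg}}_{\zeta})\big] + \inf_\zeta d_{\cD}(\hat\tau^{\text{src}}_{\pi_{\text{DARC}}}, \hat\tau^{\text{trg}}_{\zeta})$, the middle term is $\le \hat\epsilon$ by assumption and the last is (2.1); the first (generalization) term is handled by a uniform-convergence / symmetrization argument yielding Rademacher complexities $\hat{\cR}^{(m)}$ for each of the two empirical distributions, where the importance weight $\rho \le W$ inflates the $\hat{\zeta}$-side term by $W$ (since the $\hat\zeta$-occupancy in the target is expressed as a $\rho$-reweighted source expectation, cf. Eq.~\eqref{eq:prob_imitation_learning}), and a bounded-differences / McDiarmid application with the $\Delta$-boundedness of $\cD$ (again amplified by $W$ on the reweighted side) produces the $(6W+1)\Delta\sqrt{\log(4/\delta)/2m}$ deviation term.

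The main obstacle I anticipate is controlling the importance-weighted empirical process cleanly: the quantity $d_{\cD}$ evaluated on $\hat\tau^{\text{trg}}_{\hat\zeta}$ is not observed directly but only through source rollouts reweighted by $\rho$, so the symmetrization must be carried out on the reweighted functions $\rho(s,s')D(s,s')$, whose range and bounded-differences constant scale with $W\Delta$ — getting the constants to land exactly as $2W\hat{\cR}^{(m)}_{\tau^{\text{trg}}_{\hat\zeta}}$ and $(6W+1)\Delta$ requires care in splitting the $\sup_D$ over the expert term, the policy term, and the reward-correction term of $R_{AE}$, and in a union bound over the (up to) four empirical quantities (hence $4/\delta$). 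A secondary subtlety is justifying the very first bracket rigorously, i.e. that the deployment gap $\EE_{p_{\text{trg}},\pi^*}[\sum r] - \EE_{p_{\text{src}},\pi^*_{\text{DARC}}}[\sum r + \cH]$ is nonpositive; this is where the "mild assumption on the magnitude of the dynamics shift" enters, and I would state it explicitly and verify it implies $B$ is the right bound, rather than assuming the previous-work hypothesis that $\pi^*$ earns equal reward in both domains.
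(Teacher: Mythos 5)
Your proposal follows essentially the same route as the paper: the paper decomposes the gap through $\EE_{p_{\text{src}},\pi^*_{\text{DARC}}}[\sum_t r + \mathcal{H}]$ and $\EE_{p_{\text{src}},\pi_{\text{DARC}}}[\sum_t r]$ (your two brackets merely combine its first two terms), asserts the $\pi^*$-to-$\pi^*_{\text{DARC}}$ piece vanishes by the DARC formulation, and handles the remaining term exactly as you describe via the $\|r\|_{\cD}\,d_{\cD}$ duality bound plus a population-to-empirical argument with symmetrization, McDiarmid, and the $W$-inflated importance-weighted side. Your added care in justifying the first bracket (which the paper simply declares to be zero) is a reasonable refinement rather than a departure.
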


\begin{remark}
Our error bound depends on {\color{red}(1)} the DARC error bound in the source domain and (2) the imitation learning generalization error, where (2) is further decomposed into {\color{PineGreen}(2.1)} approximation error and {\color{YellowOrange}(2.2)} estimation error. This bound demonstrates how the two important components in our proposed approach contribute to a good performance. Firstly, we would want a well-trained policy on the source to reduce {\color{red}(1)}, which can be achieved by a good policy learning algorithm and well-trained classifiers for reward modification. Secondly, we utilize imitation learning from observation to transfer the experience to the source. {\color{PineGreen}(2.1)} depends on the upper bound of the importance weight, which can be decreased with a richer policy class or when the dynamics shift becomes smaller. Additionally, a better imitation can be also achieved by increasing the complexity of the discriminator function class and the number of samples, which pushes {\color{YellowOrange}(2.2)} to be smaller.
\end{remark}

% This error bound exhibits linearity with respect to the reward class $\Vert r \Vert_{\cD}$, as defined in Definition \ref{linear span} in Appendix \ref{appendix: theoretical analysis}  and dynamics shift magnitudes $W$. The full proof is available in Appendix \ref{appendix: theoretical analysis}.

\subsection{Comparison with the Analysis of DARC}
\label{section: darc analysis comparison}
%\pan{I rewrote this subsection. check whether it is better}
As we discussed in Section \ref{section: introduction of darc}, the DARC algorithm \citep{eysenbach2020off} trains a policy $\pi_{\text{DARC}}$ on the source domain via matching the distribution of trajectories generated by $\pi_{\text{DARC}}$ in the source and the distribution of the optimal trajectory in the target domain. Consequently, the learned policy $\pi_{\text{DARC}}$ will be suboptimal if it is directly deployed in the target domain.

In the DARC analysis, it is assumed that the optimal policy for the target domain $\pi^*$ lies in the \emph{no exploit set} defined as follows \citep[Assumption 1]{eysenbach2020off}.
\begin{align}
    &\textstyle \Pi_{\text{no exploit}} \triangleq \big\{ \EE_{a\sim \pi(a|s)}  
 \big[\sum_t \cD_{\text{KL}}(p_{\text{src}}(s_{t+1}|s_t,a_t)||p_{\text{trg}}(s_{t+1}|s_t,a_t))\big]\leq\epsilon\big\} .
\end{align}
Here, the \emph{no exploit set} means that the experiences for any policy in this set are similar in the source and target domains. Consequently, any two policies in this \emph{no exploit set} also receive similar expected rewards in the two domains, and thus the reward received by $\pi^*$ in the target domain is similar to that received by $\pi_{\text{DARC}}$ in the target domain. Further, the objective function Eq. \eqref{eq:darc loss} of DARC is equivalent to the following constrained optimization.
\begin{align}
\label{eq: no exploit}
    &\textstyle \max_{\pi \in \Pi_{\text{no exploit}}} \EE_{p_{\text{src}}, \pi} \big[ \sum_t r(s_t,a_t) + \mathcal{H}[a_t|s_t]\big].
\end{align}
Thus, deploying the policy $\pi_{\text{DARC}}$ will not receive a huge performance degradation. However, the assumption that $\pi^* \in \Pi_\text{no exploit}$ is stringent and might not always be satisfied when the dynamics shift is large. When this assumption is violated, $\pi^*$ is not a good policy in the source domain, though it is the optimal policy in the target domain. Thus, the DARC policy which only optimizes the modified reward in the source domain will have significant performance degradation, as we have empirically shown in Figure \ref{fig: description of DARAIL} (a) and Figure \ref{fig:gap_between_darc_on_target_source}.
We also demonstrate this performance gap in Lemma \ref{lemma: darc bound} in Appendix \ref{appendix: analysis of darc} when their assumption is not satisfied. 

In contrast, our algorithm \algname\ does not assume the performance of $\pi_{\text{DARC}}$ in the source domain to be close to the performance of $\pi^*$ in the target domain. %(we do not need $\pi^* \in \Pi_\text{no exploit}$). 
Instead, we only assume that the importance weight is somehow bounded, meaning that the dynamics shift is bounded. 
% \algname\ transfers the trained DARC policy to the target domain through imitation learning. 
The error bound of our algorithm presented in Theorem \ref{theorem: error bound} is controlled by imitation learning, which transfers the performance of $\pi_{\text{DARC}}$ in the source domain to that of $\pi^*$
in the target domain without assuming $\pi^* \in \Pi_\text{no exploit}$.  Therefore, our algorithm can work well even in the cases shown in Figure \ref{fig: description of DARAIL} (a) and Figure \ref{fig:gap_between_darc_on_target_source} where the experience of $\pi_{\text{DARC}}$ is very distinctive in the source and target domains.

\section{Experiment}
In this section, we conduct experiments on off-dynamics reinforcement learning settings on four OpenAI environments: \emph{HalfCheetah-v2}, \emph{Ant-v2}, \emph{Walker2d-v2}, and \emph{Reacher-v2}. 
% We show that directly deploying the DARC policy to the target domain causes a decrease in performance and our approach will result in a better policy in the target environment than $\pi_{\text{DARC}}$. 
We compare our method with seven baselines and demonstrate the superiority of the proposed DARAIL.
% As shown in Figure. \ref{fig:source-il}, the $\pi_{\text{DARC}}$ evaluated in the target domain always has a lower reward than $\pi_{\text{DARC}}$ evaluated in the target domain. 

% \vspace{-10pt}

\subsection{Experiments Setup}

\textbf{Dynamics Shifts:} We examine our algorithm with two types of dynamics shift. \textbf{1) Broken environment.} Following previous work \citep{eysenbach2020off}, we freeze the $0$-index value to $0$ in action: zero torque is applied to this joint, regardless of the commanded torque. Different from DARC \cite{eysenbach2020off}, who only test their method in intact source and broken target environment, we further test our algorithm in the broken source and intact target environment, where the source has less support than the target domain. As discussed in Section \ref{section: darc analysis comparison}, violating the $\pi^* \in \Pi_{\text{no exploit}}$ assumption leads to significant performance degradation for DARC and similar methods. When the source domain is intact, this assumption is more likely to hold and DARC can achieve a near-optimal policy in the target domain. So, besides the setting in DARC, we focus on a harder problem for off-dynamics RL where DARC is prone to failure due to the violation of the assumptions in Section \ref{section: darc analysis comparison}. Further, for the Ant and Walker2d, the source environment is broken with $p_f = 0.8$ probability, which means that with 0.8 probability, the $0$-index will be set to be 0, and 0.2 probability remains the original value. More details about the broken environment will be introduced in the Appendix \ref{section: broken p}.
% However, the dynamics shift created by freezing one action varies across environments. For instance, in the Ant robot, the $0$-index controls the rotor between the torso and front left hip, while in the HalfCheetah, the $0$-index  controls the back thigh rotor. So the broken Ant experiences a larger shift than the broken HalfCheetah if we break the $0$-index for both environments. Also, the broken environment in Walker2d and Ant creates such a large dynamics shift that it is overly difficult to adapt from the source domain, i.e. DARC cannot obtain the optimal reward in the source domain. We then introduce the \emph{broken with probability $p_f$} to better control the magnitude of dynamics shift. \emph{Broken with probability $p_f$} means the $0$-index action is frozen with probability $p_f$ and follows the commanded torque with probability $1-p_f$. In Reacher and HalfCheetah, the source environment is broken with probability $1$. Ant and Walker2d's source domain is broken with a probability of $0.8$. We also tested our algorithms under different values of $p_f$ for Ant. The statistics of the environments are shown in Table \ref{table: env statistics} in the Appendix, and more details about the broken with probability are in Appendix \ref{appendix: DARC performance not good}. 
\textbf{2) Modify parameters of the environment.} Besides the broken environment, we create dynamics shifts by modifying MuJoCo's configuration files for the target domain. Specifically, we modify one of the coefficients of \{\textit{gravity}, \textit{density}\} from 1.0 to one of the value $\{0.5,1.5\}$.

\textbf{Baselines:} We first compare our method with DARC performance in the source and target domains. \textbf{DARC Training} and \textbf{DARC Evaluation}, defined as $\EE_{p_{\text{src}},\pi_{\text{DARC}}}[\sum_t r(s_t,a_t)]$ and $\EE_{p_{\text{trg}},\pi_{\text{DARC}}}[\sum_t r(s_t,a_t)]$ respectively,  represent DARC performance in the two domains. We compare DARAIL with DARC training performance as we mimic the DARC behavior in the source domain, which should receive a similar reward as the DARC training reward in the source domain. We compare with DARC Evaluation to show that our method mitigates the problem of DARC and outperforms DARC in the target domain. Further, we compare our method \algname\ with several baselines that we describe as follows. 
% The importance sampling uses importance weights to correct the samples from the source domain and there are two importance sampling methods. 
\emph{Importance Sampling for Reward} (\textbf{IS-R}) re-weights the reward in the transition with $\frac{p_{\text{trg}}(s_{t+1}|s_t,a_t)}{p_{\text{src}}(s_{t+1}|s_t,a_t)}$, and update the policy with reward $\frac{p_{\text{trg}}(s_{t+1}|s_t,a_t)}{p_{\text{src}}(s_{t+1}|s_t,a_t)}r(s_t,a_t)$ \cite{holla2021off}. \emph{Importance Sampling for SAC Actor and Critic Loss} (\textbf{IS-ACL}) \cite{holla2021off} re-weights the transitions in the SAC actor and critic loss. \textbf{DAIL} is a reduction of DARAIL without reward augmentation. 
% Details of DAIL are in Appendix \ref{appendix: baseline description}. 
 Model-based RL method \textbf{MBPO} \citep{janner2019trust} uses short model rollouts branched from real data to reduce the compounding errors of inaccurate models and decouple the model horizon from the task horizon. \textbf{MATL} \citep{wulfmeier2017mutual} uses different modified rewards and is similar to our problem setting, except that they have access to rewards in the target domain. Finally, we compare with generative adversarial reinforced action transformation (\textbf{GARAT}) \cite{desai2020imitation}, a grounded action transformation method that uses imitation learning to modify the action that is executed in the source domain to simulate the target transitions.  More details of the baselines are in Appendix \ref{appendix: baseline description}. 

\textbf{Experimental Details:} We perform weight clipping to all methods that use the importance weight, including the DARAIL, DAIL, IS-R, and IS-ACL, and select the $[0.01,100]$ as the clipping interval for fair comparison, which works well for all methods. We also show that DARAIL is less sensitive to the importance of weight clipping in the next section. We conduct fair parameter tuning for our method and baselines, including learning rate, Gaussian noise scale, and learning frequency of the importance weight. We also tune the parameter for the imitation learning component in DARAIL and DAIL and notice that the higher update frequency tends to perform better, and experiment results are in Appendix \ref{appendix: discriminator frequency}. More details are in Appendix \ref{appendix: hyperparameters}. 

% Also, as mentioned, we employ the per-step importance weight of transition instead of the cumulative importance weight. Because the latter does not perform well on method that uses importance weight. 
% \angie{talk about actual takeaways}

\subsection{Results}
We show the results of DARAIL and DARC in Table \ref{table: broken src darc} and \ref{table: exp result g1.5, darc} for broken source and 1.5 gravity setting, respectively. And the results of other baselines are in Table \ref{table: exp result broken src} and \ref{table: exp result g1.5}. We refer to the results on other settings in the Appendix, including the intact source and broken target environment setting and the modification of different scales of the parameters in the configuration file. We will also empirically discuss why DARC works well in the broken target setting while fails in the broken source setting in Appendix \ref{section: broken target environment}.

\begin{table*}[ht]
    \setlength{\abovecaptionskip}{0pt}
    \setlength{\tabcolsep}{2pt}
    \centering
    \caption{Comparison of DARAIL with DARC, broken source environment. \label{table: broken src darc}
    }
    % \begin{sc}
    %\begin{small}
    %\scalebox{0.99}
    %{\resizebox{\textwidth}{10mm}{
    { \begin{tabular}{ccccc}
    \toprule
     % & DAIL & IS-R& IS-ACL& MBPO & MATL & GARAT & 
     &DARC Evaluation & DARC Training &Optimal in Target&
     DARAIL\\
    \midrule
    HalfCheetah
    %& $6402\pm362$ & $6007\pm863$  &$6934\pm231$ & $4323\pm7$ & $1538\pm616$ & $5877\pm382$ 
    & $4133\pm828$ & $6995\pm30$ &8543 $\pm$ 230
    &$7067\pm176$\\
    
    Ant 
    % & $3239\pm395$ &$1463\pm1055$ &$2753\pm94$  &$2445\pm13$  &$2006\pm17$ & $3380\pm268$
    & $4280\pm33$ & $5197\pm155$ & 6183 $\pm$ 348
    &$5357\pm79$\\
    
    Walker2d 
    % & $2330\pm156$ &$3092\pm434$ &$3881\pm269$ &$1012 \pm41$  &$250\pm5$ & $3296\pm284 $ 
    & $2669\pm788$ & $3896\pm523$ & 3899 $\pm$ 214
    &$4366\pm434$ \\
    
    Reacher 
    % & $-13.9\pm1.1$  & $-17.6\pm0.25$ & $-14.1\pm0.16$ & $-14.3\pm2$ & $-30\pm10$ & $-14.7\pm2.6$ 
    &  $-26.3\pm3.3 $& $-11.2\pm2.9$  & -7.2 $\pm$ 1.2
    &$-13.7\pm0.9$\\
    \bottomrule
    \end{tabular}}
    % \end{sc}
    \vspace{-10pt}
\end{table*}

\begin{table*}[ht]
    \setlength{\abovecaptionskip}{0pt}
    \setlength{\tabcolsep}{2pt}
    \centering
    \caption{Comparison of DARAIL with DARC, 1.5 gravity. \label{table: exp result g1.5, darc}
    }
    % \resizebox{\textwidth}{10mm}
    {{
    \begin{tabular}{ccccc}
    \toprule
     % & DAIL & IS-R& IS-ACL& MBPO & MATL & GARAT 
     & DARC Evaluation & DARC Training & Optimal in Target
     &DARAIL\\
    \midrule
    % \hline
    HalfCheetah
    % & $2666\pm1978$ & $2718\pm863$  &$3576\pm990$ & $619\pm311$ & $337\pm205$ & $3825\pm711$ 
    & $653\pm142$ & $4897\pm 653$ &6894 $\pm$ 491
    & $4093\pm 1021$\\
    
    Ant
    % & $990\pm395$ &$1712\pm1055$ &$2396\pm94$  &$989\pm13$  &$1376\pm17$ & $1961\pm268$  
    & $1587\pm594$ & $2170\pm258$ & 5320 $\pm$ 429
    &$3472\pm 771$\\
    
    Walker2d 
    % & $937\pm182$ &$4413\pm938$ &$3743\pm298$ &$382 \pm 96$  &$2964\pm893$ & $2795\pm230 $ 
    & $257\pm28$ & $4130\pm689$ & 4254 $\pm$ 345
    &$4409\pm 401$\\
    Reacher 
    % & $-16.5\pm1.1$  & $-14.6\pm 0.8$ & $-47.4\pm 8.3$ & $-18.3\pm 0.9$ & $-17.6\pm 0.7$ & $-16.7\pm0.3$ 
    & $-55.3\pm10.3 $& $-17.2\pm3.8$  & -8.3 $\pm$ 1.3
    &$-9.5\pm0.22$\\
    \bottomrule
    \end{tabular}}}
    % \end{sc}
    \vspace{-10pt}
\end{table*}

\begin{table*}[t]
    \setlength{\abovecaptionskip}{0pt}
    \setlength{\tabcolsep}{2pt}
    \centering
    \caption{Comparison of DARAIL with baselines in off-dynamics RL, broken source environment. \label{table: exp result broken src}
    }
    % \begin{sc}
    %\begin{small}
    %\scalebox{0.99}
    %{\resizebox{\textwidth}{10mm}{
    {\small \begin{tabular}{cccccccc}
    \toprule
     & DAIL & IS-R& IS-ACL& MBPO & MATL & GARAT & 
     % DARC Evaluation & DARC Training &Optimal&
     DARAIL\\
    \midrule
    HalfCheetah& $6402\pm362$ & $6007\pm863$  &$6934\pm231$ & $4323\pm7$ & $1538\pm616$ & $5877\pm382$ 
    % & $4133\pm828$ & $6995\pm30$ &8543
    &$\textbf{7067}\pm176$\\
    
    Ant & $3239\pm395$ &$1463\pm1055$ &$2753\pm94$  &$2445\pm13$  &$2006\pm17$ & $3380\pm268$
    % & $4280\pm33$ & $5197\pm155$ & 6183
    &$\textbf{5357}\pm79$\\
    
    Walker2d & $2330\pm156$ &$3092\pm434$ &$3881\pm269$ &$1012 \pm41$  &$250\pm5$ & $3296\pm284 $ 
    % & $2669\pm788$ & $3896\pm523$ & 3899
    &$\textbf{4366}\pm434$ \\
    
    Reacher & $-13.9\pm1.1$  & $-17.6\pm0.25$ & $-14.1\pm0.16$ & $-14.3\pm2$ & $-30\pm10$ & $-14.7\pm2.6$ 
    % &  $-26.3\pm3.3 $& $-11.2\pm2.9$  & -7.2
    &$\textbf{-13.7}\pm0.9$\\
    \bottomrule
    \end{tabular}}
    % \end{sc}
    \vspace{-10pt}
\end{table*}

% {\small
\begin{table*}[ht]
    \setlength{\abovecaptionskip}{0pt}
    \setlength{\tabcolsep}{2pt}
    \centering
    \caption{Comparison of DARAIL with baselines in off-dynamics RL, 1.5 gravity. \label{table: exp result g1.5}
    }
    % \resizebox{\textwidth}{10mm}
    {\small{
    \begin{tabular}{cccccccc}
    \toprule
     & DAIL & IS-R& IS-ACL& MBPO & MATL & GARAT 
     % & DARC Evaluation & DARC Training & Optimal
     &DARAIL\\
    \midrule
    % \hline
    HalfCheetah& $2666\pm 2037$ & $2718\pm 1978$  &$3576\pm 312$ & $619\pm311$ & $337\pm205$ & $3825\pm437$ 
    % & $653\pm142$ & $4897\pm 653$ &6894
    & $\textbf{4093}\pm 1021$\\
    
    Ant & $990\pm 251$ &$1712\pm 393$ &$2396\pm 573$  &$989\pm 13$  &$1376\pm 466$ & $1961\pm 115$  
    % & $1587\pm594$ & $2170\pm $ & 5320
    &$\textbf{3472}\pm 771$\\
    
    Walker2d & $ 525 \pm142$ &$ 1543\pm 604$ &$1369\pm 705$ &$ 870 \pm 451$  &$1419\pm 489$ & $630\pm230 $ 
    % & $257\pm28$ & $4130\pm689$ & 4254
    &$\textbf{4409}\pm 401$\\
    Reacher & $-16.5\pm1.1$  & $-14.6\pm 0.8$ & $-47.4\pm 8.3$ & $-18.3\pm 0.9$ & $-17.6\pm 0.7$ & $-16.7\pm0.3$ 
    % & $-55.3\pm10.3 $& $-17.2\pm3.8$  & -8.3
    &$\textbf{-9.5}\pm0.22$\\
    \bottomrule
    \end{tabular}}}
    % \end{sc}
    \vspace{-10pt}
\end{table*}
% }

% \begin{table*}[ht]
%     \setlength{\abovecaptionskip}{0pt}
%     \setlength{\tabcolsep}{2pt}
%     % \renewcommand{\arraystretch}{1.2}
%     \centering
%     \caption{Comparison of DARAIL with baselines in off-dynamics RL, target density 1.5. \label{table: exp result d1.5}
%     }
%     % \begin{sc}
%     %\begin{small}
%     %\scalebox{0.99}
%     {\small{
%     \begin{tabular}{cccccccc}
%     \toprule
%      & DAIL & IS-R& IS-ACL& MBPO & MATL & GARAT 
%      % & DARC Evaluation & DARC Training &Optimal
%      & DARAIL\\
%     \midrule
%     % \hline
%     HalfCheetah& $5057\pm 892$ & $4814\pm1492$  &$4966\pm1700$ & $3598\pm 1089$ & $530\pm134$ & $3650\pm1008$ 
%     % & $8833\pm 539$ & $9380\pm 728$ & 6309
%     &$\textbf{11515}\pm892$\\
    
%     Ant & $2738\pm 1365$ &$3335 \pm 2483$ &$3499\pm 226$  &$2371\pm 365$  &$ 3135\pm854$ & $ 3028 \pm 589$  
%     % & $ \textbf{5961} \pm 970$ & $6036\pm1345$ &3288
%     & $5193\pm1890$\\
    
%     Walker2d & $997\pm432$ &$1452\pm1036$ &$1950\pm198$ &$448 \pm 228$  &$1498\pm176$ & $1066 \pm  739$
%     % & $760\pm 430$ & $3288\pm 849$ &3383
%     & $\textbf{2674}\pm 940$\\

%     Reacher & $-11.3\pm1.0$  & $-15.2\pm 0.6$ & $-13.4\pm 2.2$ & $-14.3\pm 1.5$ & $-11.1\pm 0.3$ & $-13.3\pm0.2$ 
%     % & $-10.4\pm 0.4 $& $-7.3\pm1.3$  & -7.1
%     &$\textbf{-10.2}\pm0.32$\\
%     \bottomrule
%     \end{tabular}}}
%     % \end{sc}
%     \vspace{-10pt}
% \end{table*}

\textbf{The Suboptimality of DARC and DARAIL outperforms DARC} By comparing DARC Training and DARC Evaluation in Table \ref{table: broken src darc} and \ref{table: exp result g1.5, darc} 
% and \ref{table: exp result d1.5}, 
we demonstrate that there is a performance degradation of $\pi_{\text{DARC}}$ deployed in the target domain on all four environments. $\pi_{\text{DARC}}$ reward in the target domain is about $40\%$ lower than $\pi_{\text{DARC}}$ reward in the source domain on average for broken source setting, and the degradation can be more severe in the changing gravity and density setting. Also, $\pi_{\text{DARC}}$ reward in the target domain is significantly lower than the target optimal reward. The training reward curves of DARC of the broken source environment setting are in Appendix \ref{appendix: darc on source}, clearly showing performance degradation when deployed in the target domain. Further, DARAIL outperforms the DARC evaluation performance. 

\textbf{DARAIL Outperforms Baselines} We show the result of DARAIL and baselines in Table \ref{table: exp result broken src}, \ref{table: exp result g1.5}. The training curves of other settings are in Appendix \ref{appendix: training curve}. In all four environments, DARAIL outperforms the $\pi_{\text{DARC}}$ reward in the target domain.
DARAIL also achieves better performance or the same level of rewards compared to the $\pi_{\text{DARC}}$ in the source domain as shown in Table \ref{table: broken src darc} and \ref{table: exp result g1.5, darc}, which is our expert policy for the imitation step. Compared with the DAIL, DARAIL has a much better performance, which demonstrates the effectiveness of the reward estimator $R_{AE}$. Compared with the two important weighting methods, IS-R and IS-ACL, in broken source settings, DARAIL outperforms IS-R in four environments and IS-ACL in Ant and Walker2d. IS-ACL and DARAIL achieve similar rewards in HalfCheetah and Reacher. And in modifying configuration settings, DARAIL outperforms IS-R and IS-ACL. Our method outperforms MBPO, MATL, and GARAT in all environments.

% \textbf{Comparison between broken source and broken target setting.} In our paper, different from the setting in the DARC paper, we focus on a harder setting where the source environment is broken. As discussed, DARC works well when the assumption that the target optimal policy performs well in the source domain is satisfied. In the broken target setting, the target optimal policy can perform the same in the source domain. Further, empirically, in the broken target setting, the DARC policy learns a near 0 value for the broken joint, which guarantees that the policy can generate similar trajectories in the two domains. Also, maximizing the adjusted cumulative reward in the source domain with a policy with a near 0 value for the broken joint is equivalent to maximizing the cumulative reward in the target domain. Thus, DARC perfectly suits the broken target setting. However, in the broken source setting and other more general dynamics shift cases, the target optimal policy might not perform well in the source domain. For example, in the broken source setting, the target optimal policy will perform poorly in the source domain as it loses one joint in the source domain. Another way to understand why DARC fails is that it learns an arbitrary value for the broken joint, which becomes detrimental in the target domain. However, this is just an artifact of the particular setting. As we discussed above, the intrinsic reason that DARC fails is the violation of the assumption.

\textbf{DARAIL is Less Sensitive to Extreme Values in Importance Weights} 
Though IS-ACL achieves comparable performance with DARAIL on some tasks shown in Table \ref{table: exp result broken src}, it is highly sensitive to the clipping interval of importance weight. In Figure \ref{fig:shrinkage}, we show the performance of DARAIL and IPS-ACL on different importance weight clipping intervals in the broken source setting, and DARAIL outperforms IPS-ACL on all tasks. If the clipping interval is too large, IPS-ACL suffers from high variance, thus harming the performance. If the clipping interval is too small, the effective information about the dynamics shift is lost. On the other hand, DARAIL is less sensitive to it, which is an inherent property of our $R_{AE}$. Furthermore, in Figure \ref{fig:shrinkage}, for IPS-ACL, the training curve for $[0.001, 1000]$ clipping interval has a much larger variance than $[0.1, 10]$ clipping interval, while our method does not suffer from such a high variance. This also demonstrates that our proposed reward estimator $R_{AE}$ is a more robust estimator and less affected by the importance weight. 

\begin{figure*}[ht]
    \centering
    \setlength{\tabcolsep}{0pt}
    \begin{tabular}{cccc}
         \includegraphics[height=0.172\textwidth]{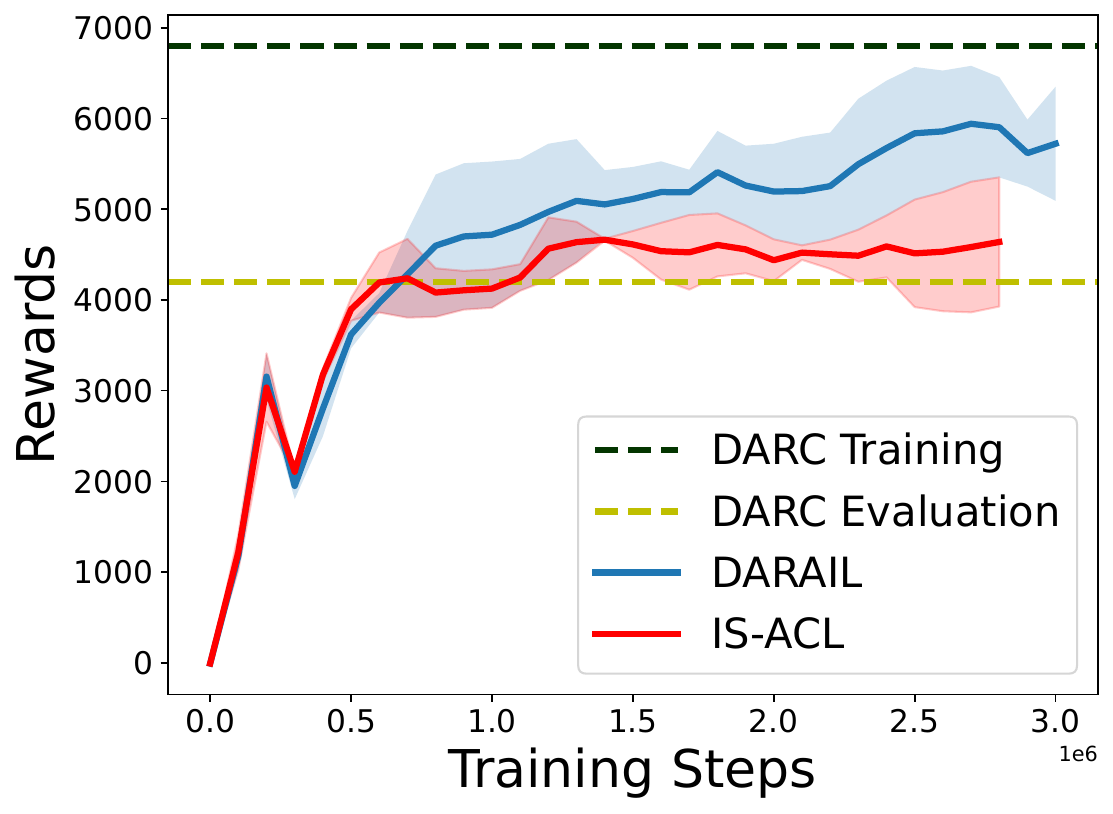}&
         \includegraphics[height=0.172\textwidth]{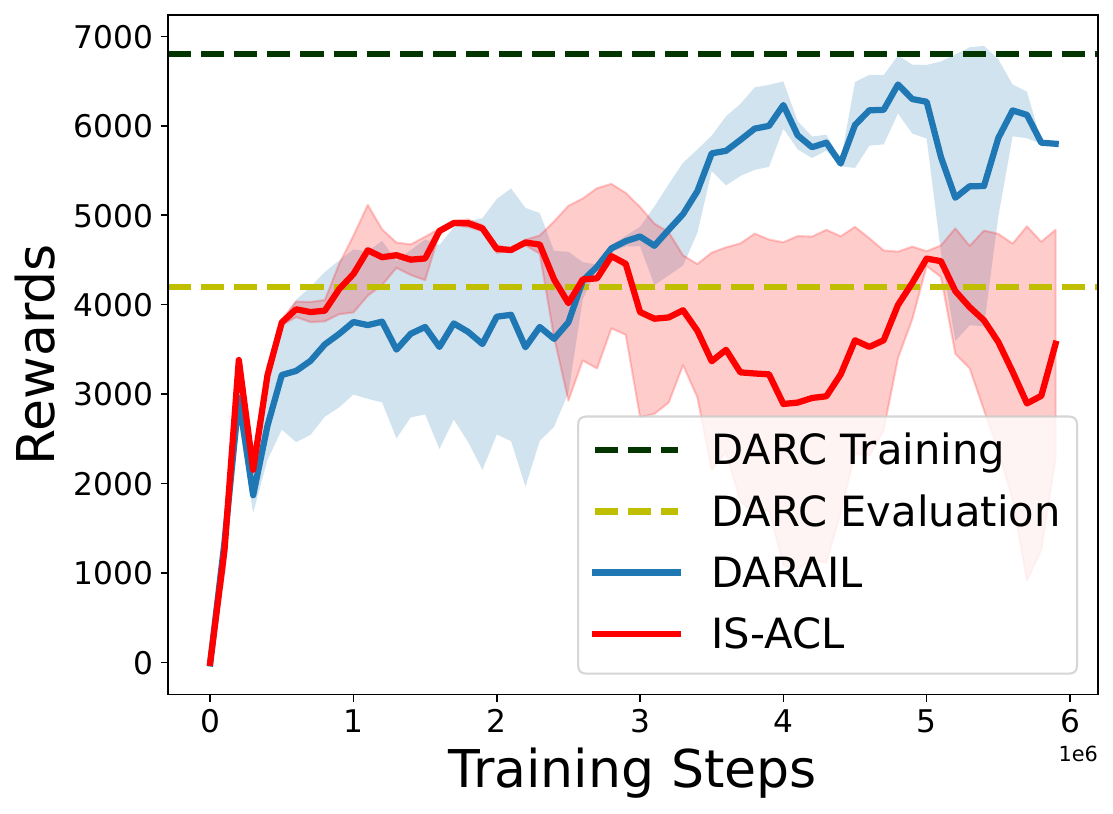}&
         \includegraphics[height=0.172\textwidth]{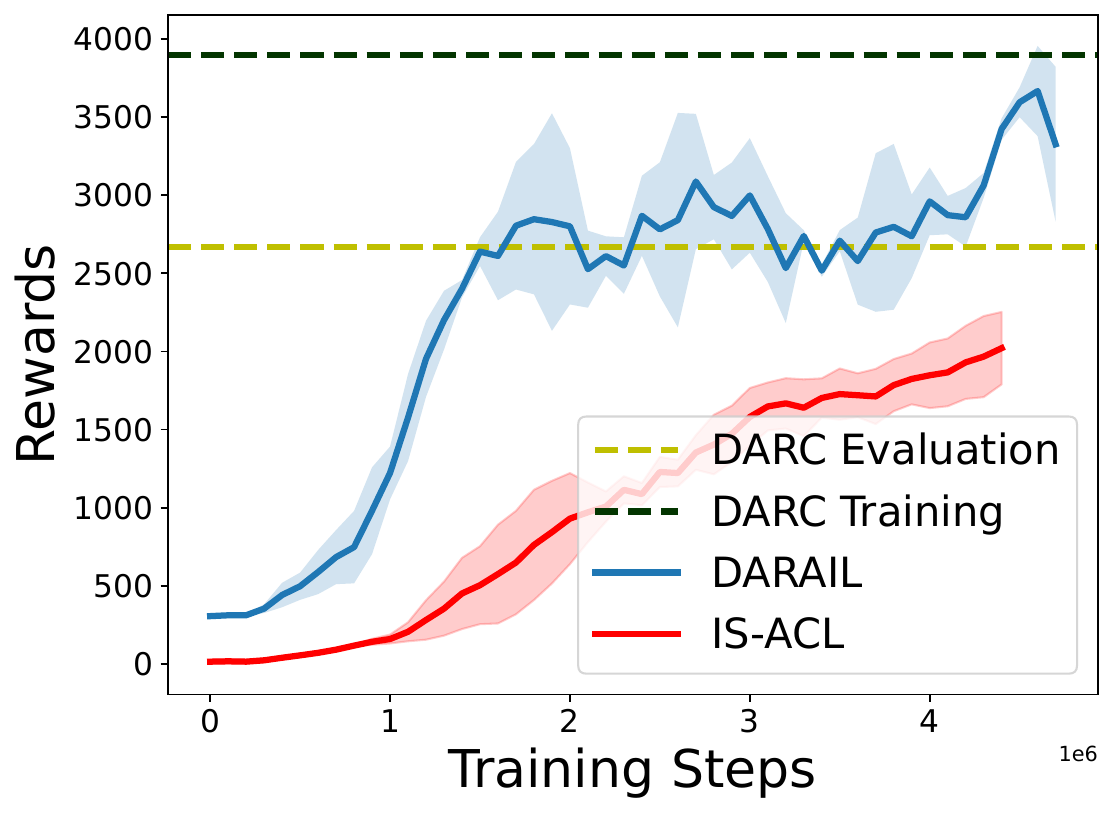}&
         \includegraphics[height=0.172\textwidth]{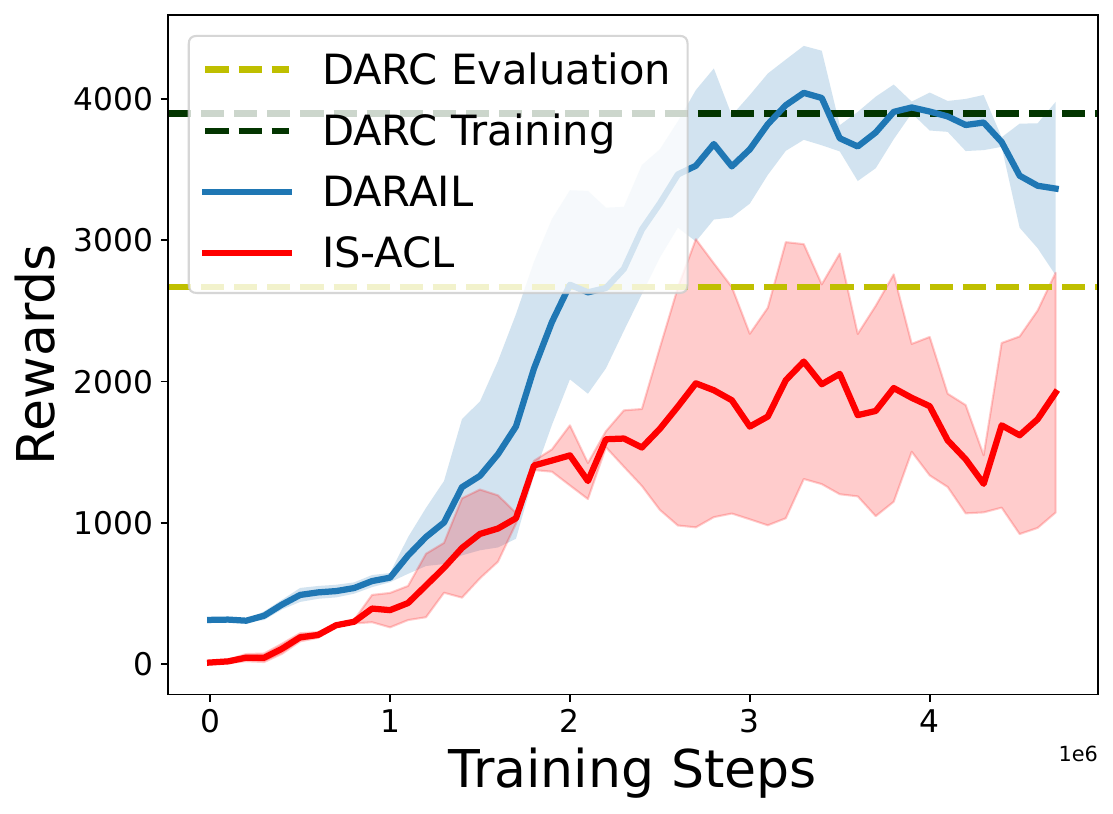}\\
          (a) HalfCheetah & (b) HalfCheetah &  (c) Walker2d  & (d) Walker2d \\
           $[0.1,10]$ & $[0.001,1000]$ & $[0.1,10]$ & $[0.001,1000]$
    \end{tabular}
    \caption{Performance of DARAIL and IPS-ACL on HalfCheetah and Walker2d under different importance weight clipping intervals. DARAIL outperforms IPS-ACL on all tasks. In Table \ref{table: exp result broken src}, IPS-ACL receives comparable performance with DARAIL with the clipping interval [0.01,100], while the performance decreases significantly with different intervals. }
    \label{fig:shrinkage}
    \vspace{-0.05in}
\end{figure*}
% \pan{maybe we should move this to the intro or method section as our motivation for improving DARC?}
% In this section,  we show that DARC policy has different performance in the sourceand the target environment, and the gap between them is large in some cases. Figure \ref{fig:gap_between_darc_on_target_source} shows the DARC policy performance on both the source and the target environment. We can easily observe that the DARC policy has a much higher reward for the source environment than the target environment. This demonstrates that directly deploying the DARC policy to the target environment will harm performance. 

% \begin{figure*}[t]
%     \centering
%     \setlength{\tabcolsep}{0pt}
%     \begin{tabular}{ccc}
%     \includegraphics[height=0.2\textwidth]{Fig/Half3e-4.pdf}&
%         \includegraphics[height=0.2\textwidth]{Fig/Hopper5e-5.pdf}&
%          \includegraphics[height=0.2\textwidth]{Fig/Humanoid3e-4.pdf}\\
%          % &
%           % \includegraphics[height=0.22\textwidth]{fig/gcs_in_one_plot.pdf}\\
%           (a) HalfCheetah-v2  & (b) Hopper-v2  & (c) Humanoid-v2 \\
%     \end{tabular}
%     \caption{Experiment of DARC reward in the sourceand the target environment on different mujoco environments. The DARC reward in the sourceenvironment is higher than the DARC reward in the targetenvironment. And the gap between them is very clear. }
%     \label{fig:gap_between_darc_on_target_source}
%     \vspace{-0.05in}
% \end{figure*}
\textbf{DARAIL's Performance on Different Magnitudes of Shifts} In our broken action environments, as we create the off-dynamics shift by (probabilistically) freezing one action dimension in the source domain, we can control the off-dynamics shift magnitudes by controlling the broken probability. For the same environment, the larger the $p_f$ is, the higher the probability of freezing the 0-index action, thus a larger dynamics shift. We consider $p_{f} = [0.2, 0.5, 0.8]$ for Ant, respectively and the experiment results is shown in Figure \ref{fig:pro-broken-ant}. From left to right, as the dynamics shift increases, we observe that the DARC performance decreases, and DARAIL outperforms DARC on all tasks.
\begin{figure*}[ht]
    \centering
    \setlength{\tabcolsep}{0pt}
    \begin{tabular}{ccc}
         \includegraphics[height=0.18\textwidth]{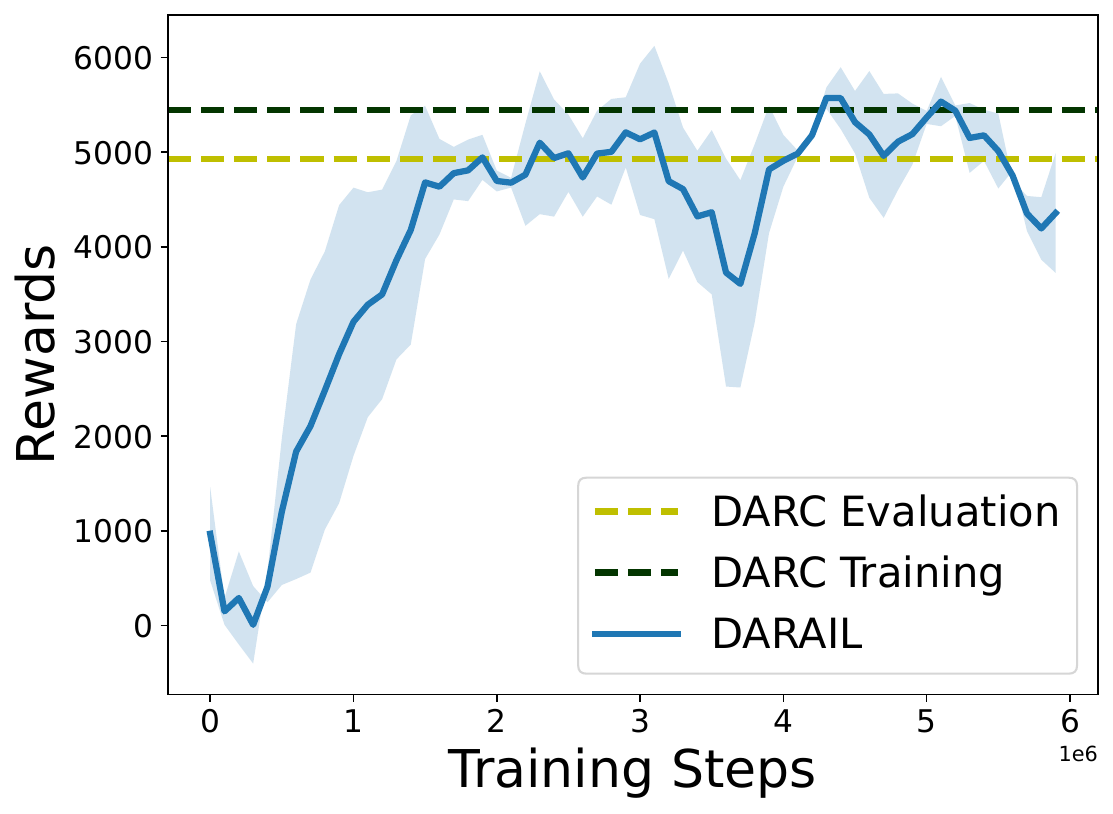}&
         \includegraphics[height=0.18\textwidth]{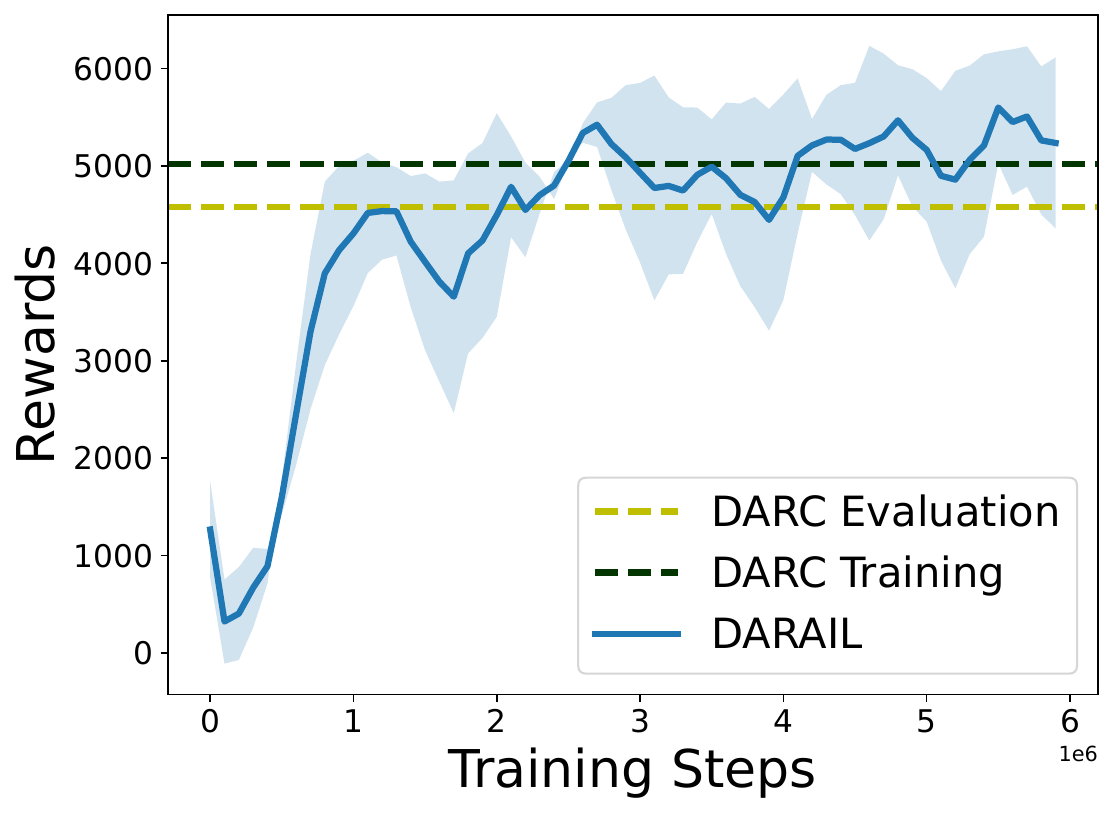}&
         \includegraphics[height=0.18\textwidth]{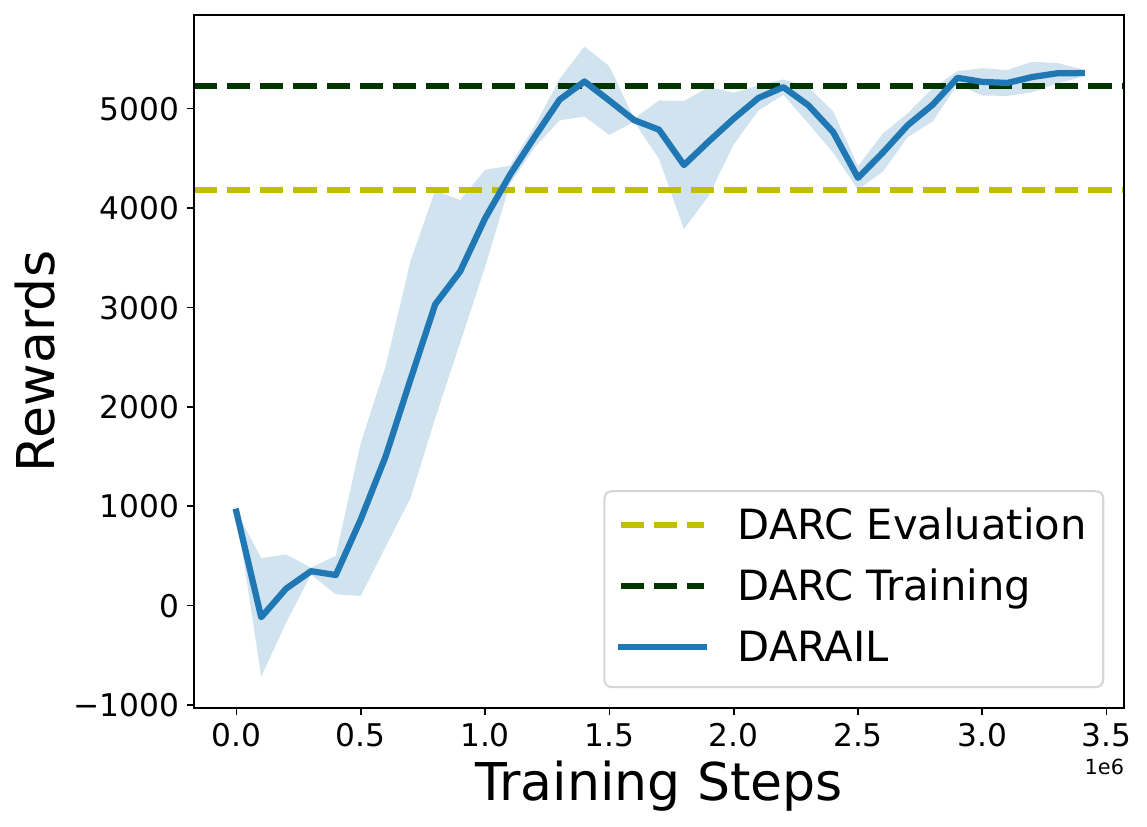}\\
          (a) Ant, $p_f = 0.2$ & (b) Ant, $p_f = 0.5$  &  (c) Ant, $p_f = 0.8$  
    \end{tabular}
    \caption{Performance of DARC and \algname\ under different off-dynamics shifts on Ant. Action $0$ is frozen (set to be 0) with probability $p_f$ in the source domain. From left to right, the off-dynamics shift becomes larger. As the shift becomes larger, the gap between DARC Training and DARC Evaluation is larger. Our method outperforms DARC on different dynamics shift. 
    }
    \label{fig:pro-broken-ant}
    \vspace{-10pt}
\end{figure*}

\section{Related Work}
\textbf{Off-dynamics RL} Off-dynamics RL \citep{eysenbach2020off} is a specific domain adaptation \citep{carr2018domain,xing2021domain} and transfer learning problem in the RL domain \citep{zhu2023transfer} where the goal is to learn a policy from a source domain to adapt to a target domain where the dynamics are different. 
% Domain Adaptation to RL \citep{carr2018domain, xing2021domain}  is a transfer learning approach \citep{zhu2023transfer} that aligns knowledge in the source domain with limited knowledge from the target domain. We study a specific domain adaptation problem where the source and target domains have different transition probabilities.
Similar to many works in off-policy evaluation (OPE) \cite{dudik2011doubly} in bandit and offline/off-policy RL \cite{jiang2016doubly,levine2020offline}, an importance weight approach can be used to account for the difference between the transition dynamics with $\frac{p_{\text{trg}}(s_{t+1}|s_t,a_t)}{p_\text{src}(s_{t+1}|s_t,a_t)}$. However, this method can easily suffer from high variance due to the estimation bias of $p_\text{src}(s_{t+1}|s_t,a_t)$ \citep{dudik2011doubly}. %\yyshi{repetition with paragraph 2 in the intro...} 
Another line of method for the off-dynamics RL is through reward shaping \citep{eysenbach2020off,liu2022dara}. DARC \cite{eysenbach2020off} learns a policy from a modified reward function that accounts for the dynamics shifts through a trajectories distribution matching objective. \cite{liu2021unsupervised} proposed an unsupervised domain adaptation method with KL regularized objective, which uses the same reward modification techniques trajectories distribution matching objective in DARC \citep{eysenbach2020off}. These reward-shaping methods all face the same problem: they will not recover the optimal policy in the target domain and will suffer from performance degradation in the target domain, but the policy's experience in the source domain is similar to the optimal policy in the target domain. Similarly, \cite{xue2024state} proposes a state-regularized policy optimization method that constrains the state distribution to be similar in the source and target domain by adding a constraint term in the reward. However, this will also lead to suboptimal policy in the target domain like DARC. Different from DARC, Mutual Alignment Transfer Learning (MATL) \citep{wulfmeier2017mutual} uses different modified rewards with GAN \cite{goodfellow2014generative} to align the trajectories generated in the source and the target domain, but it requires access to the target domain reward. There is also work \citep{liu2024distributionally} that solves the off-dynamics RL problem by training a distributionally robust policy in the source domain by assuming that the target domain's transition probability is in an ambiguity set defined around the transition probability of the source domain.  Our method builds on DARC, inspired by its property in the source domain, overcoming the issues in DARC and similar methods by mimicking the $\pi_{\text{DARC}}$ behavior in the source domain.

\textbf{Imitation Learning} Imitation learning (IL) is another line of work that can be applied to off-dynamics problems by mimicking the expert demonstration in the target domain. Generative adversarial imitation learning, \citep{ho2016generative,fu2017learning,kim2018imitation,peng2018variational,torabi2018generative,jing2019task,torabi2019imitation}, frames IL as an occupancy-measure matching or divergence minimization problem, which minimizes the divergence of the generated trajectories and the expert demonstration. Building on GAN \citep{goodfellow2014generative}, it uses the RL algorithm as a generator and a classifier as a discriminator to achieve this. Imitation learning from observation (\textit{Ifo}) \citep{liu2018imitation, torabi2018behavioral, torabi2019recent} is recently proposed to mimic the expert’s behavior without knowing which actions the expert took. In the off-dynamics RL setting, recent work on IL under dynamics mismatch \citep{gangwani2020state,desai2020imitation,kim2020domain} can transfer a policy learned in the source to the target domain with minimal interaction with the target domain. However, these methods require high-quality and sufficient expert demonstrations and also the expert demonstrations might not be the optimal trajectories for the target domain, resulting in a suboptimal policy. Our method, DARAIL, transfers the DARC policy's behavior in the source to the target domain through imitation learning from observation so that the new policy will behave like the optimal policy in the target domain. Furthermore, we propose a novel and practical reward estimator with the signal from the discriminator and the reward from the source domain for the policy optimization.   

\section{Conclusion}
In this paper, we propose Domain Adaptation and Reward Augmented Imitation Learning (DARAIL) for off-dynamics RL. We recognize the drawbacks of DARC and its following work with the same modified rewards function. We demonstrate that DARC or similar reward modification methods can only obtain a near-optimal policy in the target domain. We then propose to mimic the trajectory distribution generated by DARC in the source domain. Specifically, we propose a reward-augmented estimator for the policy optimization step in imitation learning from observation. Theoretically, we established the finite sample upper bounds of rewards for the proposed method, relaxing the restrictive assumption about the optimal policy in the previous work. Empirically, we conducted experiments on four Mujoco environments, demonstrating the superiority of our method.  
From the safety perspective, our method avoids directly training a policy in a high-risk environment. Our future work includes investigating off-dynamics reinforcement learning under safety constraints and more severe domain gaps in reinforcement learning.
% We also provide a method/framework by publicizing the codes to contribute to the general sim2real adaptation community.
% \section{Impact Statement}
% Our proposed method provides an improved solution to off-dynamics RL. Different from other popular RL methods, which require access to the deployment environment to train the policy, we train our method in a simulator and adapt it to the deployment environment. From the safety perspective, our method avoids directly training a policy in a high-risk environment. We also provide a method/framework by publicizing the codes to contribute to the general sim2real adaptation community. 
% \section*{References}

\section*{Acknowledgments}
We would like to thank the anonymous reviewers for their helpful comments. 
YG was supported by the Center for Digital Health and Artificial Intelligence (CDHAI) of the Johns Hopkins University.
PX was supported in part by the National Science Foundation (DMS-2323112) and the Whitehead Scholars Program at the Duke University School of Medicine. 
AL was partially supported by the Amazon Research Award, the Discovery Award of the Johns Hopkins University, and a seed grant from the JHU Institute of Assured Autonomy.
The views and conclusions in this paper are those of the authors and should not be interpreted as representing any funding agency.

\bibliography{main}

\begin{thebibliography}{10}

\bibitem{liu2017deep}
Ying Liu, Brent Logan, Ning Liu, Zhiyuan Xu, Jian Tang, and Yangzhi Wang.
\newblock Deep reinforcement learning for dynamic treatment regimes on medical
  registry data.
\newblock In {\em 2017 IEEE international conference on healthcare informatics
  (ICHI)}, pages 380--385. IEEE, 2017.

\bibitem{kiran2021deep}
B~Ravi Kiran, Ibrahim Sobh, Victor Talpaert, Patrick Mannion, Ahmad~A
  Al~Sallab, Senthil Yogamani, and Patrick P{\'e}rez.
\newblock Deep reinforcement learning for autonomous driving: A survey.
\newblock {\em IEEE Transactions on Intelligent Transportation Systems},
  23(6):4909--4926, 2021.

\bibitem{eysenbach2020off}
Benjamin Eysenbach, Swapnil Asawa, Shreyas Chaudhari, Sergey Levine, and Ruslan
  Salakhutdinov.
\newblock Off-dynamics reinforcement learning: Training for transfer with
  domain classifiers.
\newblock {\em arXiv preprint arXiv:2006.13916}, 2020.

\bibitem{wu2021sim}
Junda Wu, Zhihui Xie, Tong Yu, Qizhi Li, and Shuai Li.
\newblock Sim-to-real interactive recommendation via off-dynamics reinforcement
  learning.
\newblock In {\em 2rd Offline Reinforcement Learning Workshop Advances at
  NeurIPS}, 2021.

\bibitem{liu2022dara}
Jinxin Liu, Hongyin Zhang, and Donglin Wang.
\newblock Dara: Dynamics-aware reward augmentation in offline reinforcement
  learning.
\newblock {\em arXiv preprint arXiv:2203.06662}, 2022.

\bibitem{liu2021unsupervised}
Jinxin Liu, Hao Shen, Donglin Wang, Yachen Kang, and Qiangxing Tian.
\newblock Unsupervised domain adaptation with dynamics-aware rewards in
  reinforcement learning.
\newblock {\em Advances in Neural Information Processing Systems},
  34:28784--28797, 2021.

\bibitem{ho2016generative}
Jonathan Ho and Stefano Ermon.
\newblock Generative adversarial imitation learning.
\newblock {\em Advances in neural information processing systems}, 29, 2016.

\bibitem{torabi2018generative}
Faraz Torabi, Garrett Warnell, and Peter Stone.
\newblock Generative adversarial imitation from observation.
\newblock {\em arXiv preprint arXiv:1807.06158}, 2018.

\bibitem{jiang2020offline}
Shengyi Jiang, Jingcheng Pang, and Yang Yu.
\newblock Offline imitation learning with a misspecified simulator.
\newblock {\em Advances in neural information processing systems},
  33:8510--8520, 2020.

\bibitem{desai2020imitation}
Siddharth Desai, Ishan Durugkar, Haresh Karnan, Garrett Warnell, Josiah Hanna,
  and Peter Stone.
\newblock An imitation from observation approach to transfer learning with
  dynamics mismatch.
\newblock {\em Advances in Neural Information Processing Systems},
  33:3917--3929, 2020.

\bibitem{gangwani2020state}
Tanmay Gangwani and Jian Peng.
\newblock State-only imitation with transition dynamics mismatch.
\newblock {\em arXiv preprint arXiv:2002.11879}, 2020.

\bibitem{dudik2011doubly}
Miroslav Dud{\'\i}k, John Langford, and Lihong Li.
\newblock Doubly robust policy evaluation and learning.
\newblock {\em arXiv preprint arXiv:1103.4601}, 2011.

\bibitem{jiang2016doubly}
Nan Jiang and Lihong Li.
\newblock Doubly robust off-policy value evaluation for reinforcement learning.
\newblock In {\em International Conference on Machine Learning}, pages
  652--661. PMLR, 2016.

\bibitem{su2020doubly}
Yi~Su, Maria Dimakopoulou, Akshay Krishnamurthy, and Miroslav Dud{\'\i}k.
\newblock Doubly robust off-policy evaluation with shrinkage.
\newblock In {\em International Conference on Machine Learning}, pages
  9167--9176. PMLR, 2020.

\bibitem{xu2021doubly}
Tengyu Xu, Zhuoran Yang, Zhaoran Wang, and Yingbin Liang.
\newblock Doubly robust off-policy actor-critic: Convergence and optimality.
\newblock In {\em International Conference on Machine Learning}, pages
  11581--11591. PMLR, 2021.

\bibitem{kallus2022doubly}
Nathan Kallus, Xiaojie Mao, Kaiwen Wang, and Zhengyuan Zhou.
\newblock Doubly robust distributionally robust off-policy evaluation and
  learning.
\newblock In {\em International Conference on Machine Learning}, pages
  10598--10632. PMLR, 2022.

\bibitem{haarnoja2018soft}
Tuomas Haarnoja, Aurick Zhou, Pieter Abbeel, and Sergey Levine.
\newblock Soft actor-critic: Off-policy maximum entropy deep reinforcement
  learning with a stochastic actor.
\newblock In {\em International conference on machine learning}, pages
  1861--1870. PMLR, 2018.

\bibitem{holla2021off}
Joshua~Arvind Holla.
\newblock {\em On the off-dynamics approach to reinforcement learning}.
\newblock McGill University (Canada), 2021.

\bibitem{janner2019trust}
Michael Janner, Justin Fu, Marvin Zhang, and Sergey Levine.
\newblock When to trust your model: Model-based policy optimization.
\newblock {\em Advances in neural information processing systems}, 32, 2019.

\bibitem{wulfmeier2017mutual}
Markus Wulfmeier, Ingmar Posner, and Pieter Abbeel.
\newblock Mutual alignment transfer learning.
\newblock In {\em Conference on Robot Learning}, pages 281--290. PMLR, 2017.

\bibitem{carr2018domain}
Thomas Carr, Maria Chli, and George Vogiatzis.
\newblock Domain adaptation for reinforcement learning on the atari.
\newblock {\em arXiv preprint arXiv:1812.07452}, 2018.

\bibitem{xing2021domain}
Jinwei Xing, Takashi Nagata, Kexin Chen, Xinyun Zou, Emre Neftci, and Jeffrey~L
  Krichmar.
\newblock Domain adaptation in reinforcement learning via latent unified state
  representation.
\newblock In {\em Proceedings of the AAAI Conference on Artificial
  Intelligence}, volume~35, pages 10452--10459, 2021.

\bibitem{zhu2023transfer}
Zhuangdi Zhu, Kaixiang Lin, Anil~K Jain, and Jiayu Zhou.
\newblock Transfer learning in deep reinforcement learning: A survey.
\newblock {\em IEEE Transactions on Pattern Analysis and Machine Intelligence},
  2023.

\bibitem{levine2020offline}
Sergey Levine, Aviral Kumar, George Tucker, and Justin Fu.
\newblock Offline reinforcement learning: Tutorial, review, and perspectives on
  open problems.
\newblock {\em arXiv preprint arXiv:2005.01643}, 2020.

\bibitem{xue2024state}
Zhenghai Xue, Qingpeng Cai, Shuchang Liu, Dong Zheng, Peng Jiang, Kun Gai, and
  Bo~An.
\newblock State regularized policy optimization on data with dynamics shift.
\newblock {\em Advances in neural information processing systems}, 36, 2024.

\bibitem{goodfellow2014generative}
Ian Goodfellow, Jean Pouget-Abadie, Mehdi Mirza, Bing Xu, David Warde-Farley,
  Sherjil Ozair, Aaron Courville, and Yoshua Bengio.
\newblock Generative adversarial nets.
\newblock {\em Advances in neural information processing systems}, 27, 2014.

\bibitem{liu2024distributionally}
Zhishuai Liu and Pan Xu.
\newblock Distributionally robust off-dynamics reinforcement learning: Provable
  efficiency with linear function approximation.
\newblock In {\em International Conference on Artificial Intelligence and
  Statistics}, pages 2719--2727. PMLR, 2024.

\bibitem{fu2017learning}
Justin Fu, Katie Luo, and Sergey Levine.
\newblock Learning robust rewards with adversarial inverse reinforcement
  learning.
\newblock {\em arXiv preprint arXiv:1710.11248}, 2017.

\bibitem{kim2018imitation}
Kee-Eung Kim and Hyun~Soo Park.
\newblock Imitation learning via kernel mean embedding.
\newblock In {\em Proceedings of the AAAI Conference on Artificial
  Intelligence}, volume~32, 2018.

\bibitem{peng2018variational}
Xue~Bin Peng, Angjoo Kanazawa, Sam Toyer, Pieter Abbeel, and Sergey Levine.
\newblock Variational discriminator bottleneck: Improving imitation learning,
  inverse rl, and gans by constraining information flow.
\newblock {\em arXiv preprint arXiv:1810.00821}, 2018.

\bibitem{jing2019task}
Mingxuan Jing, Xiaojian Ma, Wenbing Huang, Fuchun Sun, and Huaping Liu.
\newblock Task transfer by preference-based cost learning.
\newblock In {\em Proceedings of the AAAI Conference on Artificial
  Intelligence}, volume~33, pages 2471--2478, 2019.

\bibitem{torabi2019imitation}
Faraz Torabi, Garrett Warnell, and Peter Stone.
\newblock Imitation learning from video by leveraging proprioception.
\newblock {\em arXiv preprint arXiv:1905.09335}, 2019.

\bibitem{liu2018imitation}
YuXuan Liu, Abhishek Gupta, Pieter Abbeel, and Sergey Levine.
\newblock Imitation from observation: Learning to imitate behaviors from raw
  video via context translation.
\newblock In {\em 2018 IEEE International Conference on Robotics and Automation
  (ICRA)}, pages 1118--1125. IEEE, 2018.

\bibitem{torabi2018behavioral}
Faraz Torabi, Garrett Warnell, and Peter Stone.
\newblock Behavioral cloning from observation.
\newblock {\em arXiv preprint arXiv:1805.01954}, 2018.

\bibitem{torabi2019recent}
Faraz Torabi, Garrett Warnell, and Peter Stone.
\newblock Recent advances in imitation learning from observation.
\newblock {\em arXiv preprint arXiv:1905.13566}, 2019.

\bibitem{kim2020domain}
Kuno Kim, Yihong Gu, Jiaming Song, Shengjia Zhao, and Stefano Ermon.
\newblock Domain adaptive imitation learning.
\newblock In {\em International Conference on Machine Learning}, pages
  5286--5295. PMLR, 2020.

\bibitem{arora2017generalization}
Sanjeev Arora, Rong Ge, Yingyu Liang, Tengyu Ma, and Yi~Zhang.
\newblock Generalization and equilibrium in generative adversarial nets (gans).
\newblock In {\em International conference on machine learning}, pages
  224--232. PMLR, 2017.

\bibitem{xu2020error}
Tian Xu, Ziniu Li, and Yang Yu.
\newblock Error bounds of imitating policies and environments.
\newblock {\em Advances in Neural Information Processing Systems},
  33:15737--15749, 2020.

\end{thebibliography}
\bibliographystyle{unsrt}
%%%%%%%%%%%%%%%%%%%%%%%%%%%%%%%%%%%%%%%%%%%%%%%%%%%%%%%%%%%%
\newpage
\appendix

\section{Analysis of DARC}
\label{appendix: analysis of darc}
%\pan{be consistent about the capitalization in heading}

\subsection{DARC Objective}
\label{appendix: darc objective}
Figure \ref{fig: darc objective kl} shows the objective of DARC, which minimizes the reverse KL divergence of the trajectories generated by the $\pi_{\text{DARC}}$ in the source domain and $\pi^*$ in the target domain. Note that the optimal policy is assumed to be proportional to the exponential form of the reward, i.e. $\pi^* \propto \exp (r(s_t,a_t))$. Given this assumption, the reverse KL divergence can be re-formulated to Eq. \eqref{eq:darc loss} with modified reward. So, the $\pi_{\text{DARC}}$ will not be optimal in the target domain but can generate trajectories in the source domain that resemble the optimal trajectories given the objective. 
\begin{figure*}[h]
    \centering
    % \setlength{\tabcolsep}{0pt}
    % \setlength{\abovecaptionskip}{0cm} 
    % \setlength{\belowcaptionskip}{0cm}
    % \begin{tabular}{c}
    \includegraphics[height=0.4\textwidth]{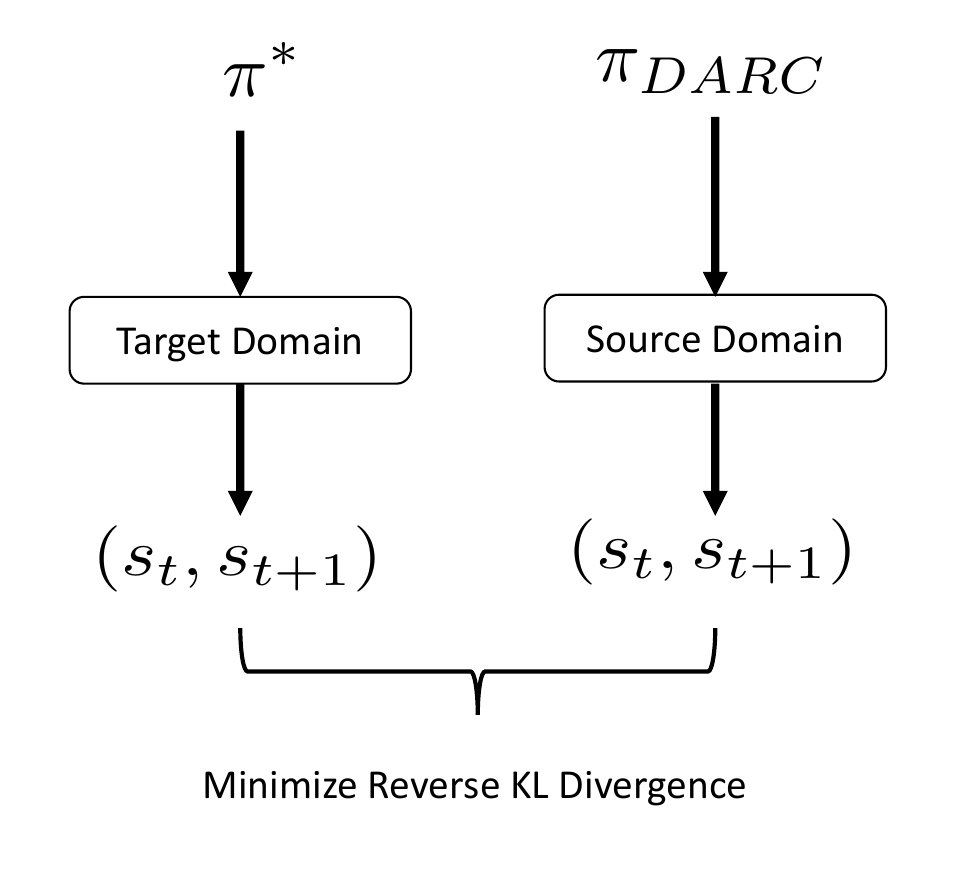}\\
    %(a) DARC objective 
     %\end{tabular}
    % \vspace{1pt}
    \caption{Optimization objective of DARC. DARC minimizes the reverse KL divergence of the trajectories generated by the $\pi_{\text{DARC}}$ and optimal policy $\pi^*$. 
    } 
    \label{fig: darc objective kl}
    % \vspace{-0.05in}
\end{figure*}
\subsection{DARC Error Bound}
\label{appendix: darc error bound}

Now, we show that without the assumption of $\pi^* \in \Pi_{\textit{no exploit}}$ in \cite{eysenbach2020off}, the error of $\pi_{\text{DARC}}$ cannot be trivially bounded.

\begin{lemma}
\label{lemma: darc bound}
    If $\pi^* \notin \Pi_{\textit{no exploit}}$, the error bound of the $\pi_{\text{DARC}}$ is in the following form:
\begin{align*}
    & \mathrm{E}_{ p_{\text{trg}},\pi^{*}}\bigg[\sum_t r(s_t, a_t)+\cH[a_t|s_t]\bigg]-\mathrm{E}_{p_{\text{trg}}, \pi_{\text{DARC}}} \bigg[\sum_t r(s_t, a_t) + \mathcal{H}[a_t|s_t]\bigg] \notag \\
    & \leq  2R_{max} \sqrt{\frac{1}{2} D_{KL}(p_{\text{trg},\pi^{*}}(\tau), p_{\text{src},\pi^{*}}(\tau))} + \sum_{t}TV(\pi_{\text{DARC}}(\cdot|s_t),\pi^*(\cdot|s_t))\max_{s_t,a_t,s_{t+1}}\Delta r(s_t,a_t,s_{t+1}) \notag\\
    &\qquad+ 2R_{max}\sqrt{\epsilon/2}.
\end{align*}
\end{lemma}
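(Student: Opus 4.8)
The plan is to decompose the performance gap between $\pi^*$ in the target and $\pi_{\text{DARC}}$ in the target into three pieces, each controlled by a different mechanism: a change-of-dynamics cost for $\pi^*$ (from the target to the source domain), a change-of-policy cost in the source domain (from $\pi^*$ to $\pi_{\text{DARC}}$), and a change-of-dynamics cost for $\pi_{\text{DARC}}$ (from the source back to the target). First I would insert the intermediate quantities $\EE_{p_{\text{src}},\pi^*}[\sum_t r + \cH]$ and $\EE_{p_{\text{src}},\pi_{\text{DARC}}}[\sum_t r + \cH]$ and write the total gap as the telescoping sum of (a) $\EE_{p_{\text{trg}},\pi^*}[\cdot] - \EE_{p_{\text{src}},\pi^*}[\cdot]$, (b) $\EE_{p_{\text{src}},\pi^*}[\cdot] - \EE_{p_{\text{src}},\pi_{\text{DARC}}}[\cdot]$, and (c) $\EE_{p_{\text{src}},\pi_{\text{DARC}}}[\cdot] - \EE_{p_{\text{trg}},\pi_{\text{DARC}}}[\cdot]$.

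For term (a): the two expectations are over the same policy $\pi^*$ but different trajectory distributions $p_{\text{trg},\pi^*}(\tau)$ and $p_{\text{src},\pi^*}(\tau)$. Since the summed reward-plus-entropy along a trajectory is bounded (using the per-step bound $R_{\max}$ and the horizon/discounting, along with boundedness of the entropy term — this is where I would absorb constants), the difference of expectations is bounded by $2R_{\max}$ times the total variation distance between the two trajectory distributions, which by Pinsker's inequality is at most $2R_{\max}\sqrt{\tfrac12 D_{\mathrm{KL}}(p_{\text{trg},\pi^*}(\tau), p_{\text{src},\pi^*}(\tau))}$. This gives the first term on the right-hand side. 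For term (c): here the dynamics differ but I would not re-expand via a trajectory-level KL; instead, since $\pi_{\text{DARC}}$ is (approximately) in the \emph{no exploit set} — indeed it is the maximizer of Eq.~\eqref{eq: no exploit} over $\Pi_{\text{no exploit}}$, so $\EE_{a\sim\pi_{\text{DARC}}}[\sum_t \cD_{\text{KL}}(p_{\text{src}}\|p_{\text{trg}})] \le \epsilon$ — I would apply the same Pinsker-plus-boundedness argument per step and use subadditivity of KL along the trajectory to get a $2R_{\max}\sqrt{\epsilon/2}$ bound, matching the last term. For term (b): both expectations are in the source domain under the same dynamics $p_{\text{src}}$, but with policies $\pi^*$ versus $\pi_{\text{DARC}}$. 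I would peel off the difference step by step, bounding $|\EE_{\pi^*}[\cdot] - \EE_{\pi_{\text{DARC}}}[\cdot]|$ at each timestep by the TV distance $TV(\pi_{\text{DARC}}(\cdot|s_t),\pi^*(\cdot|s_t))$ times the relevant range of the integrand; since the integrand here effectively involves the modified reward $r + \Delta r$ (because $\pi_{\text{DARC}}$ is optimal for the modified reward in the source, so comparing $\pi^*$ to it picks up the $\Delta r$ discrepancy), summing over $t$ yields the $\sum_t TV(\pi_{\text{DARC}}(\cdot|s_t),\pi^*(\cdot|s_t)) \max_{s_t,a_t,s_{t+1}} \Delta r(s_t,a_t,s_{t+1})$ term. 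Collecting (a), (b), (c) gives the claimed inequality.

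The main obstacle I anticipate is term (b): making precise why the policy-difference cost in the source domain is weighted by $\max \Delta r$ rather than by $R_{\max}$. The honest reasoning is that $\pi_{\text{DARC}}$ maximizes $\EE_{p_{\text{src}}}[\sum_t r + \Delta r + \cH]$, so $\EE_{p_{\text{src}},\pi^*}[\sum_t r + \cH] - \EE_{p_{\text{src}},\pi_{\text{DARC}}}[\sum_t r + \cH] \le \EE_{p_{\text{src}},\pi^*}[\sum_t \Delta r] - \EE_{p_{\text{src}},\pi_{\text{DARC}}}[\sum_t \Delta r]$ (adding and subtracting $\EE[\sum_t\Delta r]$ and using optimality to drop the modified-reward gap, which is $\le 0$), and then bounding that last difference — same dynamics, different policies — by $\sum_t TV(\pi_{\text{DARC}}(\cdot|s_t),\pi^*(\cdot|s_t)) \max \Delta r$. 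I would need to be careful that the per-step TV coupling argument is valid despite the state distributions also differing; the standard fix is a telescoping/coupling decomposition over timesteps where at each step only one policy-conditional differs and downstream states are coupled, absorbing the state-distribution mismatch into the sum over $t$. The entropy terms cancel conveniently in this step since they appear on both sides with the same policies being compared, so no extra entropy range needs tracking there.
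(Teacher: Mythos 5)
Your proposal is correct and follows essentially the same route as the paper: the identical three-term telescoping decomposition through $\EE_{p_{\text{src}},\pi^*}$ and $\EE_{p_{\text{src}},\pi_{\text{DARC}}}$, with Pinsker's inequality on the trajectory distributions for the first term, the optimality of $\pi_{\text{DARC}}$ under the modified reward $r+\Delta r+\cH$ to reduce the second term to a $\Delta r$ discrepancy bounded via per-step total variation, and $\pi_{\text{DARC}}\in\Pi_{\text{no exploit}}$ for the last term. The caveats you flag (the per-step TV coupling with mismatched state distributions, and the looseness of $\max\Delta r$) are left equally informal in the paper's own proof, so there is no gap relative to it.
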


%\pan{summarize as a lemma}
\begin{proof}
In \cite{eysenbach2020off} Lemma B.2, they show that for any policy $\pi \in \Pi_{\textit{no exploit}}$, the following inequality holds:
\begin{align}
\label{eq:lemmab2}
    &\bigg|\mathrm{E}_{ p_{\text{src}},\pi}\bigg[\sum_t r(s_t, a_t) + \mathcal{H}_{\pi}[a_t|s_t]\bigg] - \mathrm{E}_{p_{\text{trg}},\pi}\bigg[\sum_t r(s_t, a_t) + \mathcal{H}_{\pi}[a_t|s_t]\bigg]\bigg| \leq 2R_{max} \sqrt{\epsilon/2},
\end{align}
where $R_{max}$ refers to the maximum entropy-regularized return of any trajectories. However, the inequality Eq. \eqref{eq:lemmab2} only holds for $\pi_{\text{DARC}}$, not for $\pi^*$. Now, we show that without the assumption $\pi^* \in \Pi_{\textit{no exploit}}$, the error could not be bounded trivially. 

We start with the same decomposition.
Therefore, we have
\begin{align}
    & \mathrm{E}_{ p_{\text{trg}},\pi^{*}}\left[\sum_t r(s_t, a_t)+\cH[a_t|s_t]\right]-\mathrm{E}_{p_{\text{trg}}, \pi_{\text{DARC}}} \left[\sum_t r(s_t, a_t) + \mathcal{H}[a_t|s_t]\right] \notag \\
    & = \underbrace{\mathrm{E}_{ p_{\text{trg}},\pi^{*}}\left[\sum_t r(s_t, a_t)+\cH[a_t|s_t]\right] - \mathrm{E}_{ p_{\text{src}},\pi^{*}}\left[\sum_t r(s_t, a_t)+\cH[a_t|s_t]\right]}_{I_1} \notag\\
    &\qquad+ \underbrace{\mathrm{E}_{p_{\text{src}},\pi^{*} } \left[\sum_t r(s_t, a_t) + \mathcal{H}[a_t|s_t]\right] -\mathrm{E}_{ p_{\text{src}},\pi_{\text{DARC}}}\left[\sum_t r(s_t, a_t)+H_{\pi^*}[a_t|s_t]\right]}_{I_2} \notag \\
    &\qquad+\underbrace{\mathrm{E}_{p_{\text{src}}, \pi_{\text{DARC}}} \left[\sum_t r(s_t, a_t) + \mathcal{H}[a_t|s_t]\right] - \mathrm{E}_{p_{\text{trg}}, \pi_{\text{DARC}}} \left[\sum_t r(s_t, a_t) +\mathcal{H}[a_t|s_t]\right]}_{I_3} .
\end{align}
In the original proof of \citet{eysenbach2020off}, they bound the three terms based on the following idea:

For the term $I_1$, they directly assume $\pi^* \in \Pi_{\textit{no exploit}}$ and obtain $I_1 \leq 2R_{max}\sqrt{\epsilon/2}$ based on inequality Eq. \eqref{eq:lemmab2}. However, without the $\pi^* \in \Pi_{\textit{no exploit}}$, the upper bound is not valid. A valid upper bound should be:
\begin{align}
    I_1 &= \mathrm{E}_{p_{\text{trg}},\pi^{*}}\left[\sum_t r(s_t, a_t)+\cH[a_t|s_t]\right] - \mathrm{E}_{ p_{\text{src}},\pi^{*}}\left[\sum_t r(s_t, a_t)+\cH[a_t|s_t]\right] \notag \\
    &=\sum_{\tau} (p_{\text{trg},\pi^{*}}(\tau) - p_{\text{src},\pi^{*}}(\tau)) \left[\sum_t r(s_t,a_t) + \mathcal{H}[a_t|s_t] \right] \notag\\
    &\leq R_{max} \Vert p_{\text{trg},\pi^{*}}(\tau) - p_{\text{src},\pi^{*}}(\tau)\Vert_{\infty} \notag \\
    &\leq 2R_{max} \sqrt{\frac{1}{2} D_{KL}(p_{\text{trg},\pi^{*}}(\tau), p_{\text{src},\pi^{*}}(\tau))}.
\end{align}
If $\pi^* \in \Pi_{\textit{no exploit}}$ holds, we have $ D_{KL}(p_{\text{trg},\pi^{*}}(\tau), p_{\text{src},\pi^{*}}(\tau)) \leq \epsilon$, which recovers the inequality Eq. \eqref{eq:lemmab2}. If it doesn't, we cannot trivially bound the $ D_{KL}(p_{\text{trg},\pi^{*}}(\tau), p_{\text{src},\pi^{*}}(\tau))$.

For the term $I_2$, in the proof of \cite{eysenbach2020off}, they also assume $\pi^* \in \Pi_{\textit{no exploit}}$ and obtain the $I_2 \leq 0$ based on the objective $\pi_{\text{DARC}}$ maximizes the reward in the source domain with $\pi_{\text{DARC}} \in \Pi_{\textit{no exploit}}$. If $\pi^* \in \Pi_{\textit{no exploit}}$ doesn't hold, we can bound this term by the following inequality:
\begin{align*}
    &\mathrm{E}_{p_{\text{src},\pi_{\text{DARC}}}} \left[\sum_t r(s_t, a_t) + \Delta r(s_t,a_t,s_{t+1})+\mathcal{H}[a_t|s_t]\right] \\
    &\geq \mathrm{E}_{p_{\text{src},\pi^*}} \left[\sum_t r(s_t, a_t) +\Delta r(s_t,a_t,s_{t+1})+ \mathcal{H}[a_t|s_t]\right],
\end{align*}
which is equivalent to
\begin{align}
        & \mathrm{E}_{p_{\text{src},\pi^*}} \left[\sum_t r(s_t, a_t) + \mathcal{H}[a_t|s_t]\right]-\mathrm{E}_{p_{\text{src}},\pi_{\text{DARC}}} \left[\sum_t r(s_t, a_t) +\mathcal{H}[a_t|s_t]\right]\notag\\
        &\leq \mathrm{E}_{p_{\text{src}},\pi^*} \left[\sum_t \Delta r(s_t,a_t,s_{t+1}) \right]-\mathrm{E}_{p_{\text{src}},\pi_{\text{DARC}}} \left[\sum_t \Delta r(s_t,a_t,s_{t+1})\right ]\\
        & \leq \sum_{t}TV(\pi_{\text{DARC}}(\cdot|s_t),\pi^*(\cdot|s_t))\max_{s_t,a_t,s_{t+1}}\Delta r(s_t,a_t,s_{t+1}).
\end{align}
And the total variation of the two policies cannot be trivially bound as well. 
For the term $I_3$, we can easily bound it by applying the inequality Eq. \eqref{eq:lemmab2} as $\pi_{\text{DARC}} \in \Pi_{\textit{no exploit}}$. 

In summary, the bound without assuming $\pi^* \in \Pi_{\textit{no exploit}}$ will be:
\begin{align*}
    & \mathrm{E}_{ p_{\text{trg}},\pi^{*}}\left[\sum_t r(s_t, a_t)+H[a_t|s_t]\right]-\mathrm{E}_{p_{\text{trg}}, \pi_{\text{DARC}}} \left[\sum_t r(s_t, a_t) + \mathcal{H}[a_t|s_t]\right] \notag \\
    & \leq  2R_{max} \sqrt{\frac{1}{2} D_{KL}(p_{\text{trg},\pi^{*}}(\tau), p_{\text{src},\pi^{*}}(\tau))} + \sum_{t}TV(\pi_{\text{DARC}}(\cdot|s_t),\pi^*(\cdot|s_t))\max_{s_t,a_t,s_{t+1}}\Delta r(s_t,a_t,s_{t+1}) \\
    &\qquad+ 2R_{max}\sqrt{\epsilon/2}.
\end{align*}
This completes the proof.
\end{proof}

\section{Theoretical Analysis of DARAIL}
\label{appendix: theoretical analysis}
In this section, we prove our theoretical results.
\begin{definition}
    \label{neural network distance}
    (Neural Network Distance \cite{arora2017generalization,xu2020error}) For a class of neural networks $\cD$, the neural network distance between two state-next state distributions, $\tau_{\pi_{\text{DARC}}}^{\text{src}}$ and $\tau_{\zeta}^{\text{trg}}$, is deﬁned as
    \begin{align*}
    d_{\cD}(\tau_{\pi_{\text{DARC}}}^{\text{src}},\tau_{\zeta}^{\text{trg}}) &= \sup_{D \in \cD} \left\{\EE_{(s_t, s_{t+1}) \sim \tau_{\pi_{\text{DARC}}}^{\text{src}} } \left[D(s_t, s_{t+1})\right] - \EE_{(s_t, s_{t+1}) \sim \tau_{\zeta}^{\text{trg}}}[D(s_t, s_{t+1})]\right\}\\
    &=\sup_{D \in \cD} \left\{\EE_{(s_t, s_{t+1}) \sim \tau_{\pi_{\text{DARC}}}^{\text{src}} } [D(s_t, s_{t+1})] - \EE_{(s_t, s_{t+1}) \sim \tau_{\zeta}^{\text{src}}}[\rho(s_t, s_{t+1})D(s_t, s_{t+1})]\right\}.
    \end{align*}
\end{definition}

\begin{definition}
    (Empirical Rademacher Complexity) Given a function class $\cF$, a dataset $X = (x_1, x_2, ..., x_n)$, i.i.d drawn from distribution $\mu$ and random variable $\sigma$ defined as $P(\sigma = 1) = P(\sigma = -1) = \frac{1}{2}$, the empirical Rademacher complexity is given by: 
    \begin{align}
        \label{Empirical Rademacher Complexity}
        \hat{\cR}^{(n)}_{\mu} = \EE_{\sigma} [\sup_{f \in \cF}] \frac{1}{n} \sum_{i = 1}^n \sigma_i f(x_i).
    \end{align}
\end{definition}

\begin{definition}
    (Linear Span of the Discriminator) Consider a span of the discriminator class: span($D$) $=\{c_0  + \sum_i^k c_i D_i: c_0 \in \mathbb{R}, D_i \in \cD, n \in \mathbb{N} \}$. Assuming the ground truth reward function $r$ lies in the span($\cD$), then the compatible coefﬁcient is defined as: 
    \begin{align}
        \label{linear span}
        \|r \|_{\cD} = \inf \left\{ \sum_i^k |c_i|: r =c_0  + \sum_i^k c_i D_i, c_0 \in \mathbb{R}, D_i \in \cD, n \in \mathbb{N}  \right\}.
    \end{align}
\end{definition}
The \emph{compatible coefﬁcient} represents the minimum number of functions in $\cD$ required to the reward function $r$, which means the complexity of the reward function $r$.  

\begin{lemma}
    \label{GAIL Generalization}
    (GAIL Generalization). Let $\pi_{\text{DARC}}$ be the expert policy and $\hat{\zeta}$ be the solution of the imitation learning algorithm. Let discriminator class $\cD$ be a $\Delta$-bounded function, i.e. $|D(s_t,s_{t+1})|\leq \Delta$. Suppose reward function $r$ lies in the span of the discriminator class. Given $d_{\cD}(\hat{\tau}_{\pi_{\text{DARC}}}^{\text{src}},\hat{\tau}_{\zeta}^{\text{trg}}) - \inf_{\zeta} d_{\cD}(\hat{\tau}_{\pi_{\text{DARC}}}^{\text{src}},\hat{\tau}_{\zeta}^{\text{trg}}) \leq \hat{\epsilon}$ (empirical neural network distance achieved by imitation learning), the importance weight $\rho(s,s_{t+1})$ is bounded by $W$, $m$ is the number of the expert data, then $\forall$ $\delta \in (0,1)$, with probability at least $1-\delta$, we have  
    \label{gail generalization}
    \begin{align*}
        & \EE_{p_{\text{src}},\pi_{\text{DARC}}}\left[\sum_t r(s_t,a_t)\right] - \EE_{p_{\text{trg}},\hat{\zeta}}\left[\sum_t r(s_t,a_t)\right] \\ & \leq    \|r_{\cD}\| \bigg[ \inf_\zeta d_{\cD} (\hat{\tau}_{\pi_{\text{DARC}}}^{\text{src}}, \hat{\tau}_{\zeta}^{\text{trg}})  + 2\hat{\cR}^{(m)}_{\tau_{\pi_{\text{DARC}}}^{\text{src}}} +  2W\hat{\cR}^{(m)}_{\tau_{\zeta}^{\text{trg}}} 
     + (6W+1)\Delta \sqrt{\frac{log(4/\delta)}{2m}} + \hat{\epsilon} \bigg].
    \end{align*}
\end{lemma}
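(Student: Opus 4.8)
The plan is to prove Theorem~\ref{theorem: error bound} by decomposing the target-domain suboptimality of $\hat\zeta$ into two pieces: the performance gap of $\pi_{\text{DARC}}$ measured in the \emph{source} domain, and the gap introduced by transferring $\pi_{\text{DARC}}$ to the target domain via imitation learning from observation. Concretely, I would insert the quantity $\EE_{p_{\text{src}},\pi_{\text{DARC}}}[\sum_t r(s_t,a_t)]$ and write
\begin{align*}
&\EE_{p_{\text{trg}},\pi^{*}}\Big[\textstyle\sum_t r\Big] - \EE_{p_{\text{trg}},\hat{\zeta}}\Big[\textstyle\sum_t r\Big]
= \Big(\EE_{p_{\text{trg}},\pi^{*}}\Big[\textstyle\sum_t r\Big] - \EE_{p_{\text{src}},\pi_{\text{DARC}}}\Big[\textstyle\sum_t r\Big]\Big)\\
&\qquad\qquad + \Big(\EE_{p_{\text{src}},\pi_{\text{DARC}}}\Big[\textstyle\sum_t r\Big] - \EE_{p_{\text{trg}},\hat{\zeta}}\Big[\textstyle\sum_t r\Big]\Big).
\end{align*}
For the first bracket, I would note that since $\pi^{*}$ maximizes the entropy-regularized return in the target domain among all policies, and dropping the nonnegative entropy term only decreases the value, we have $\EE_{p_{\text{trg}},\pi^{*}}[\sum_t r] \le \EE_{p_{\text{trg}},\pi^{*}}[\sum_t r + \cH[a_t|s_t]] = \EE_{p_{\text{src}},\pi^*_{DARC}}[\sum_t r + \cH[a_t|s_t]]$, where $\pi^*_{DARC}$ denotes the maximizer of the source entropy-regularized modified objective; this matches exactly the quantity appearing in term {\color{red}(1)} of the theorem and in the hypothesis $B$. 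So the first bracket is bounded above by the ``DARC Error Bound in Source'' term.

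For the second bracket, I would invoke Lemma~\ref{GAIL Generalization} (GAIL Generalization) directly: it states precisely that
$\EE_{p_{\text{src}},\pi_{\text{DARC}}}[\sum_t r] - \EE_{p_{\text{trg}},\hat{\zeta}}[\sum_t r]$ is at most
$\|r\|_{\cD}\big[\hat\epsilon + \inf_\zeta d_{\cD}(\hat\tau_{\pi_{\text{DARC}}}^{\text{src}},\hat\tau_{\zeta}^{\text{trg}}) + 2\hat{\cR}^{(m)}_{\tau_{\pi_{\text{DARC}}}^{\text{trg}}} + 2W\hat{\cR}^{(m)}_{\tau_{\hat\zeta}^{\text{trg}}} + (6W+1)\Delta\sqrt{\log(4/\delta)/2m}\big]$, which is exactly term (2). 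Summing the two bounds yields the claim. Thus, at the level of the theorem statement, the proof is essentially a two-line decomposition plus an application of the lemma; the real content lives inside Lemma~\ref{GAIL Generalization}.

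The main obstacle, therefore, is proving Lemma~\ref{GAIL Generalization} itself. I expect this to proceed in four stages. First, relate the reward gap to the neural network distance: since $r \in \operatorname{span}(\cD)$ with compatible coefficient $\|r\|_{\cD}$, the difference of expected (undiscounted, per-step-summed) rewards under the two state--next-state distributions $\tau_{\pi_{\text{DARC}}}^{\text{src}}$ and $\tau_{\hat\zeta}^{\text{trg}}$ is bounded by $\|r\|_{\cD}\, d_{\cD}(\tau_{\pi_{\text{DARC}}}^{\text{src}},\tau_{\hat\zeta}^{\text{trg}})$ (the constant term $c_0$ cancels since both are probability distributions over the same trajectory length). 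Second, pass from the population distance $d_{\cD}$ to the empirical one $d_{\cD}(\hat\tau_{\pi_{\text{DARC}}}^{\text{src}},\hat\tau_{\hat\zeta}^{\text{trg}})$ using a uniform-convergence / Rademacher-complexity argument: the key subtlety is that the $\hat\zeta$-side expectation is taken under the source dynamics with importance weight $\rho \le W$ (via the second form in Definition~\ref{neural network distance}), so the effective function class is $\{\rho\cdot D : D\in\cD\}$, whose Rademacher complexity is bounded by $W$ times that of $\cD$, and whose range is $[-W\Delta, W\Delta]$ — this is where the factors $W$ and the $(6W+1)\Delta$ in the McDiarmid/bounded-differences term come from. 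Third, use the optimality certificate of the imitation learner, $d_{\cD}(\hat\tau_{\pi_{\text{DARC}}}^{\text{src}},\hat\tau_{\hat\zeta}^{\text{trg}}) \le \inf_\zeta d_{\cD}(\hat\tau_{\pi_{\text{DARC}}}^{\text{src}},\hat\tau_{\zeta}^{\text{trg}}) + \hat\epsilon$, to replace the empirical distance at $\hat\zeta$ by the empirical approximation error plus $\hat\epsilon$. Fourth, collect all the concentration terms, apply a union bound over the (two or three) high-probability events so the total failure probability is $\delta$ — hence the $\log(4/\delta)$ — and multiply through by $\|r\|_{\cD}$. The trickiest bookkeeping will be getting the constants in front of $W\Delta\sqrt{\log(4/\delta)/2m}$ right: one must carefully track that the bounded-differences constant for the $\rho D$-class is $2W\Delta/m$ and for the expert $D$-class is $2\Delta/m$, and that $\inf_\zeta d_{\cD}$ appears on both the empirical and population sides, forcing a second application of uniform convergence — which is where the coefficient $6W$ (rather than, say, $2W$) ultimately originates.
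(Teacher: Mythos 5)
Your four-stage sketch for Lemma~\ref{GAIL Generalization} matches the paper's own proof: it likewise uses the optimality certificate to reduce everything to the population-versus-empirical gap of $d_{\cD}$, splits that gap into an expert-side supremum plus a $W$-scaled learner-side supremum over the importance-weighted class, applies McDiarmid's inequality and symmetrization to each piece to produce the empirical Rademacher terms, and finally invokes the linear-span/compatible-coefficient argument to convert the neural network distance into a reward gap with the $\|r\|_{\cD}$ factor. The only slight mismatch is your account of the constant: in the paper the factor of $6$ arises from passing from the population to the empirical Rademacher complexity on each side (not from a second uniform-convergence step for $\inf_\zeta d_{\cD}$, which stays an empirical quantity throughout), and in fact the paper's own intermediate bounds sum to $(6W+6)\Delta\sqrt{\log(4/\delta)/2m}$ rather than the stated $(6W+1)\Delta\sqrt{\log(4/\delta)/2m}$, so the bookkeeping you flag as tricky is indeed loose in the source as well.
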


\begin{proof}
% We can first decompose it into three terms:
% \begin{align*}
%     &\EE_{\pi_{\text{DARC}},P_{source}}(r(s,a)) - \EE_{\hat{\zeta},P_{target}}(r(s,a)) \leq \EE_{\pi_{\text{DARC}},P_{source}}(r(s,a)) - \EE_{\pi_{\text{DARC}},P_{source}}(r(s,a) + \mathcal{H}(a|s))\\
%     & + \EE_{\pi_{\text{DARC}},P_{source}}(r(s,a) + \mathcal{H}(a|s)) - 
% \end{align*}

Given $d_{\cD}(\hat{\tau}_{\pi_{\text{DARC}}}^{\text{src}},\hat{\tau}_{\zeta}^{\text{trg}}) - \inf_{\zeta} d_{\cD}(\hat{\tau}_{\pi_{\text{DARC}}^{\text{src}}},\hat{\tau}_{\zeta}^{\text{trg}}) \leq \hat{\epsilon}$, we can have
\begin{align*}
    d_{\cD}({\tau}_{\pi_{\text{DARC}}}^{\text{src}},{\tau}_{\zeta}^{\text{trg}}) \leq d_{\cD}({\tau}_{\pi_{\text{DARC}}}^{\text{src}},{\tau}_{\zeta}^{\text{trg}})-d_{\cD}(\hat{\tau}_{\pi_{\text{DARC}}}^{\text{src}},\hat{\tau}_{\zeta}^{\text{trg}}) + \inf_{\zeta} d_{\cD}(\hat{\tau}_{\pi_{\text{DARC}}}^{\text{src}},\hat{\tau}_{\zeta}^{\text{trg}}) + \hat{\epsilon}.
\end{align*}
We prove that $d_{\cD}({\tau}_{\pi_{\text{DARC}}}^{\text{src}},{\tau}_{\zeta}^{\text{trg}})-d_{\cD}(\hat{\tau}_{\pi_{\text{DARC}}}^{\text{src}},\hat{\tau}_{\zeta}^{\text{trg}})$ has an upper bound.
\begin{align*}
    &d_{\cD}({\tau}_{\pi_{\text{DARC}}}^{\text{src}},{\tau}_{\zeta}^{\text{trg}})-d_{\cD}(\hat{\tau}_{\pi_{\text{DARC}}}^{\text{src}},\hat{\tau}_{\zeta}^{\text{trg}})\\
    &= \sup_{D\in \cD} \left[\EE_{(s_t, s_{t+1})\sim \tau_{\pi_{\text{DARC}}}^{\text{src}}} [D(s_t, s_{t+1})] - \EE_{(s_t, s_{t+1})\sim \tau_{\zeta}^{\text{trg}}} [D(s_t, s_{t+1})]\right] \\
    & \qquad - \sup_{D\in \cD} \left[\EE_{(s_t, s_{t+1})\sim \hat{\tau}_{\pi_{\text{DARC}}}^{\text{src}}} [D(s_t, s_{t+1})] - \EE_{(s_t, s_{t+1}) \sim \hat{\tau}_{\zeta}^{\text{trg}}} [D(s_t, s_{t+1})]\right]\\
    &\leq \sup_{D\in \cD} \left[\EE_{(s_t, s_{t+1})\sim \tau_{\pi_{\text{DARC}}}^{\text{src}}} [D(s_t, s_{t+1})] - \EE_{(s_t, s_{t+1})\sim \hat{\tau}_{\pi_{\text{DARC}}}^{\text{src}}} [D(s_t, s_{t+1})]\right] \\& \qquad+ \sup_{D\in \cD} \left[\EE_{(s_t, s_{t+1})\sim {\tau}_{\zeta}^{\text{trg}}} [D(s_t, s_{t+1})] - \EE_{(s_t, s_{t+1}) \sim \hat{\tau}_{\zeta}^{\text{trg}}} [D(s_t, s_{t+1})]\right]\\
    &= \sup_{D\in \cD} \left[\EE_{(s_t, s_{t+1})\sim \tau_{\pi_{\text{DARC}}}^{\text{src}}} [D(s_t, s_{t+1})] - \EE_{(s_t, s_{t+1})\sim \hat{\tau}_{\pi_{\text{DARC}}}^{\text{src}}} [D(s_t, s_{t+1})]\right] \\& \qquad+ \sup_{D\in \cD} \left[\EE_{(s_t, s_{t+1})\sim {\tau}_{\zeta}^{\text{src}}} [\rho(s_t, s_{t+1})D(s_t, s_{t+1})] - \EE_{(s_t, s_{t+1}) \sim \hat{\tau}_{\zeta}^{\text{src}}} [\rho(s_t, s_{t+1})D(s_t, s_{t+1})]\right]\\
    &\leq \sup_{D\in \cD} \left[\EE_{(s_t, s_{t+1})\sim \tau_{\pi_{\text{DARC}}}^{\text{src}}} [D(s_t, s_{t+1})] - \EE_{(s_t, s_{t+1})\sim \hat{\tau}_{\pi_{\text{DARC}}}^{\text{src}}} [D(s_t, s_{t+1})]\right] \\
    & \qquad + W\sup_{D\in \cD} \left[\EE_{(s_t, s_{t+1})\sim {\tau}_{\zeta}^{\text{src}}} [D(s_t, s_{t+1})] - \EE_{(s_t, s_{t+1}) \sim \hat{\tau}_{\zeta}^{\text{src}}} [D(s_t, s_{t+1})]\right].
\end{align*}
According to McDiarmid ’s inequality, with probability at least $1-\frac{\delta}{2}$, the following inequality holds
\begin{align*}
    &\sup_{D\in \cD} \left[\EE_{(s_t, s_{t+1})\sim \tau_{\pi_{\text{DARC}}}^{\text{src}}} [D(s_t, s_{t+1})] - \EE_{(s_t, s_{t+1})\sim \hat{\tau}_{\pi_{\text{DARC}}}^{\text{src}}} [D(s_t, s_{t+1})]\right]\\
    & \leq \EE \left[\sup_{D\in \cD} |\EE_{(s_t, s_{t+1})\sim \tau{\pi_{\text{DARC}}}^{\text{src}}} [D(s_t, s_{t+1})] - \EE_{(s_t, s_{t+1})\sim \hat{\tau}_{\pi_{\text{DARC}}}^{\text{src}}} [D(s_t, s_{t+1})|\right] + 2\Delta \sqrt{\frac{log(4/\delta)}{2m}}\\
    &\leq 2\EE_{\sigma,\tau_{\pi_{\text{DARC}}}^{\text{src}}} \left[\sup_{D\in\cD} \sum_{i = 1}^m \frac{1}{m} \sigma_i D(s_i,s_i')\right] + 2\Delta \sqrt{\frac{log(4/\delta)}{2m}}\\
    &\leq 2\cR^{(m)}_{\tau_{\pi_{\text{DARC}}}^{\text{src}}} + 2\Delta \sqrt{\frac{log(4/\delta)}{2m}} \\
    &\leq 2\hat{\cR}^{(m)}_{\tau_{\pi_{\text{DARC}}}^{\text{src}}} + 6\Delta \sqrt{\frac{log(4/\delta)}{2m}}.
\end{align*}
By a similar derivation, we can have 
\begin{align*}
    &W\sup_{D\in \cD} \left[\EE_{(s_t, s_{t+1})\sim \tau_{\zeta}^{\text{src}}} [D(s_t, s_{t+1})] - \EE_{(s_t, s_{t+1})\sim \hat{\tau}_{\zeta}^{\text{src}}} [D(s_t, s_{t+1})]\right]\\
    &\leq 2W\hat{\cR}^{(m)}_{\tau_{\zeta}^{\text{src}}} + 6W\Delta \sqrt{\frac{log(4/\delta)}{2m}}.
\end{align*}
Thus, we have
\begin{align*}
    &d_{\cD}({\tau}_{\pi_{\text{DARC}}}^{\text{src}},{\tau}_{\zeta}^{\text{trg}})-d_{\cD}(\hat{\tau}_{\pi_{\text{DARC}}}^{\text{src}},\hat{\tau}_{\zeta}^{\text{trg}})\\
    & \leq 2\hat{\cR}^{(m)}_{\tau_{\pi_{\text{DARC}}}^\text{src}} +  2W\hat{\cR}^{(m)}_{\tau_{\zeta}^\text{trg}} + (6W+1)\Delta \sqrt{\frac{log(4/\delta)}{2m}}.
\end{align*}
Then, based on Theorem 2 in \cite{xu2020error}, we can conclude that
\begin{align*}
    & \EE_{p_{\text{src}},\pi_{\text{DARC}}}\left[\sum_t r(s_t,a_t)\right] - \EE_{p_{\text{trg}},\hat{\zeta}}\left[\sum_t r(s_t,a_t)\right] \\
    & \leq \|r_{\cD}\| \big[ \inf_\zeta d_{\cD} (\hat{\tau}_{\pi_{\text{DARC}}}^{\text{src}}, \hat{\tau}_{\zeta}^{\text{trg}}) 
     + 2\hat{\cR}^{(m)}_{\tau_{\pi_{\text{DARC}}}^{\text{src}}} +  2W\hat{\cR}^{(m)}_{\tau_{\zeta}^{\text{src}}} 
    + (6W+1)\Delta \sqrt{\frac{log(4/\delta)}{2m}} + \hat{\epsilon} \big].
\end{align*}
This completes the proof.
\end{proof}

\begin{theorem}
\label{theorem: error bound appendix}

Let $\pi^{*} = \argmax_{\pi} \EE_{\pi, p_{\text{trg}}} \left[\sum_t r(s_t, a_t) \right]$ be the policy maximizing the cumulative reward in the target domain and $\hat{\zeta}$ be the policy learned from \algname. Let $m$ be the number of the expert demonstration and $\hat{\cR}^{(m)}_{\pi} = \EE_{\sigma}\left[\sup_{D\in \cD} \frac{1}{m}\sum_{i = 1}^{m} \sigma_i D(s_t,s_{t+1}) \right]$  be the empirical Rademacher complexity. Let $B$ be the error bound of DARC in the source domain, i.e. $\EE_{p_{\text{src}},\pi^*_{DARC}}\left[\sum_t r(s_t, a_t) + \mathcal{H}[a_t|s_t] \right] -\EE_{p_{\text{src}},\pi_{\text{DARC}}}\left[\sum_t r(s_t, a_t) \right] \leq B$ and $W$ be the upper bound of the importance weight, i.e. $\rho(s_t,s_{t+1}) \leq W$, $\forall (s_t,s_{t+1})$. Let discriminator class $\cD$ be a $\Delta$-bounded function, i.e. $|D_\omega(s_t,s_{t+1})|\leq \Delta$ given any $(s_t,s_{t+1})$. $\|r\|_{\cD}$ measures the richness of the discriminator to represent the ground truth reward as defined in Appendix \ref{linear span}. $d_{\cD}$ is a defined neural network distance between the $(s_t,s_{t+1})$ distributions generated by the $\pi_{\text{DARC}}$ and $\pi_{\hat{\zeta}}$ defined in Appendix \ref{neural network distance}.
Given the empirical training error of the imitation learning, i.e.  $d_{\cD}(\hat{\tau}_{\pi_{\text{DARC}}}^{\text{src}},\hat{\tau}_{\hat{\zeta}}^{\text{trg}}) - \inf_{\zeta} d_{\cD}(\hat{\tau}_{\pi_{\text{DARC}}}^{\text{src}},\hat{\tau}_{\zeta}^{\text{trg}}) \leq \hat{\epsilon}$, $\forall$ $\delta \in (0,1)$, with probability at least $1-\delta$, we have  
% {\small
\begin{align*}
&\qquad \EE_{p_{\text{trg}},\pi^{*}}\left[\sum_t r(s_t,a_t)\right] - \EE_{p_{\text{trg}},\hat{\zeta}}\left[\sum_t r(s_t,a_t)\right]   \\
&\leq \underbrace{ \EE_{p_{\text{src}},\pi^*_{DARC}}\left[\sum_t r(s_t, a_t) + \mathcal{H}[a_t|s_t] \right] -\EE_{p_{\text{src}},\pi_{\text{DARC}}}\left[\sum_t r(s_t, a_t) \right]}_{\textsc{DARC Error Bound in Source}}\\
& + \underbrace{ \|r\|_{\cD} \bigg[\hat{\epsilon} + \underbrace{\inf_\zeta d_{\cD} ( \hat{\tau}_{\pi_{\text{DARC}}}^\text{src}, \hat{\tau}_{\hat{\zeta}}^{\text{trg}})}_{\textsc{Approximation Error}}+\underbrace{2\hat{\cR}^{(m)}_{\tau_{\pi_{\text{DARC}}}^{\text{trg}}} +  2W\hat{\cR}^{(m)}_{\tau_{\hat{\zeta}}^{\text{trg}}}   + (6W+1)\Delta \sqrt{\frac{log(4/\delta)}{2m}}}_{\textsc{Estimation Error}}  \bigg]}_{\textsc{Imitation Learning Error Bound}}.
\end{align*}
\end{theorem}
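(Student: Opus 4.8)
The plan is to split the target suboptimality of $\hat\zeta$ into three additive pieces by a telescoping identity, so that each piece matches one block of the claimed bound. Writing $J_{\text{trg}}(\pi)=\EE_{p_{\text{trg}},\pi}[\sum_t r(s_t,a_t)]$ and $J^{\cH}_{\text{src}}(\pi)=\EE_{p_{\text{src}},\pi}[\sum_t r(s_t,a_t)+\cH[a_t|s_t]]$, I would write
\begin{align*}
J_{\text{trg}}(\pi^{*})-J_{\text{trg}}(\hat\zeta)
&=\underbrace{\big(J_{\text{trg}}(\pi^{*})-J^{\cH}_{\text{src}}(\pi^{*}_{DARC})\big)}_{(\mathrm{I})}
+\underbrace{\big(J^{\cH}_{\text{src}}(\pi^{*}_{DARC})-\EE_{p_{\text{src}},\pi_{\text{DARC}}}[\textstyle\sum_t r]\big)}_{(\mathrm{II})}\\
&\quad+\underbrace{\big(\EE_{p_{\text{src}},\pi_{\text{DARC}}}[\textstyle\sum_t r]-J_{\text{trg}}(\hat\zeta)\big)}_{(\mathrm{III})}.
\end{align*}
Term $(\mathrm{II})$ is already exactly the ``DARC Error Bound in Source'' appearing in the statement (and is $\le B$ by assumption), so there is nothing to do there.

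For term $(\mathrm{III})$ I would apply Lemma~\ref{GAIL Generalization} verbatim with expert $\pi_{\text{DARC}}$ and learned policy $\hat\zeta$; this is valid because the imitation step of Algorithm~\ref{alg:main} is exactly a GAIfO instance on $(s_t,s_{t+1})$ pairs with importance weight $\rho\le W$. It gives, with probability at least $1-\delta$, $(\mathrm{III})\le\|r\|_{\cD}\big[\inf_{\zeta}d_{\cD}(\hat\tau^{\text{src}}_{\pi_{\text{DARC}}},\hat\tau^{\text{trg}}_{\hat\zeta})+2\hat\cR^{(m)}_{\tau^{\text{src}}_{\pi_{\text{DARC}}}}+2W\hat\cR^{(m)}_{\tau^{\text{src}}_{\hat\zeta}}+(6W+1)\Delta\sqrt{\log(4/\delta)/(2m)}+\hat\epsilon\big]$, which is the ``Imitation Learning Error Bound'' block. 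The only facts I reuse from the lemma's proof are: the neural-network distance $d_{\cD}$ of Definition~\ref{neural network distance} already carries the dynamics mismatch through $\rho$, so generator rollouts collected in the source cost only a factor $W$; the gap between population and empirical $d_{\cD}$ concentrates via McDiarmid's inequality and symmetrization into empirical Rademacher complexities; and the passage from a bound on $d_{\cD}$ to a bound on the reward gap is Theorem~2 of \citet{xu2020error} through the compatible coefficient $\|r\|_{\cD}$.

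The substantive step is $(\mathrm{I})\le 0$, i.e. $J_{\text{trg}}(\pi^{*})\le J^{\cH}_{\text{src}}(\pi^{*}_{DARC})$, which I would obtain from the control-as-inference structure underlying DARC. First, $J_{\text{trg}}(\pi^{*})\le\EE_{p_{\text{trg}},\pi^{*}}[\sum_t r+\cH]$ by non-negativity of entropy, and under the exponential-reward modeling of $\pi^{*}$ in Eq.~\eqref{eq: source distribution} the right-hand side equals the log-partition $c=\log Z$ of $p(\tau^{\text{trg}}_{\pi^{*}})$. Second, by Eq.~\eqref{eq: no exploit} the policy $\pi^{*}_{DARC}$ maximizes $J^{\cH}_{\text{src}}(\cdot)$ over $\Pi_{\text{no exploit}}$ and, equivalently, $\EE_{p_{\text{src}},\pi}[\sum_t r+\Delta r+\cH]$; since $\EE_{p_{\text{src}},\pi^{*}_{DARC}}[\sum_t\Delta r]=-\sum_t\EE_{\pi^{*}_{DARC}}[\cD_{\text{KL}}(p_{\text{src}}(\cdot|s_t,a_t)\,\|\,p_{\text{trg}}(\cdot|s_t,a_t))]\le 0$, dropping $\Delta r$ only increases the objective, so $J^{\cH}_{\text{src}}(\pi^{*}_{DARC})\ge\EE_{p_{\text{src}},\pi^{*}_{DARC}}[\sum_t r+\Delta r+\cH]$. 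Finally, the reverse-KL formulation Eq.~\eqref{eq:darc loss} gives $\EE_{p_{\text{src}},\pi^{*}_{DARC}}[\sum_t r+\Delta r+\cH]=\log Z-\min_{\pi}\cD_{\text{KL}}(q(\tau^{\text{src}}_{\pi})\,\|\,p(\tau^{\text{trg}}_{\pi^{*}}))$, and because the $\Delta r$ mass absorbed above is at least this minimized KL (the residual being the non-negative policy-level KL between $\pi^{*}_{DARC}$ and $\pi^{*}$, which vanishes in the exact-matching regime DARC targets), one concludes $J^{\cH}_{\text{src}}(\pi^{*}_{DARC})\ge\log Z\ge J_{\text{trg}}(\pi^{*})$. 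Summing the three bounds proves the theorem; no union bound is needed since $(\mathrm{I})$ and $(\mathrm{II})$ are deterministic.

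The genuine obstacle is exactly term $(\mathrm{I})$: replacing $J_{\text{trg}}(\pi^{*})$ by the DARC source objective is precisely what removes the assumption $\pi^{*}\in\Pi_{\text{no exploit}}$ of \citet{eysenbach2020off}, but it rests on (i) the control-as-inference identification of $c$ with the target soft-optimal value, including bookkeeping of the policy-independent action-prior normalization hidden in $c$, (ii) non-negativity (or a lower bound) of the policy entropy, and (iii) the DARC distribution-matching objective being driven close enough to its optimum that the residual policy-level KL is dominated. Everything else is mechanical: the decomposition is an exact identity and $(\mathrm{III})$ is a black-box invocation of Lemma~\ref{GAIL Generalization}.
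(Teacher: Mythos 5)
Your proposal follows essentially the same route as the paper: the identical three-term decomposition, term $(\mathrm{II})$ identified verbatim with the DARC source error, and term $(\mathrm{III})$ handled by a black-box application of Lemma~\ref{GAIL Generalization}. The only divergence is term $(\mathrm{I})$, which the paper dispatches in one sentence by asserting $I_1=0$ because $\pi^*_{DARC}$ generates the target-optimal trajectories in the source domain; your longer control-as-inference argument for $(\mathrm{I})\le 0$ is more explicit but rests on the same unproved premises (non-negativity of the policy entropy, which can fail for differential entropy in continuous action spaces, and near-exactness of the DARC distribution matching), so it neither adds nor loses rigor relative to the paper's own treatment.
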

\begin{proof}
We can first decompose it into three terms:
\begin{align*}
    &\EE_{p_{\text{trg}},\pi^{*}}\left[\sum_t r(s_t,a_t)\right] - \EE_{p_{\text{trg}},\hat{\zeta}}\left[\sum_t r(s_t,a_t)\right]\\
     &=\underbrace{\mathrm{E}_{p_{\text{trg}}, \pi^*} \left[\sum_t r(s_t, a_t) \right] - \EE_{p_{\text{src}},\pi^*_{\text{DARC}}}\left[\sum_t r(s_t, a_t) + \mathcal{H}(a_t|s_t)\right] }_{I_1}\\
     &\qquad+ \underbrace{\EE_{p_{\text{src}},\pi^*_{\text{DARC}}}\left[\sum_t r(s_t, a_t) +  \mathcal{H}[a_t|s_t] \right] -\EE_{p_{\text{src}},\pi_{\text{DARC}}}\left[\sum_t r(s_t, a_t) \right]}_{I_2} \\
     &\qquad+\underbrace{\EE_{p_{\text{src}},\pi_{\text{DARC}}}\left[\sum_t r(s_t, a_t) \right]
     - \mathrm{E}_{p_{\text{trg}},\hat{\zeta}}\left[\sum r(s_t, a_t)\right]}_{I_3}.
\end{align*}
Based on the formulation, $\pi^*_{DARC}$ can generate optimal trajectories for the target domain in the source domain so that $I_1 = 0$. Also, the $I_2$ term is the training error of the DARC and the entropy term of the optimal DARC policy, and we can assume together they are bounded by $B$. Then, we only need to bound the $I_3$ terms. Combining Lemma \ref{GAIL Generalization}, we have 
\begin{align*}
&\EE_{p_{\text{trg}},\pi^{*}}\left[\sum_t r(s_t,a_t)\right] - \EE_{p_{\text{trg}},\hat{\zeta}}\left[\sum_t r(s_t,a_t)\right]  \\
&\leq\underbrace{ \EE_{p_{\text{src}},\pi^*_{\text{DARC}}}\left[\sum_t r(s_t, a_t) + \mathcal{H}[a_t|s_t] \right] -\EE_{p_{\text{src}},\pi_{\text{DARC}}}\left[\sum_t r(s_t, a_t) \right]}_{\textsc{DARC Error Bound in Source}}\\
& \qquad+ \underbrace{ \|r\|_{\cD} \bigg[\hat{\epsilon} + \underbrace{\inf_\zeta d_{\cD} ( \hat{\tau}_{\pi_{\text{DARC}}}^\text{src}, \hat{\tau}_{\hat{\zeta}}^{\text{trg}})}_{\textsc{Approximation Error}}+\underbrace{2\hat{\cR}^{(m)}_{\tau_{\pi_{\text{DARC}}}^{\text{trg}}} +  2W\hat{\cR}^{(m)}_{\tau_{\hat{\zeta}}^{\text{trg}}}   + (6W+1)\Delta \sqrt{\frac{log(4/\delta)}{2m}}}_{\textsc{Estimation Error}}  \bigg]}_{\textsc{Imitation Learning Error Bound}}.
\end{align*}
\end{proof}

\section{Additional Experimental Details and Results}
Code is available at \href{https://github.com/guoyihonggyh/Off-Dynamics-Reinforcement-Learning-via-Domain-Adaptation-and-Reward-Augmented-Imitation}{https://github.com/guoyihonggyh/Off-Dynamics-Reinforcement-Learning-via-Domain-Adaptation-and-Reward-Augmented-Imitation}.

\subsection{Estimation of $\Delta r(s_t,a_t,s_{t+1})$ and importance weight $\frac{ p_{\text{trg}}(s_{t+1}|s_t, a_t)}{p_{\text{src}}(s_{t+1}|s_t, a_t)}$}
\label{estimation of importance weight}
Following the DARC \citep{eysenbach2020off}, the importance weight can be estimated with the following two binary classifiers $p(\text{trg}|s_t,a_t)$ and $p(\text{trg}|s_t,a_t,s_{t+1})$ with Bayes' rules:
\begin{align}
\label{p(t|s,a,s')}
    p(\text{trg}|s_t, a_t,s_{t+1}) = p_{\text{trg}}(s_{t+1}|s_t,a_t)p(s_t,a_t|\text{trg})p(\text{trg})/p(s_t,a_t,s_{t+1}),
\end{align}
\begin{align}
\label{p(s,a|t)}
    p(s_t,a_t|\text{trg}) = p(\text{trg}|s_t,a_t)p(s_t,a_t)/p(\text{trg}).
\end{align}

Replacing the $p(s_t,a_t|\text{trg})$ in Eq. \eqref{p(t|s,a,s')} with Eq. \eqref{p(s,a|t)}, we obtain:
\begin{align*}
    p_\text{trg}(s_{t+1}|s_t,a_t) = \frac{p(\text{trg}|s_t,a_t,s_{t+1})p(s_t,a_t,s_{t+1})}{p(\text{trg}|s_t,a_t)p(s_t,a_t)}.
\end{align*}
Similarly, we can obtain the $p_{\text{src}}(s_{t+1}|s_t,a_t) = \frac{p(\text{src}|s_t,a_t,s_{t+1})p(s_t,a_t,s_{t+1})}{p(\text{src}|s_t,a_t)p(s_t,a_t)}$.

We can calculate the $\Delta r(s_t,a_t,s_{t+1})$ following:
\begin{align*}
     \rho(s_t,s_{t+1}) &=  \log\left(\frac{ p_{\text{trg}}(s_{t+1}|s_t, a_t)}{p_{\text{src}}(s_{t+1}|s_t, a_t)}\right)\\
    & = \log p(\text{trg}|s_t,a_t,s_{t+1}) - \log p(\text{trg}|s_t,a_t)+ \log p(\text{src}|s_t,a_t,s_{t+1}) - \log p(\text{src}|s_t,a_t).
\end{align*}

$\rho(s_t,s_{t+1})$ can be obtained from $\rho(s_t,s_{t+1}) = \exp\left[ \Delta r(s_t,a_t,s_{t+1})\right]$

\textbf{Training the classifier $p(\text{trg}|s_t,a_t)$ and $p(\text{trg}|s_t,a_t,s_{t+1})$} The two classifiers are parameterized bu $\theta_{\text{SA}}$ and $\theta_{\text{SAS}}$. To update the two classifiers, we sample one mini-batch of data from the source replay buffer $D_{\text{src}}^\zeta$  and the target replay buffer $D_{\text{src}}^\zeta$ respectively. Imbalanced data is considered here as each time we sample the same amount of data from the source and target domain buffer. Then, the parameters are learned by minimizing the standard cross-entropy loss:
\begin{align*}
    \mathcal{L}_{\text{SAS}} &= - \EE_{\cD_{\text{src}}^\zeta}\left[\log p_{\theta_{\text{SAS}}}(\text{trg}|s_t,a_t,s_{t+1}) \right] - \EE_{\cD_{\text{trg}}^\zeta}\left[\log p_{\theta_{\text{SAS}}}(\text{trg}|s_t,a_t,s_{t+1}) \right], \\
    \mathcal{L}_{\text{SA}} &= - \EE_{\cD_{\text{src}}^\zeta}\left[\log p_{\theta_{\text{SA}}}(\text{trg}|s_t,a_t,s_{t+1}) \right] - \EE_{\cD_{\text{trg}}^\zeta}\left[\log p_{\theta_{\text{SA}}}(\text{trg}|s_t,a_t,s_{t+1}) \right]. \\
\end{align*}
Thus, $\theta = (\theta_{\text{SAS}}, \theta_{\text{SA}})$ is obtained from:
\begin{align*}
    \theta &= \argmin_{\theta} \cL_{CE}(\cD_{\text{src}}^\zeta, \cD_{\text{trg}}^\zeta)\\
    & = \argmin_{\theta} [\mathcal{L}_{\text{SAS}}  + \mathcal{L}_{\text{SA}}]
\end{align*}
\subsection{Description of Baseline Methods}
\label{appendix: baseline description}
% \angie{clarify the two versions of the IS, start with the method and then talk about the two IS, change name BTW}
\textbf{Importance Sampling for Reward (IS-R)} With $(s_t, a_t, s_{t+1})$ from the source domain, the IS-R directly re-weight the reward in each transition. We can view IS-R as learning the SAC with rewards $\frac{p_{\text{trg}}(s_{t+1}|s_t,a_t)}{p_{\text{src}} (s_{t+1}|s_t,a_t)} r_t(s_t,a_t)$ and seeking to maximize the following objective:
\begin{align*}
\max_{\pi}\EE_{(s_t,a_t,s_{t+1})\sim\pi(\cdot|s_t)\times p_{\text{src}}(\cdot|s_t,a_t)} \left[\sum_t \frac{p_{\text{trg}}(s_{t+1}|s_t,a_t)}{p_{\text{src}} (s_{t+1}|s_t,a_t)} r_t(s_t,a_t)\right].
\end{align*}

\textbf{Importance Sampling for SAC Actor and Critic Loss (IS-ACL)} Another way of doing importance sampling is by re-weighting the actor and critic loss in SAC. The loss for the Q-network in SAC becomes:
\begin{align*}
\min_{\phi} \EE_{(s_t,a_t,s_{t+1})\sim\pi(\cdot|s_t)\times p_{\text{src}}(\cdot|s_t,a_t)} \left[\frac{p_{\text{trg}}(s_{t+1}|s_t,a_t)}{p_{\text{src}} (s_{t+1}|s_t,a_t)}(Q_\phi(s_t,a_t)-y(s_t,a_t,d))^2\right]
\end{align*}
where $d$ is the done signal, and the target is given by:
\begin{align*}
    y(s_t,a_t,d) = r + \gamma (1-d) \left[\min_{j = 1, 2}Q_{\text{trg}, j}(s_{t+1},a') - \alpha \log \pi(a'|s_{t+1})\right], a' \sim \pi(a|s_{t+1}).
\end{align*}
The actor loss is:
\begin{align*}
    \max_\pi \EE_{a\sim \pi} \frac{p_{\text{trg}}(s_{t+1}|s_t,a_t)}{p_{\text{src}} (s_{t+1}|s_t,a_t)}\left[Q^\pi (s,a) - \alpha \log \pi(a|s)\right].
\end{align*}

\textbf{DAIL} In DARAIL, the policy is optimized with the reward estimator $R_{AE}$ with the true reward from the source domain. We also want to compare the vanilla imitation learning with importance weight. The objective is: 
\begin{small}
\begin{align}
\label{eq:prob_imitation_learning_appendix}
    &\min_{\zeta}\max_{D_{\omega}} \bigg\{
    \mathbb{E}_{p_{\text{src}},\zeta}\bigg[\sum_t \rho(s_t, s_{t+1}) \log D_{\omega}(s_t,s_{t+1})\bigg]+\mathbb{E}_{(s_t,s_{t+1})\sim\tau^{\text{src}}_{\pi_{\text{DARC}}}} \bigg[\sum_t \log(1-D_{\omega}(s_t,s_{t+1}))\bigg]\bigg\},
\end{align} 
\end{small}
Then, following the Eq.\eqref{eq:prob_imitation_learning_appendix}, the objective of policy optimization without the reward estimator is:
\begin{align}
\label{SAC update with wt}
\max_{\zeta}\mathbb{E}_{ p_{\text{src}},\zeta} \bigg[\sum_t -\rho (s_t,s_{t+1})\log D_{\omega}(s_t,s_{t+1})\bigg].
\end{align}
We can view it as a reduced version of our proposed method, which uses the reward function provided by the discriminator and importance weight. 

\textbf{MBPO} \citep{janner2019trust}. MBPO is a model-based RL method. We train the MBPO in the source domain and deploy it to the target domain.

\textbf{MATL} \citep{wulfmeier2017mutual}. MATL modified the reward on both the source and target domains and aligned the trajectories on both domains. Unlike our method, they need access to the reward from the target domain. 

\textbf{GARAT}\citep{desai2020imitation} GARAT is a grounded action transformation approach that simulates target transitions $(s_t,a_t,s_{t+1},r)$ in the source domain with modified action, where the modified action is learned from imitation learning.

\subsection{Broken with probability $p_f$}
\label{section: broken p}
As discussed, we use the \textit{broken with probability} for Ant and Walker2d. The dynamics shift created by freezing one action varies across environments. For instance, in the Ant robot, the $0$-index controls the rotor between the torso and front left hip, while in the HalfCheetah, the $0$-index  controls the back thigh rotor. So, the broken Ant experiences a larger shift than the broken HalfCheetah if we break the $0$-index for both environments. Also, the broken environment in Walker2d and Ant creates such a large dynamics shift that it is overly difficult to adapt from the source domain, i.e., DARC cannot obtain the optimal reward in the source domain. We then introduce the \emph{broken with probability $p_f$} to better control the magnitude of dynamics shift. \emph{Broken with probability $p_f$} means the $0$-index action is frozen with probability $p_f$ and follows the commanded torque with probability $1-p_f$. In Reacher and HalfCheetah, the source environment is broken with probability $1$. Ant and Walker2d's source domain is broken with a probability of $0.8$. 

Figure \ref{fig: bp0.81.0.pdf} shows the performance of DARC in Ant and Walker2d under different broken probability $p_f$ in the source domain. We can observe that when $p_f = 1.0$, the performance degradation of evaluating in the target domain is larger than the $p_f = 0.8$ case. Also, when $p_f = 1.0$, the DARC evaluation performance in the target domain is close to 0. Moreover, we notice that in the $p_f = 1.0$ case, DARC training performance in the source domain receives a much lower reward than the $p_f = 0.8$ case. However, we want to mimic the DARC behavior in the imitation learning, so we want DARC to be able to receive optimal reward in the source domain. Thus, for the Ant and Walker2d environment, we choose $p_f = 0.8$ for the source domain.
\begin{figure*}[ht]
    \centering
    \setlength{\tabcolsep}{0pt}
    \begin{tabular}{cccc}

    \includegraphics[height=0.17\textwidth]{Fig/darc/a_darc.pdf}&
    \includegraphics[height=0.17\textwidth]{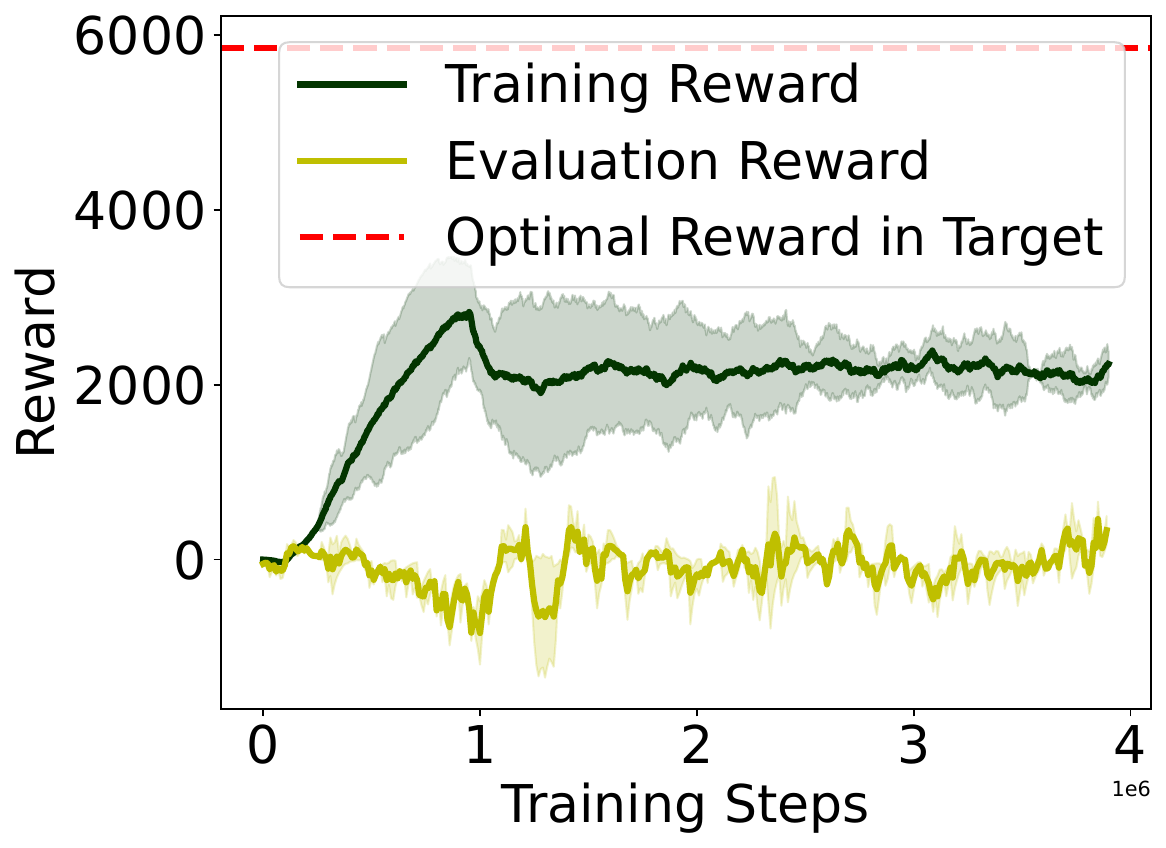}&
    \includegraphics[height=0.17\textwidth]{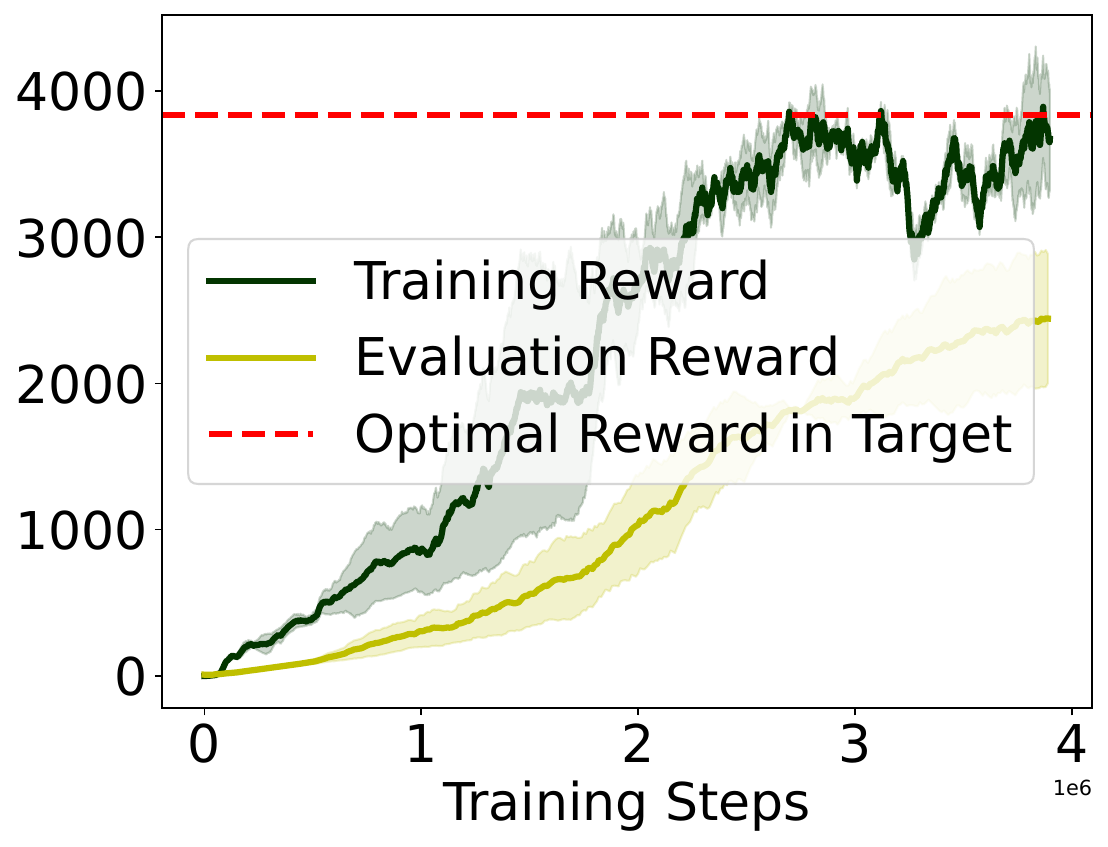}&
    \includegraphics[height=0.17\textwidth]{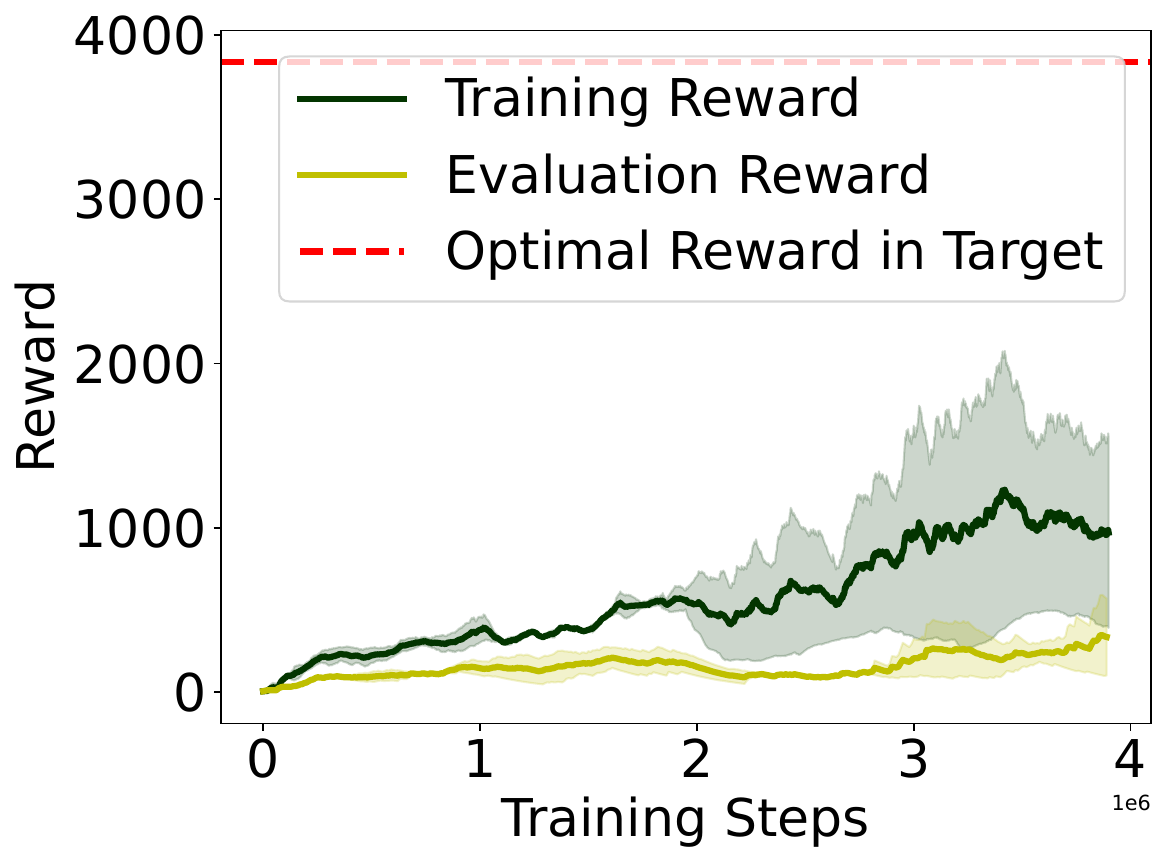}\\
    (a) Ant, $p_f = 0.8$  & (b) Ant, $p_f = 1$  & (c) Walker2d,$p_f = 0.8$ & (d) Walker2d, $p_f = 1$  \\
    \end{tabular}
    \caption{Training reward in the source domain, i.e. $\EE_{\pi_{\text{DARC},p_{\text{src}}}} [\sum_t r(s_t,a_t)]$, and evaluation reward in the target domain , i.e. $\EE_{\pi_{\text{DARC},p_{\text{trg}}}} [\sum_t r(s_t,a_t)]$, for DARC in Ant and Walker2d with different broken probability $p_f$ in the source domain. (a) and (c) shows the performance of DARC under  $p_f = 0.8$, and  (a) and (c) shows the performance of DARC under  $p_f = 1.0$. The performance of DARC under $p_f = 1.0$ is much worse than the case $p_f = 0.8$, and the performance gap between DARC in the source and target is larger, showing that the dynamics shift is overly large to adapt and learn a good expert demonstration. } %\yyshi{Maybe put the legend in the figure subplot or under the four plots rather than in the last one?}
    \label{fig: bp0.81.0.pdf}
    \vspace{-0.05in}
\end{figure*}

\newpage
\subsection{Training Curve of the DARAIL and Baselines}
\label{appendix: training curve}
We show the training curve of DARAIL and baselines in different environments under the broken source environment setting in Figure \ref{fig:source-il} corresponding to the result in Table \ref{table: exp result broken src}. We also show the training curve of modifying the configuration in Figure \ref{fig: g} and \ref{fig: d}. 
\begin{figure*}[ht]
    \centering
    \setlength{\tabcolsep}{0pt}
    \includegraphics[width=1\textwidth]{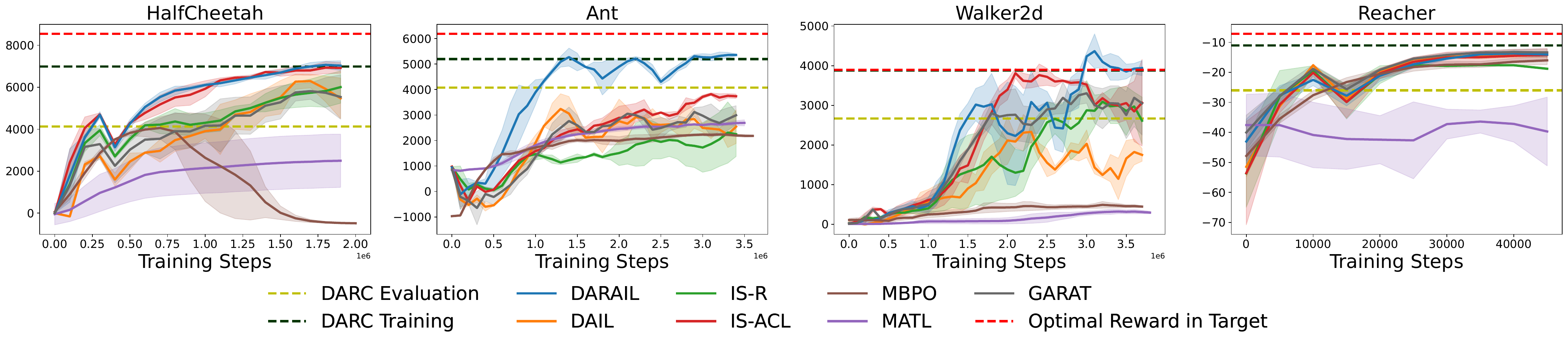}
    % \begin{tabular}{cccc}
    %      \includegraphics[height=0.172\textwidth]{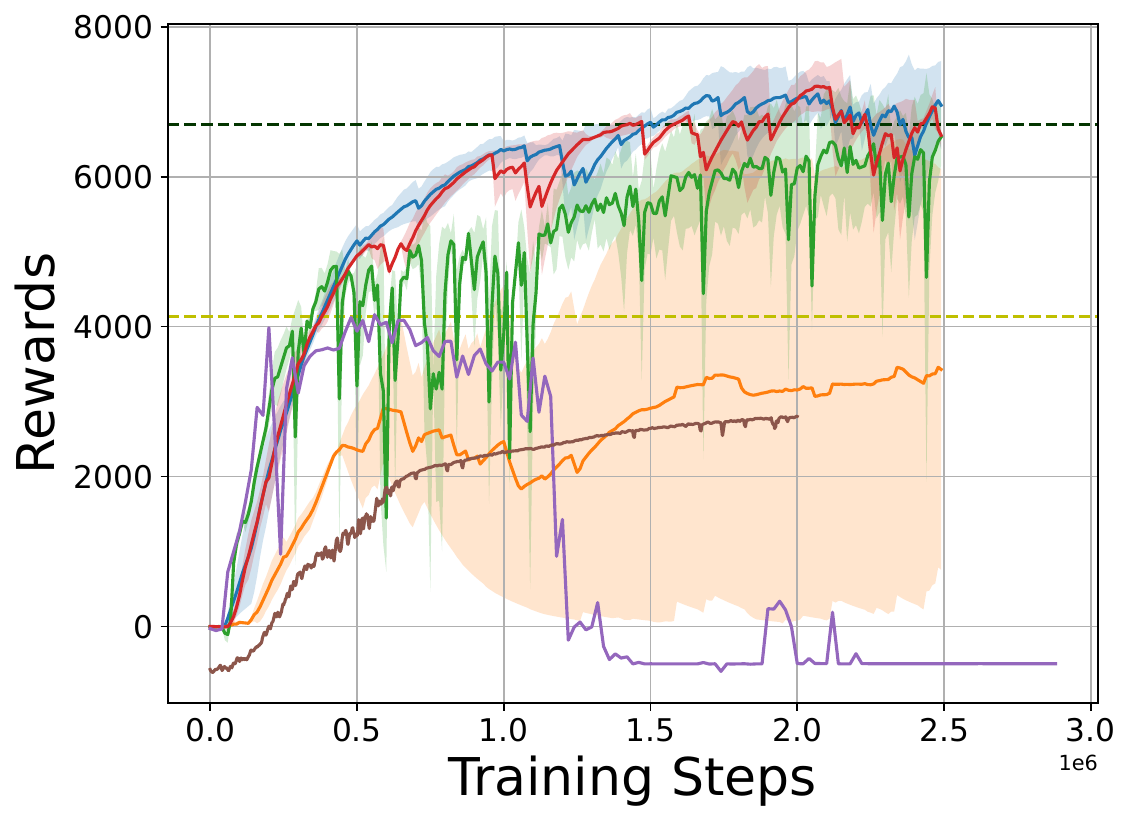}&
    %      \includegraphics[height=0.172\textwidth]{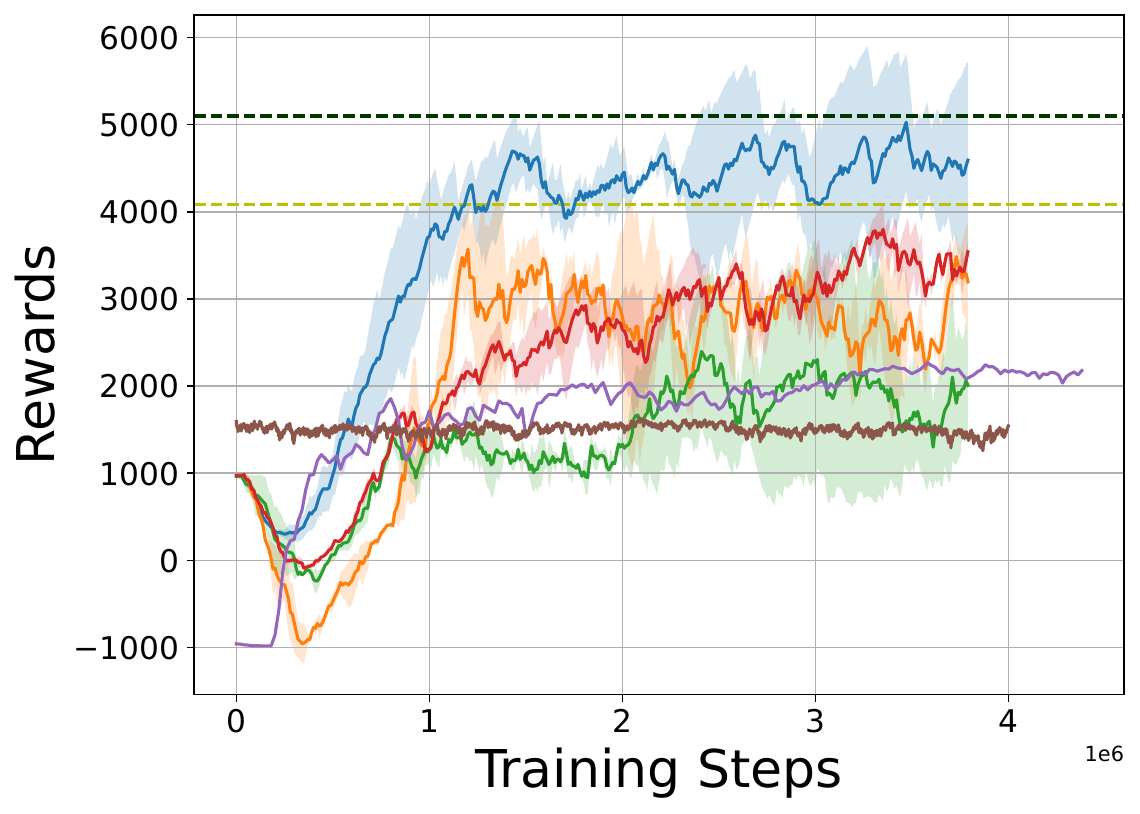}&
    %      \includegraphics[height=0.172\textwidth]{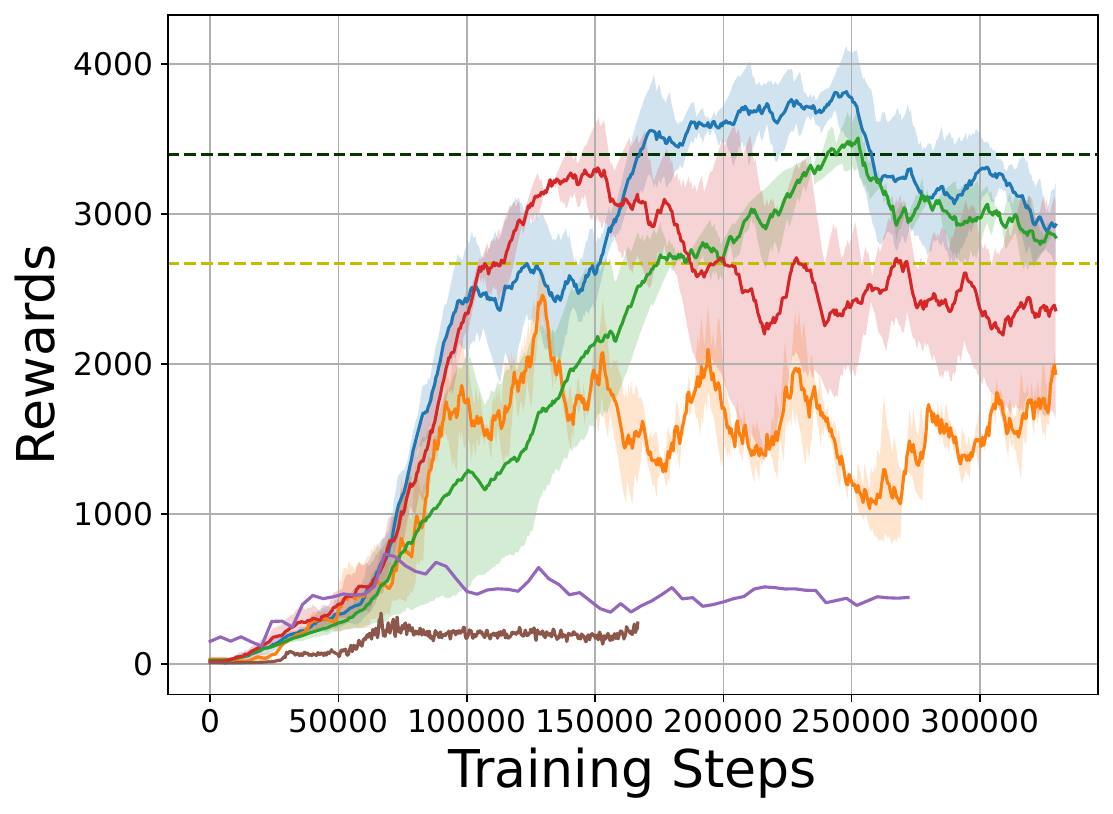}&
    %      \includegraphics[height=0.172\textwidth]{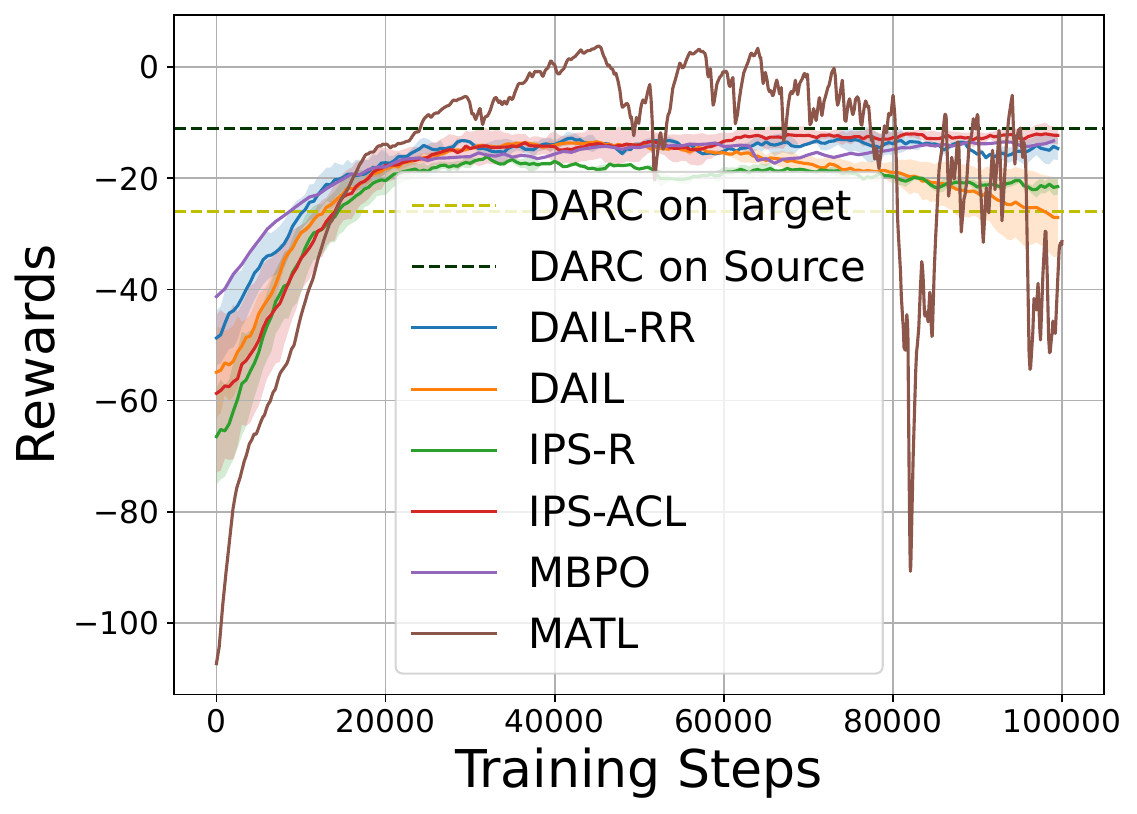}\\
    %       (a) HalfCheetah-v2 & (b) Ant-v2 &  (c) Walker2d-v2  & (d) Reacher-v2 
    % \end{tabular}
    \caption{Upper horizon line: DARC reward in the source domain. Lower horizon line: DARC reward in the target domain. The figures show the mean value of multiple runs and the standard deviation. The figure shows that our proposed method performs better than DARC in the target domain and other baseline methods. 
    }
    \label{fig:source-il}
    %\vspace{-0.05in}
\end{figure*}

\begin{figure*}[ht]
    \centering
    \setlength{\tabcolsep}{0pt}
    \includegraphics[width=1\textwidth]{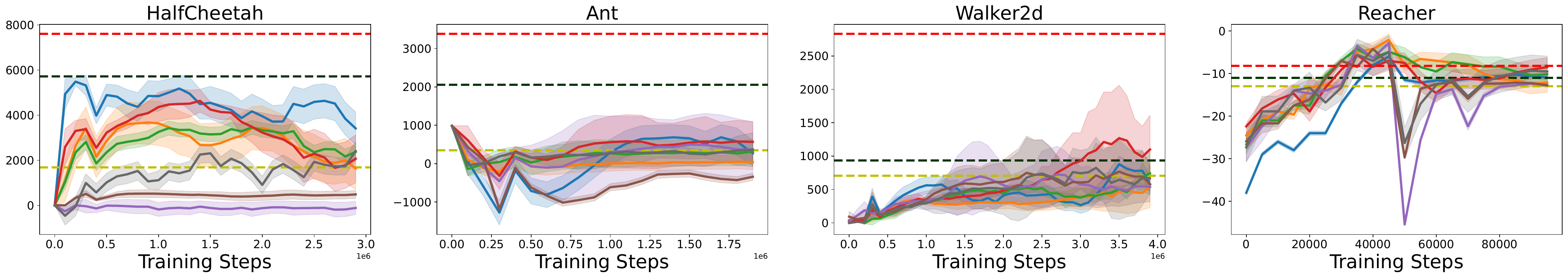}\\
    \includegraphics[width=1\textwidth]{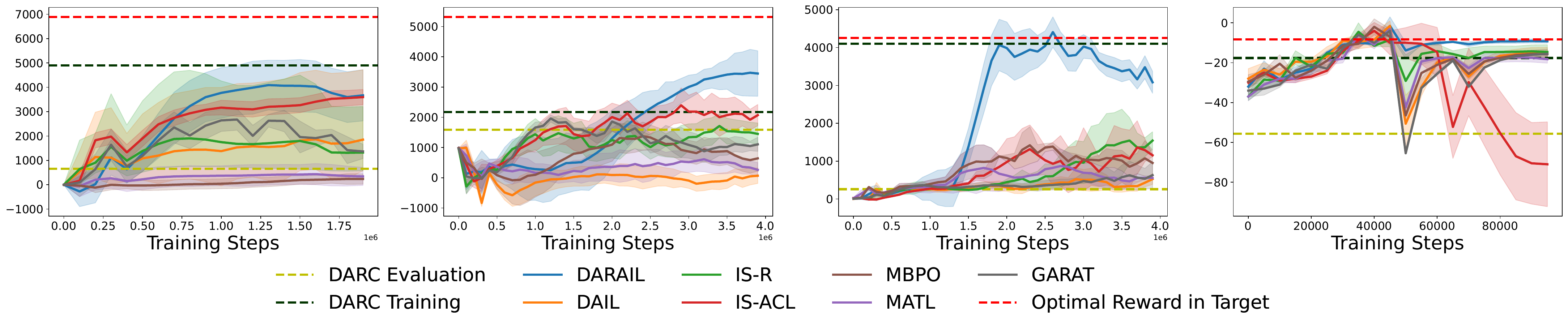}
    % \begin{tabular}{cccc}
    %      \includegraphics[height=0.172\textwidth]{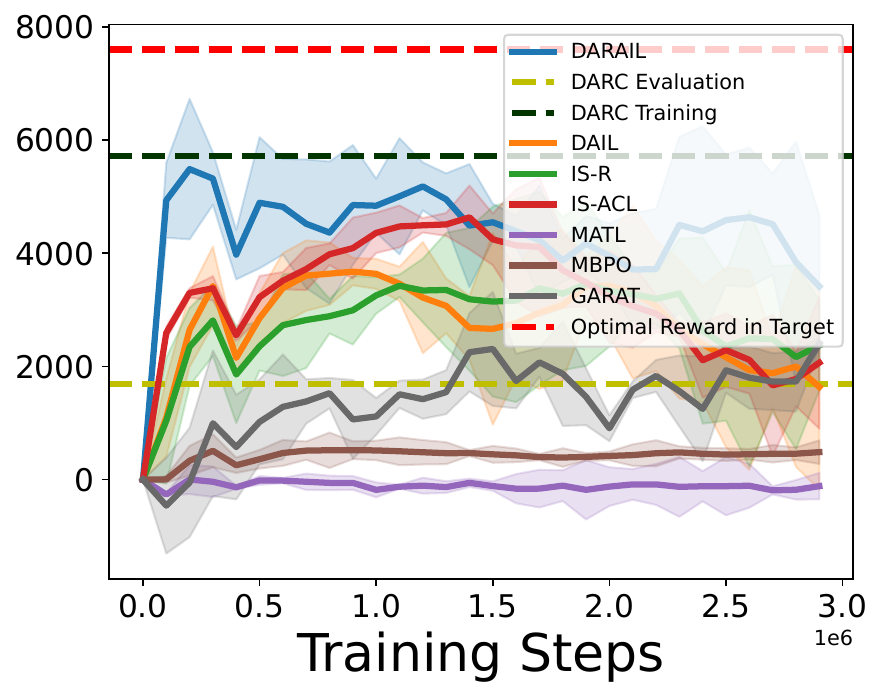}&
    %      \includegraphics[height=0.172\textwidth]{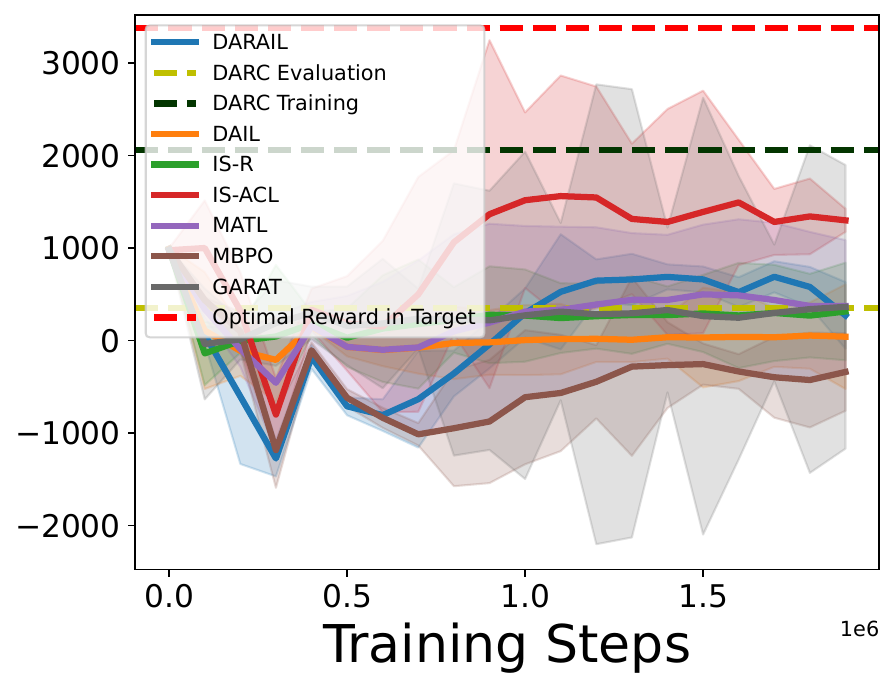}&
    %      \includegraphics[height=0.172\textwidth]{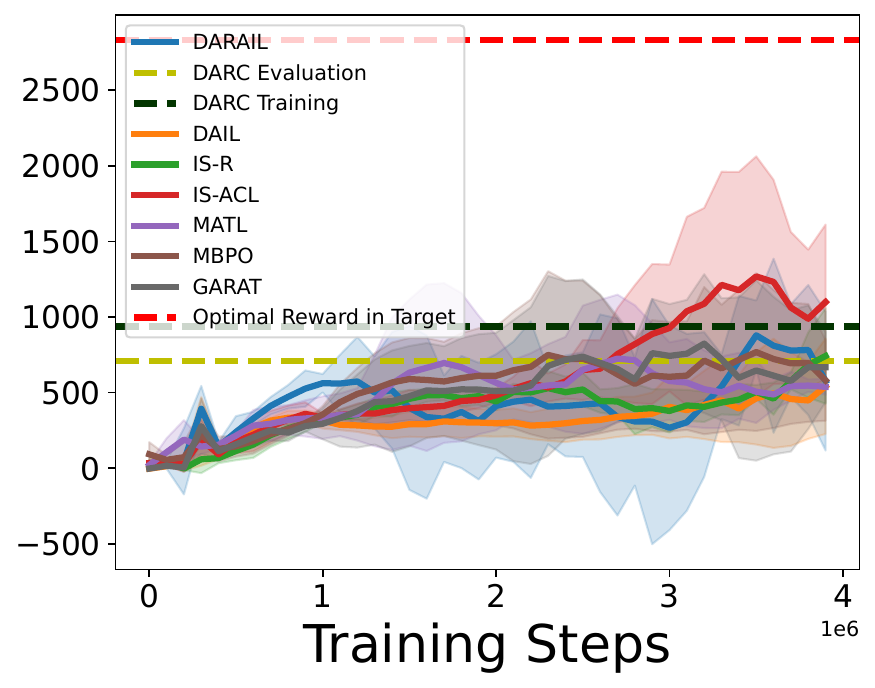}&
    %      \includegraphics[height=0.172\textwidth]{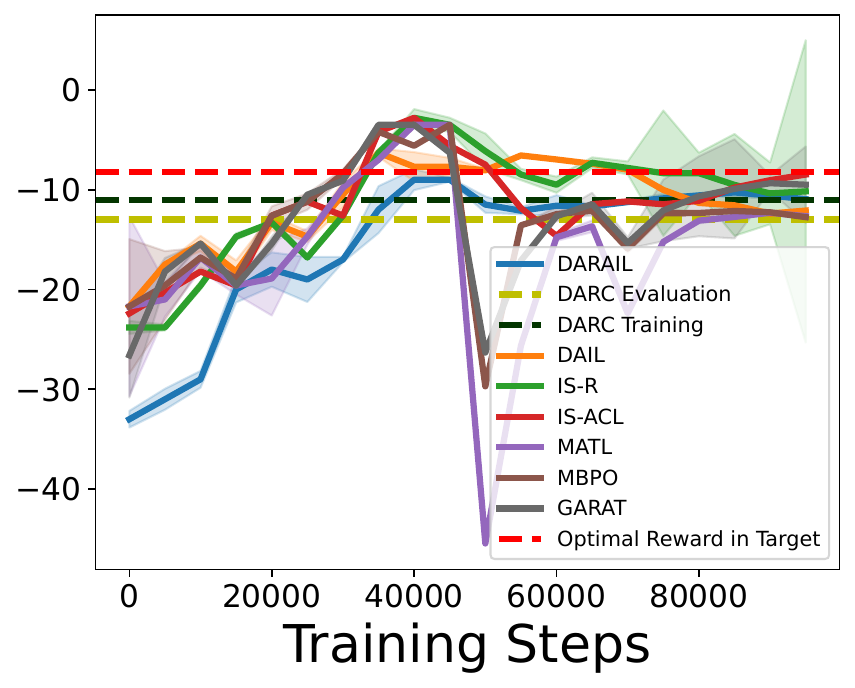}\\
    %      \includegraphics[height=0.172\textwidth]{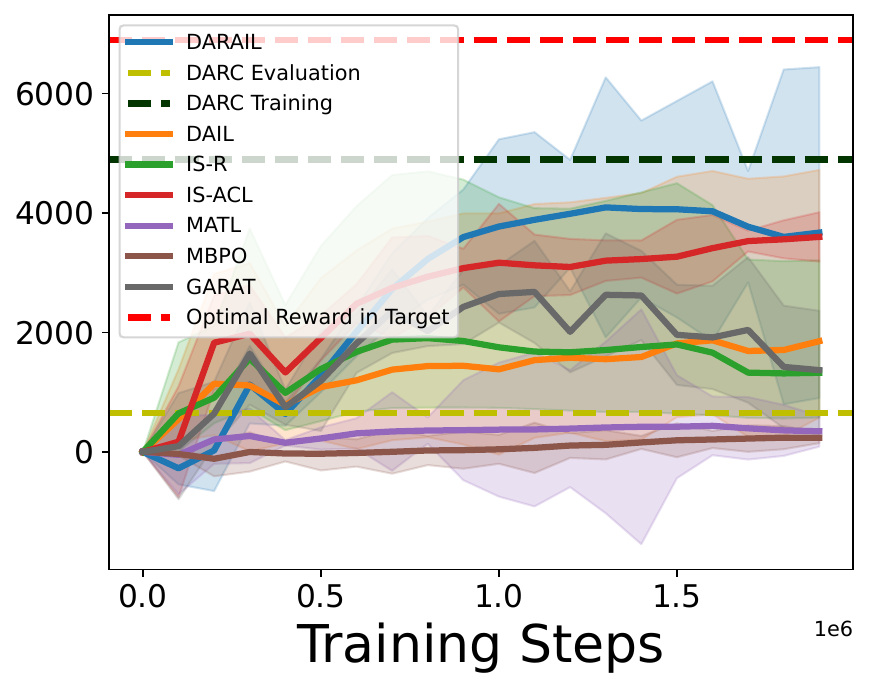}&
    %      \includegraphics[height=0.172\textwidth]{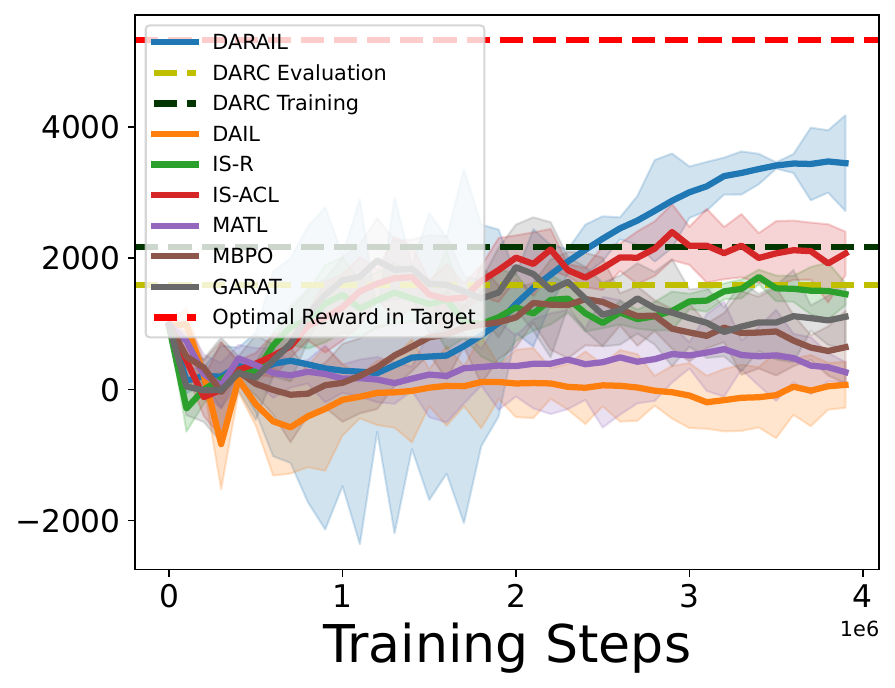}&
    %      \includegraphics[height=0.172\textwidth]{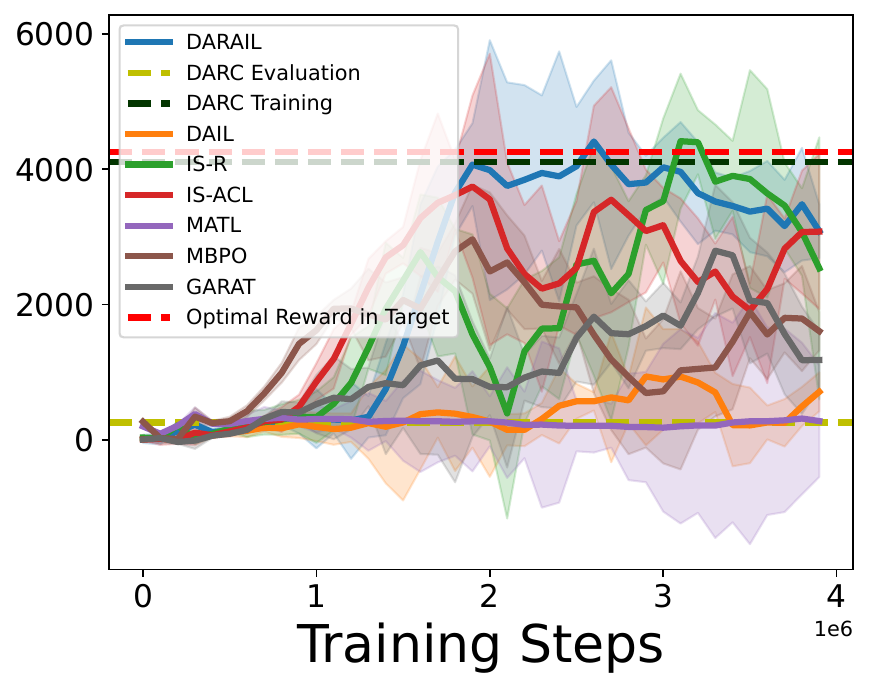}&
    %      \includegraphics[height=0.172\textwidth]{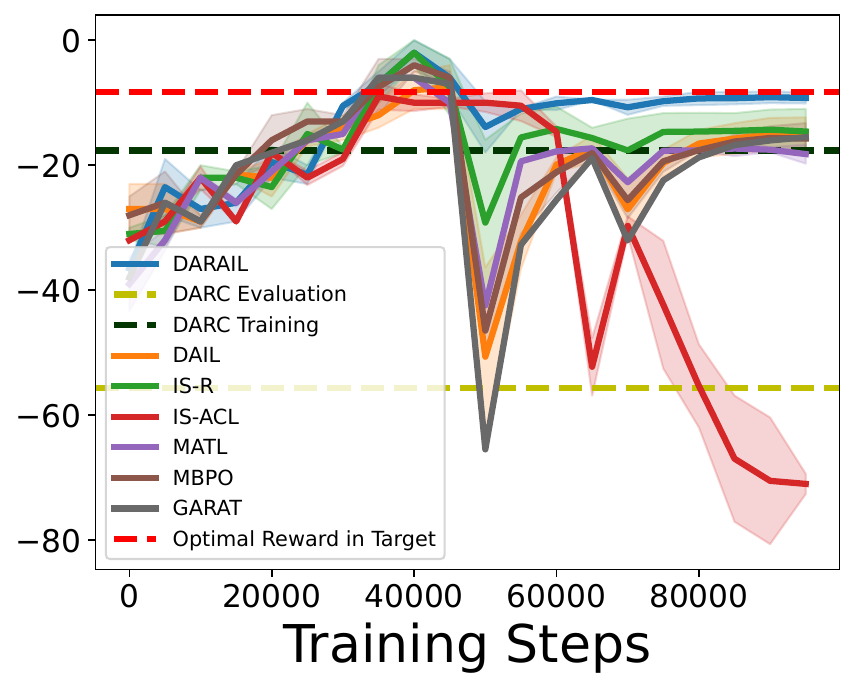}\\
    %       (a) HalfCheetah-v2 & (b) Ant-v2 &  (c) Walker2d-v2  & (d) Reacher-v2 
    % \end{tabular}
    \caption{Training Curve of changing gravity setting. Top: target domain gravity$\times$0.5, button: target domain gravity$\times$1.5.  Upper horizon line: DARC reward in the source domain. Lower horizon line: DARC reward in the target domain. The figures show the mean value of multiple runs and the standard
    deviation. The figure shows that our proposed method performs better than DARC in the target
    domain and other baseline methods.
    }\label{fig: g}
    % \label{fig:source-il}
\end{figure*}

\begin{figure*}[ht]
    \centering
    \setlength{\tabcolsep}{0pt}
    % \begin{tabular}{c}
    \includegraphics[width=1.0\textwidth]{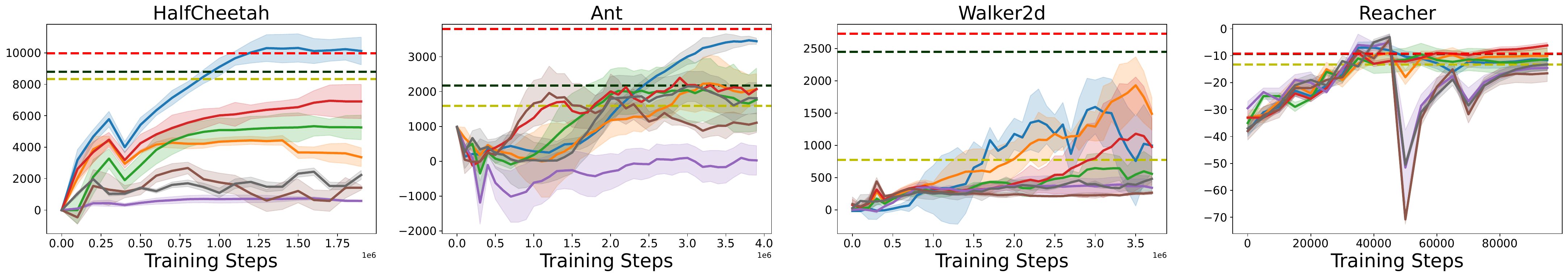}\\
    \includegraphics[width=1.0\textwidth]{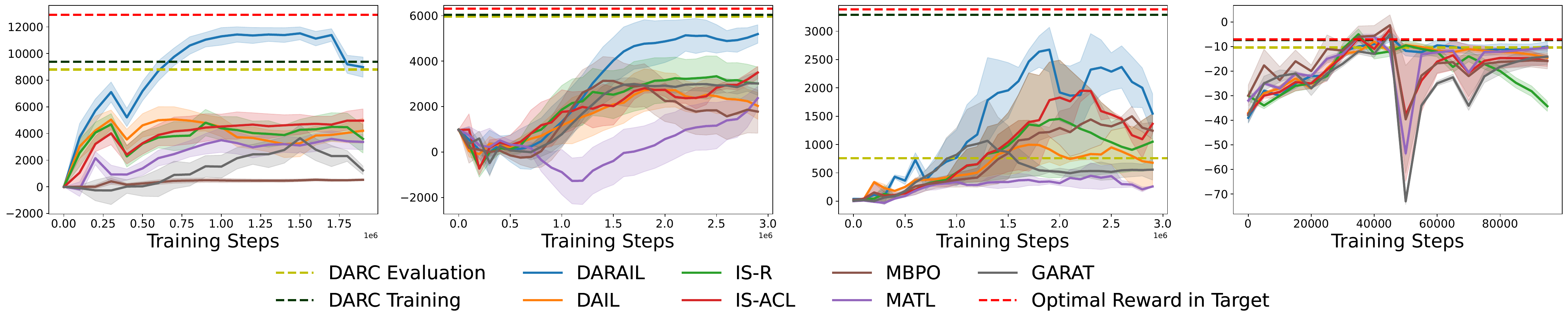}\\
          % (a) HalfCheetah-v2 & (b) Ant-v2 &  (c) Walker2d-v2  & (d) Reacher-v2 
    % \end{tabular}
    \caption{Training Curve of changing density setting. Top: target domain density$\times$0.5, button: target domain density$\times$1.5. Upper horizon line: DARC reward in the source domain. Lower horizon line: DARC reward in the target domain. The figures show the mean value of multiple runs and the standard
    deviation. The figure shows that our proposed method performs better than DARC in the target
    domain and other baseline methods.
    }\label{fig: d}
    % \label{fig:source-il}
\end{figure*}

\begin{figure*}[ht]
    \centering
    \setlength{\tabcolsep}{0pt}
    \begin{tabular}{cccc}
    \includegraphics[height=0.17\textwidth]{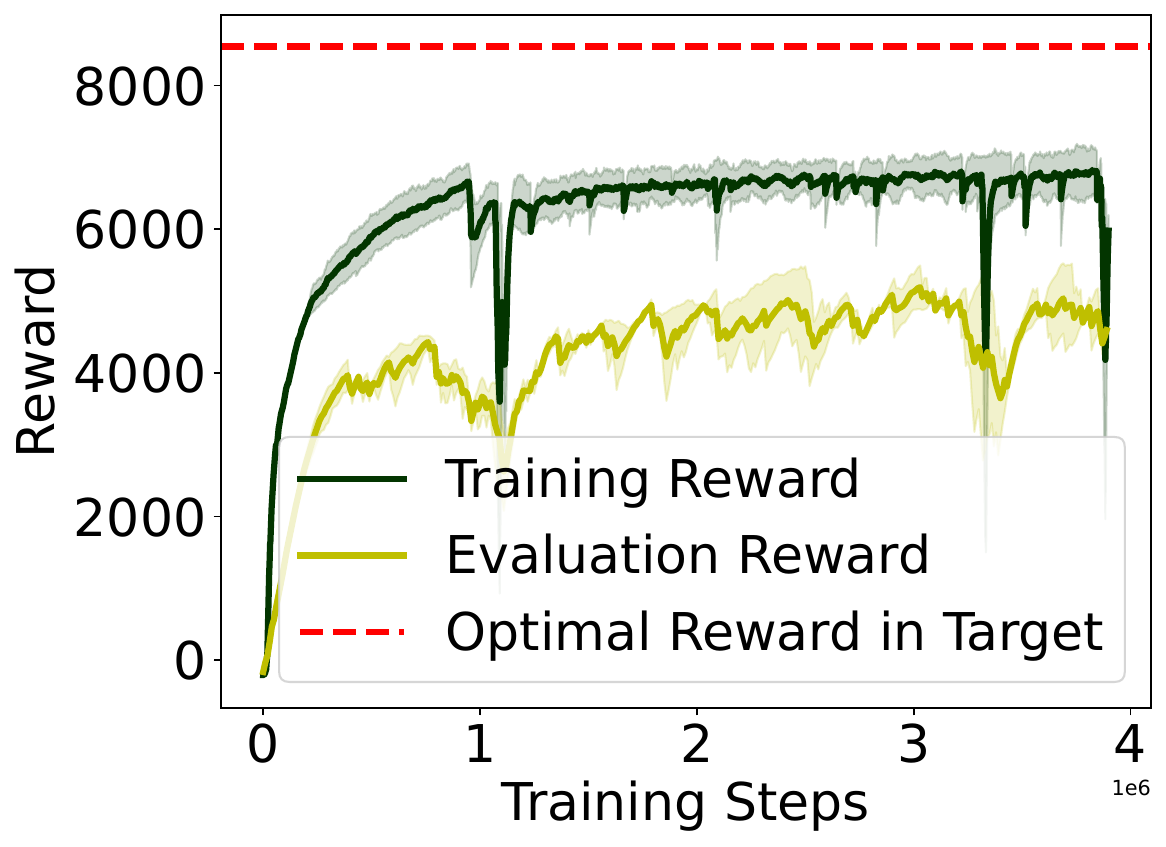}&
    \includegraphics[height=0.17\textwidth]{Fig/darc/a_darc.pdf}&
    \includegraphics[height=0.17\textwidth]{Fig/darc/w_darc.pdf}&
    \includegraphics[height=0.17\textwidth]{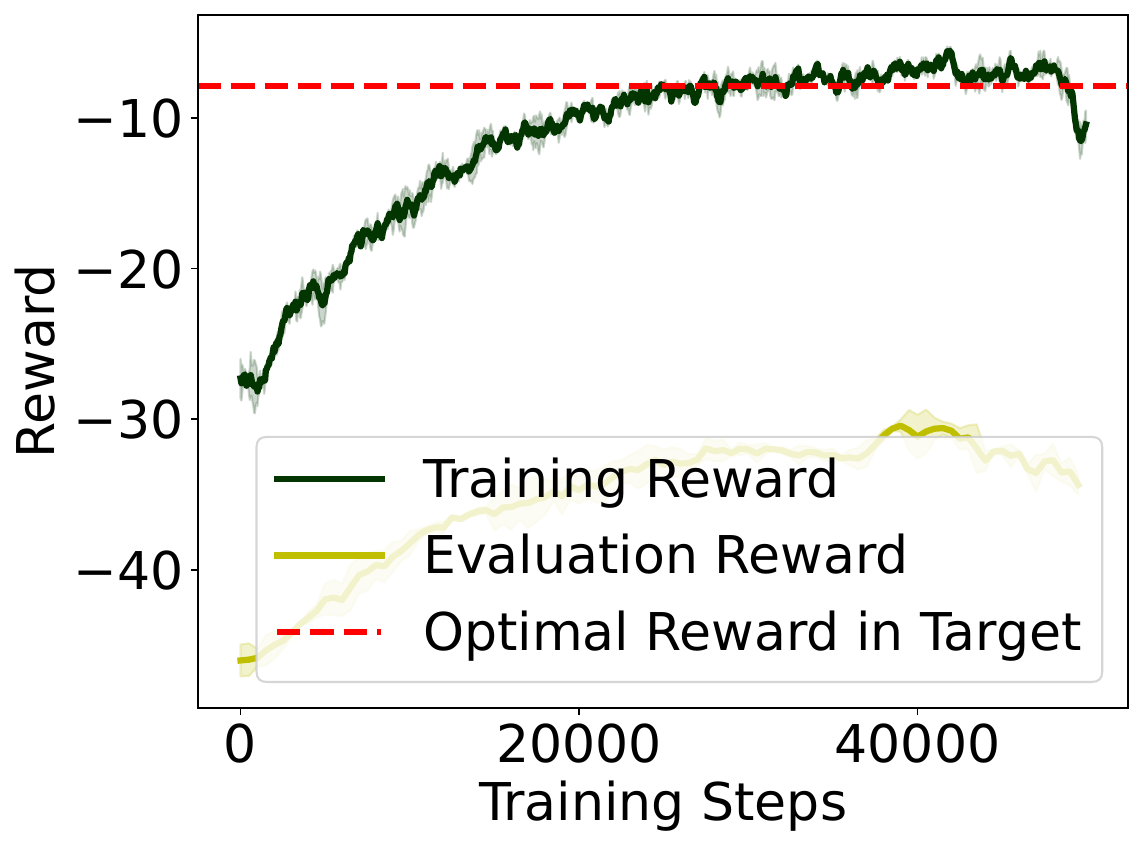}\\
    (a) HalfCheetah  & (b) Ant  & (c) Walker2d & (d) Reacher  \\
    \end{tabular}
    \caption{Training reward in the source domain, i.e. $\EE_{\pi_{\text{DARC},p_{\text{src}}}} [\sum_t r(s_t,a_t)]$, and evaluation reward in the target domain , i.e. $\EE_{\pi_{\text{DARC},p_{\text{trg}}}} [\sum_t r(s_t,a_t)]$, for DARC in four environments. Deploying trained DARC policy to the target domain will cause performance degradation.  } %\yyshi{Maybe put the legend in the figure subplot or under the four plots rather than in the last one?}
    \label{fig:gap_between_darc_on_target_source}
    \vspace{-0.05in}
\end{figure*}

\begin{table*}[ht]
    \setlength{\abovecaptionskip}{0pt}
    \setlength{\tabcolsep}{2pt}
    \centering
    \caption{Comparison of DARAIL with DARC, 0.5 gravity. \label{table: g0.5 darc}
    }
    % \begin{sc}
    %\begin{small}
    %\scalebox{0.99}
    %{\resizebox{\textwidth}{10mm}{
    { \begin{tabular}{ccccc}
    \toprule
     % & DAIL & IS-R& IS-ACL& MBPO & MATL & GARAT & 
     &DARC Evaluation & DARC Training &Optimal in Target&
     DARAIL\\
    \midrule
    HalfCheetah
    %& $6402\pm362$ & $6007\pm863$  &$6934\pm231$ & $4323\pm7$ & $1538\pm616$ & $5877\pm382$ 
    & $1686\pm 392$ & $5721\pm 463$ & $7559 \pm 782$ 
    &$5485\pm 592$\\
    
    Ant 
    % & $3239\pm395$ &$1463\pm1055$ &$2753\pm94$  &$2445\pm13$  &$2006\pm17$ & $3380\pm268$
    & $2058\pm553$ & $348\pm71$ & $ 3380\pm 538 $ 
    &$990\pm 12$\\
    
    Walker2d 
    % & $2330\pm156$ &$3092\pm434$ &$3881\pm269$ &$1012 \pm41$  &$250\pm5$ & $3296\pm284 $ 
    & $706\pm 64$ & $936\pm 158$ &  $2830\pm 482$ 
    &$878 \pm 122$ \\
    
    Reacher 
    % & $-13.9\pm1.1$  & $-17.6\pm0.25$ & $-14.1\pm0.16$ & $-14.3\pm2$ & $-30\pm10$ & $-14.7\pm2.6$ 
    &  $-13\pm1.3 $& $-11\pm 1.9$  & -7.2 $\pm$ 0.3
    &$-12.2\pm0.5$\\
    \bottomrule
    \end{tabular}}
    % \end{sc}
    \vspace{-10pt}
\end{table*}

\begin{table*}[ht]
    \setlength{\abovecaptionskip}{0pt}
    \setlength{\tabcolsep}{2pt}
    \centering
    \caption{Comparison of DARAIL with baselines in off-dynamics RL,  0.5 gravity. \label{table: exp result g0.5}
    }
    % \resizebox{\textwidth}{10mm}
    {\small{
    \begin{tabular}{cccccccc}
    \toprule
     & DAIL & IS-R& IS-ACL& MBPO & MATL & GARAT 
     % & DARC Evaluation & DARC Training & Optimal
     &DARAIL\\
    \midrule
    % \hline
    HalfCheetah& $3671\pm 331$ & $3432\pm 332$  &$4896\pm 249$ & $12.2\pm 42$ & $741\pm 195$ & $3436\pm 226$ 
    % & $653\pm142$ & $4897\pm 653$ &6894
    & $\textbf{4093}\pm 1021$\\
    
    Ant & $970\pm 16$ &$982\pm 3.6$ &$984\pm 77$  &$981\pm 32$  &$980\pm 46$ & $976\pm 105$  
    % & $1587\pm594$ & $2170\pm $ & 5320
    &$\textbf{990}\pm 12$\\
    
    Walker2d & $ 541 \pm 315$ &$ 741\pm 325$ &$1267\pm 793$ &$ 724 \pm 423$  &$767\pm 561$ & $823\pm 458 $ 
    % & $257\pm28$ & $4130\pm689$ & 4254
    &$\textbf{878}\pm 122$\\
    Reacher & $-12.5\pm 2.1$  & $-8.2\pm 2.6$ & $\textbf{-7.1}\pm 2.6$ & $-16.2\pm 0.1$ & $-13.6\pm 0.1$ & $-13.7\pm 3.5$ 
    % & $-55.3\pm10.3 $& $-17.2\pm3.8$  & -8.3
    &$-12.2\pm0.5$\\
    \bottomrule
    \end{tabular}}}
    % \end{sc}
    \vspace{-10pt}
\end{table*}
% }

\begin{table*}[ht]
    \setlength{\abovecaptionskip}{0pt}
    \setlength{\tabcolsep}{2pt}
    \centering
    \caption{Comparison of DARAIL with DARC, 0.5 density. \label{table: d0.5 darc}
    }
    % \begin{sc}
    %\begin{small}
    %\scalebox{0.99}
    %{\resizebox{\textwidth}{10mm}{
    { \begin{tabular}{ccccc}
    \toprule
     % & DAIL & IS-R& IS-ACL& MBPO & MATL & GARAT & 
     &DARC Evaluation & DARC Training &Optimal in Target&
     DARAIL\\
    \midrule
    HalfCheetah
    %& $6402\pm362$ & $6007\pm863$  &$6934\pm231$ & $4323\pm7$ & $1538\pm616$ & $5877\pm382$ 
    & $8328\pm 861$ & $8790\pm 486$ & $9970 \pm  983$ 
    &$10308\pm 1042$\\
    
    Ant 
    % & $3239\pm395$ &$1463\pm1055$ &$2753\pm94$  &$2445\pm13$  &$2006\pm17$ & $3380\pm268$
    & $1587\pm 224$ & $2170\pm  195$ & $3798 \pm  341$ 
    &$3472\pm 245$\\
    
    Walker2d 
    % & $2330\pm156$ &$3092\pm434$ &$3881\pm269$ &$1012 \pm41$  &$250\pm5$ & $3296\pm284 $ 
    & $773\pm 395$ & $2449\pm 234$ &  $ 2729\pm  492 $ 
    &$1595 \pm 168$ \\
    
    Reacher 
    % & $-13.9\pm1.1$  & $-17.6\pm0.25$ & $-14.1\pm0.16$ & $-14.3\pm2$ & $-30\pm10$ & $-14.7\pm2.6$ 
    &  $-13.3\pm 1.2 $& $-9.4\pm 1.5$  & $9.2 \pm 0.2$ 
    &$-12.2\pm 1$\\
    \bottomrule
    \end{tabular}}
    % \end{sc}
    \vspace{-10pt}
\end{table*}

\begin{table*}[ht]
    \setlength{\abovecaptionskip}{0pt}
    \setlength{\tabcolsep}{2pt}
    \centering
    \caption{Comparison of DARAIL with baselines in off-dynamics RL, 0.5 density. \label{table: exp result d0.5}
    }
    % \resizebox{\textwidth}{10mm}
    {\small{
    \begin{tabular}{cccccccc}
    \toprule
     & DAIL & IS-R& IS-ACL& MBPO & MATL & GARAT 
     % & DARC Evaluation & DARC Training & Optimal
     &DARAIL\\
    \midrule
    % \hline
    HalfCheetah& $4433\pm 453$ & $5332\pm 1063$  &$6951\pm 1067$ & $740\pm 172$ & $2676\pm 315$ & $2437\pm 213$ 
    % & $653\pm142$ & $4897\pm 653$ &6894
    & $\textbf{10308}\pm 1042$\\
    
    Ant & $2233\pm 809$ &$2050\pm 892$ &$2396\pm 96$  &$980\pm 102$  &$1961\pm 611$ & $2149\pm 406$  
    % & $1587\pm594$ & $2170\pm $ & 5320
    &$\textbf{3472}\pm 245$\\
    
    Walker2d & $ \textbf{1930} \pm 441$ &$ 646\pm 226$ &$1180\pm 789$ &$ 391 \pm 118$  &$441\pm 59$ & $480\pm 44 $ 
    % & $257\pm28$ & $4130\pm689$ & 4254
    &$1595\pm 168$\\
    Reacher & $-12.2\pm 1.8$  & $-13.3\pm 4.2$ & $-13.2\pm 1$ & $\textbf{-11.7}\pm 4.5$ & $-13.2\pm 1.6$ & $-14.1\pm 1.2$ 
    % & $-55.3\pm10.3 $& $-17.2\pm3.8$  & -8.3
    &$-12.2\pm 1 $\\
    \bottomrule
    \end{tabular}}}
    % \end{sc}
    \vspace{-10pt}
\end{table*}
% }

\begin{table*}[h]
    \setlength{\abovecaptionskip}{0pt}
    \setlength{\tabcolsep}{2pt}
    \centering
    \caption{Comparison of DARAIL with DARC, 1.5 density. \label{table: exp result d1.5, darc}
    }
    % \begin{sc}
    %\begin{small}
    %\scalebox{0.99}
    {\small{
    \begin{tabular}{ccccc}
    \toprule
     % & DAIL & IS-R& IS-ACL& MBPO & MATL & GARAT 
     & DARC Evaluation & DARC Training &Optimal
     & DARAIL\\
    \midrule
    % \hline
    HalfCheetah
    % & $5057\pm 766$ & $4814\pm 524$  &$4966\pm 727$ & $3598\pm 706$ & $530\pm 320$ & $3650\pm 875$ 
    & $8833\pm 539$ & $9380\pm 728$ & 6309
    &${11515}\pm 335$\\
    
    Ant 
    % & $2738\pm 781$ &$3335 \pm 1010$ &$3499\pm 967$  &$2371\pm 604$  &$ 3135\pm 463$ & $ 3028 \pm 690$  
    & $ {5961} \pm 970$ & $6036\pm1345$ &3288
    & ${5193}\pm 463$\\
    
    Walker2d 
    % & $997\pm432$ &$1452\pm1036$ &$1950\pm198$ &$448 \pm 228$  &$1498\pm176$ & $1066 \pm  739$
    & $760\pm 430$ & $3288\pm 849$ &3383
    & ${2674}\pm 376$\\

    Reacher 
    % & $-11.3\pm1.0$  & $-15.2\pm 2.1$ & $-13.4\pm 2.0$ & $-14.3\pm 1$ & $-11.1\pm 2$ & $-13.3\pm0.8$ 
    & $-10.4\pm 0.4 $& $-7.3\pm1.3$  & -7.1
    &${-10.2}\pm 2.1$\\
    \bottomrule
    \end{tabular}}}
    % \end{sc}
    \vspace{-10pt}
\end{table*}

\begin{table*}[ht]
    \setlength{\abovecaptionskip}{0pt}
    \setlength{\tabcolsep}{2pt}
    \centering
    \caption{Comparison of DARAIL with baselines in off-dynamics RL, 1.5 density. \label{table: exp result d1.5}
    }
    % \begin{sc}
    %\begin{small}
    %\scalebox{0.99}
    {\small{
    \begin{tabular}{cccccccc}
    \toprule
     & DAIL & IS-R& IS-ACL& MBPO & MATL & GARAT 
     % & DARC Evaluation & DARC Training &Optimal
     & DARAIL\\
    \midrule
    % \hline
    HalfCheetah& $5057\pm 766$ & $4814\pm 524$  &$4966\pm 727$ & $3598\pm 706$ & $530\pm 320$ & $3650\pm 875$ 
    % & $8833\pm 539$ & $9380\pm 728$ & 6309
    &$\textbf{11515}\pm 335$\\
    
    Ant & $2738\pm 781$ &$3335 \pm 1010$ &$3499\pm 967$  &$2371\pm 604$  &$ 3135\pm 463$ & $ 3028 \pm 690$  
    % & $ \textbf{5961} \pm 970$ & $6036\pm1345$ &3288
    & $\textbf{5193}\pm 463$\\
    
    Walker2d & $997\pm432$ &$1452\pm1036$ &$1950\pm198$ &$448 \pm 228$  &$1498\pm176$ & $1066 \pm  739$
    % & $760\pm 430$ & $3288\pm 849$ &3383
    & $\textbf{2674}\pm 376$\\

    Reacher & $-11.3\pm1.0$  & $-15.2\pm 2.1$ & $-13.4\pm 2.0$ & $-14.3\pm 1$ & $-11.1\pm 2$ & $-13.3\pm0.8$ 
    % & $-10.4\pm 0.4 $& $-7.3\pm1.3$  & -7.1
    &$\textbf{-10.2}\pm 2.1$\\
    \bottomrule
    \end{tabular}}}
    % \end{sc}
    \vspace{-10pt}
\end{table*}

\newpage
\clearpage
\subsection{DARC training and evaluation performance on broken source setting}
\label{appendix: darc on source}
Figure \ref{fig:gap_between_darc_on_target_source} shows the performance of DARC trained in the source and evaluated in the target domain under broken source environment setting. The training reward is the reward obtained in the source domain, i.e. $\EE_{\pi_{\text{DARC},p_{\text{src}}}} [\sum_t r(s_t,a_t)]$ and the evaluation is the reward deployed in the target domain, i.e. $\EE_{\pi_{\text{DARC},p_{\text{trg}}}} [\sum_t r(s_t,a_t)]$. We observe the performance degradation in the figure \ref{fig:gap_between_darc_on_target_source}. Empirically, we notice that the DARC policy performance in the source domain, $\EE_{\pi_{\text{DARC},p_{\text{src}}}} [\sum_t r(s_t,a_t)]$, is close to the optimal reward in the target domain which matches with the DARC objective that DARC can generate target optimal trajectories in the source domain. However, deploying it to the target domain will result in performance degradation and a suboptimal reward due to the dynamics shift.  
% Note here, we focus on a problem setting in the source domain with smaller support in action, which is harder for the off-dynamics RL problem.

\subsection{Performance of DARAIL on broken target environment}
\label{section: broken target environment}
We show the performance of DARAIL in the intact source and broken target environment setting in Figure \ref{fig: DARAIL in DARC setting} (the setting in DARC paper \citep{eysenbach2020off}). We observe that our method outperforms the DARC reward in the target domain, $\EE_{\pi_{\text{DARC},p_{\text{trg}}}} [\sum_t r(s_t,a_t)]$. Also, we see that the performance of DARC in the source domain and target domain are very similar. Compared with the performance gap when the source environment is broken in Figure \ref{fig:gap_between_darc_on_target_source}. As discussed, DARC works well when the assumption that the target optimal policy performs well in the source domain is satisfied. In the broken target setting, the target optimal policy can perform the same in the source domain.

Further, empirically, in the broken target setting, the DARC policy learns a near 0 value for the broken joint, which guarantees that the policy can generate similar trajectories in the two domains. Also, maximizing the adjusted cumulative reward in the source domain with a policy with a near 0 value for the broken joint is equivalent to maximizing the cumulative reward in the target domain. Thus, DARC perfectly suits the broken target setting. However, in the broken source setting and other more general dynamics shift cases, the target optimal policy might not perform well in the source domain. For example, in the broken source setting, the target optimal policy will perform poorly in the source domain as it loses one joint in the source domain. Another way to understand why DARC fails is that it learns an arbitrary value for the broken joint, which becomes detrimental in the target domain. However, this is just an artifact of the particular setting. As we discussed above, the intrinsic reason that DARC fails is the violation of the assumption.

\begin{figure*}[t]
    \centering
    \setlength{\tabcolsep}{0pt}
    \begin{tabular}{cccc}
         \includegraphics[height=0.172\textwidth]{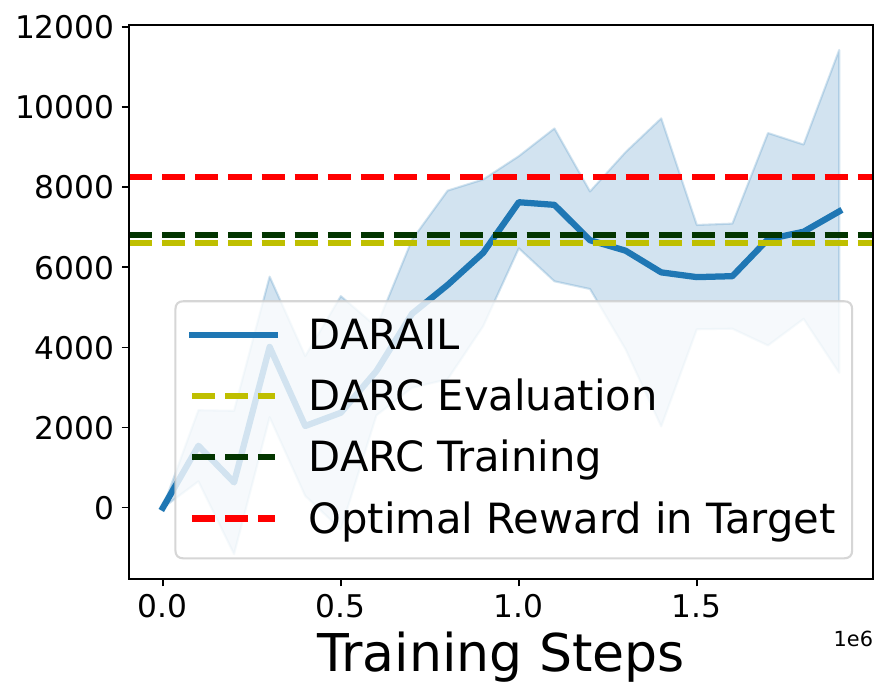}&
         \includegraphics[height=0.172\textwidth]{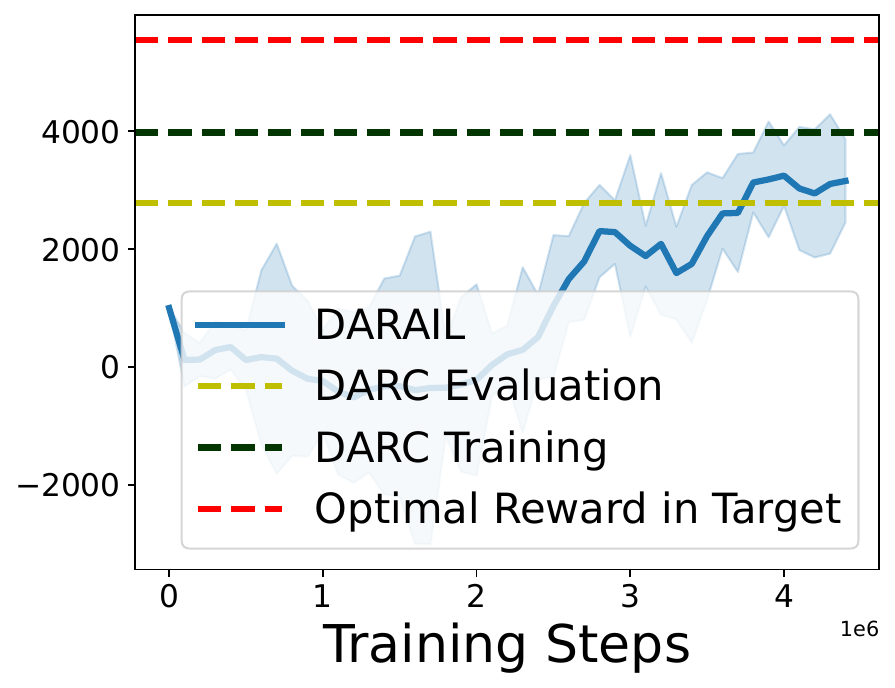}&
         \includegraphics[height=0.172\textwidth]{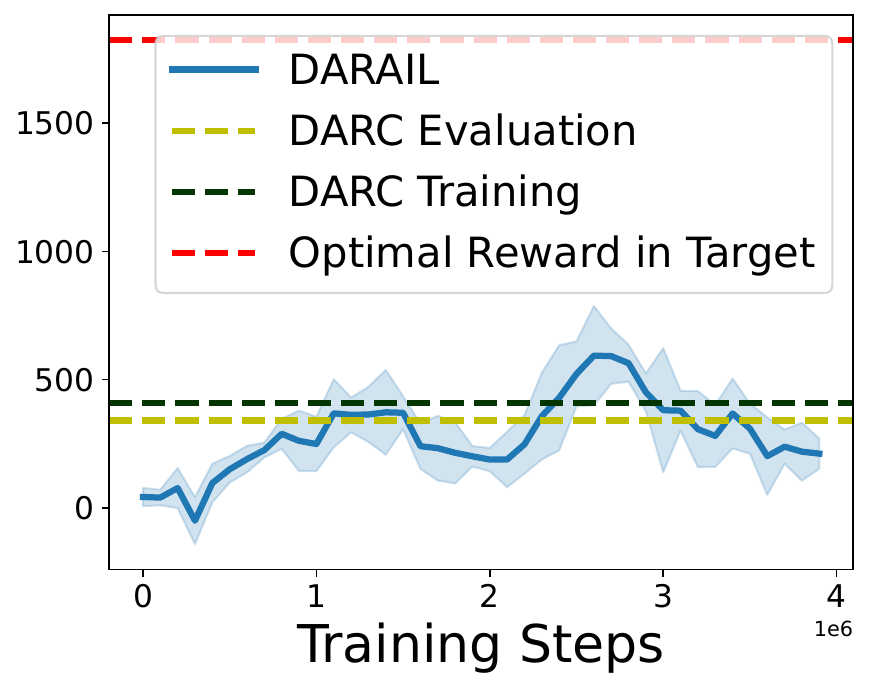}&
         \includegraphics[height=0.172\textwidth]{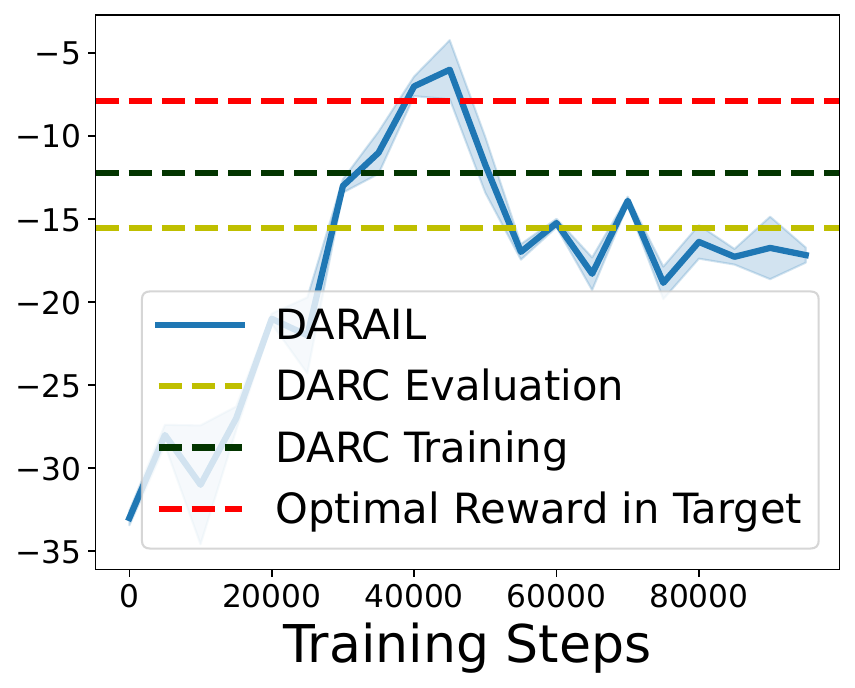}\\
          (a) HalfCheetah & (b) Ant &  (c) Walker2d  & (d) Reacher 
    \end{tabular}
    \caption{Experiments of DARC and DARAIL on the intact source and broken target setting. We observe that the DARC does not have significant performance degradation. Also, we show that DARAIL can perform similarly to DARC in this setting.
    }\label{fig: DARAIL in DARC setting}
\end{figure*}

\subsection{Performance of mimicking source optimal trajectories}
In Figure \ref{fig: mimicking source optimal}, We compare our DARAIL, which uses DARC trajectories in the source domain as expert demonstrations and mimicking source optimal trajectories regardless of the target domain. 
\begin{figure*}[t]
    \centering
    \setlength{\tabcolsep}{0pt}
    \begin{tabular}{cccc}
         \includegraphics[height=0.172\textwidth]{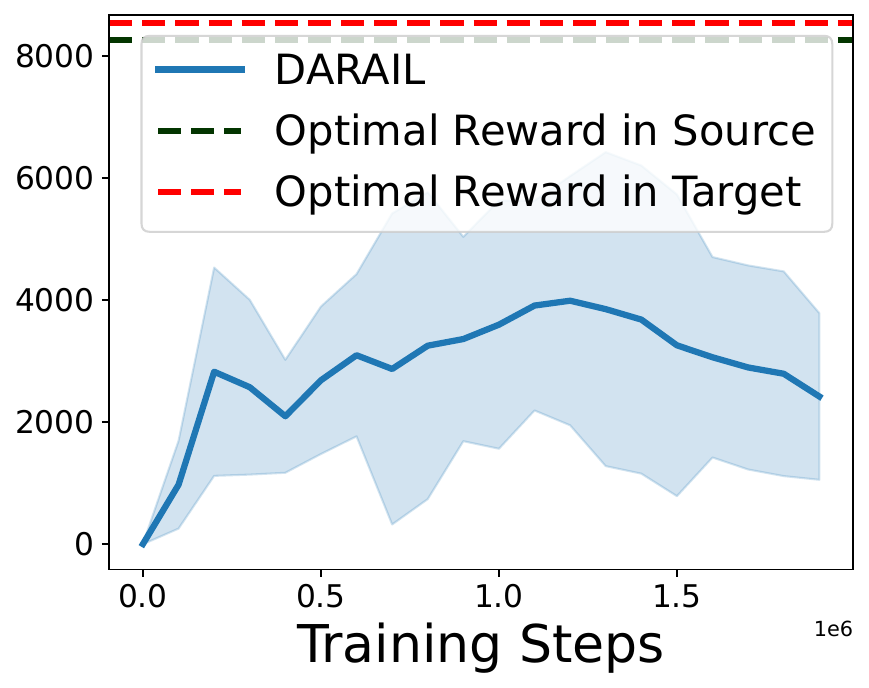}&
         \includegraphics[height=0.172\textwidth]{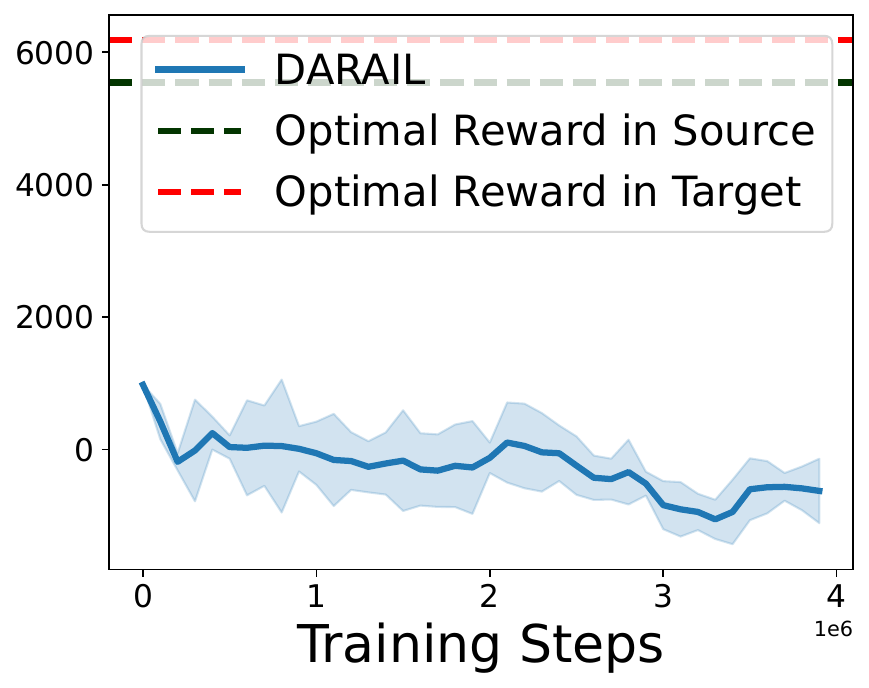}&
         \includegraphics[height=0.172\textwidth]{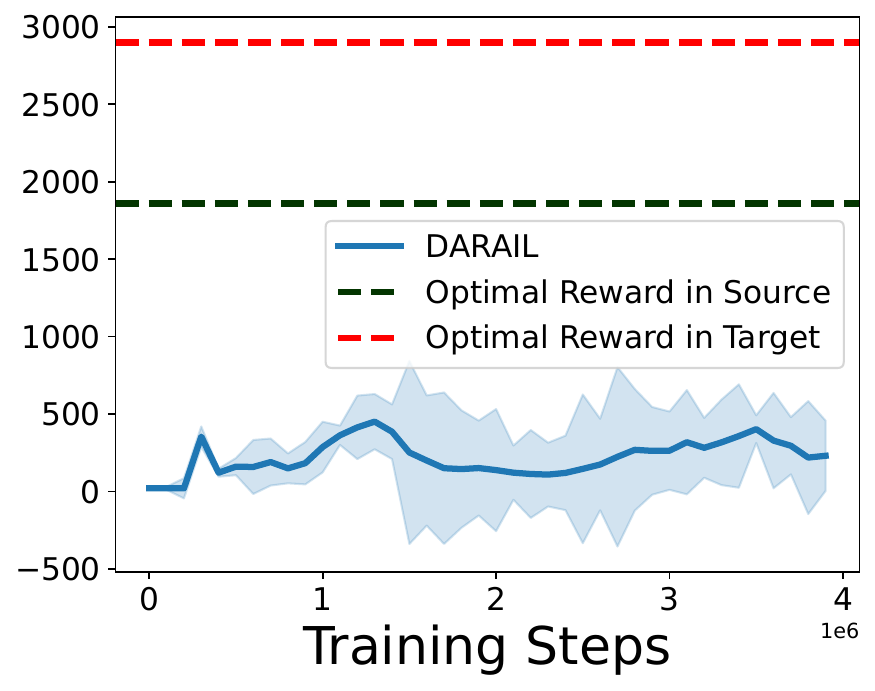}&
         \includegraphics[height=0.172\textwidth]{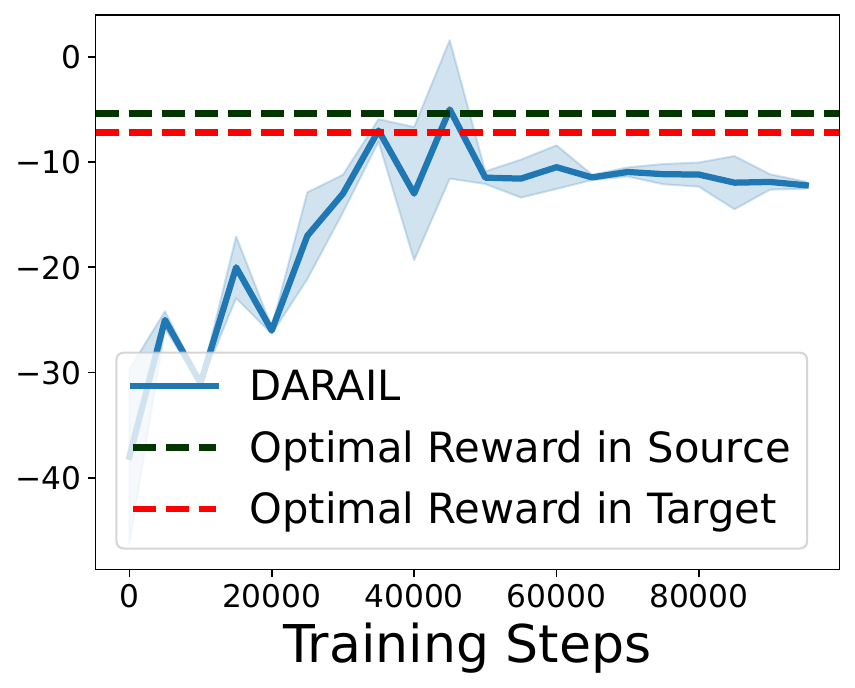}\\
         \includegraphics[height=0.172\textwidth]{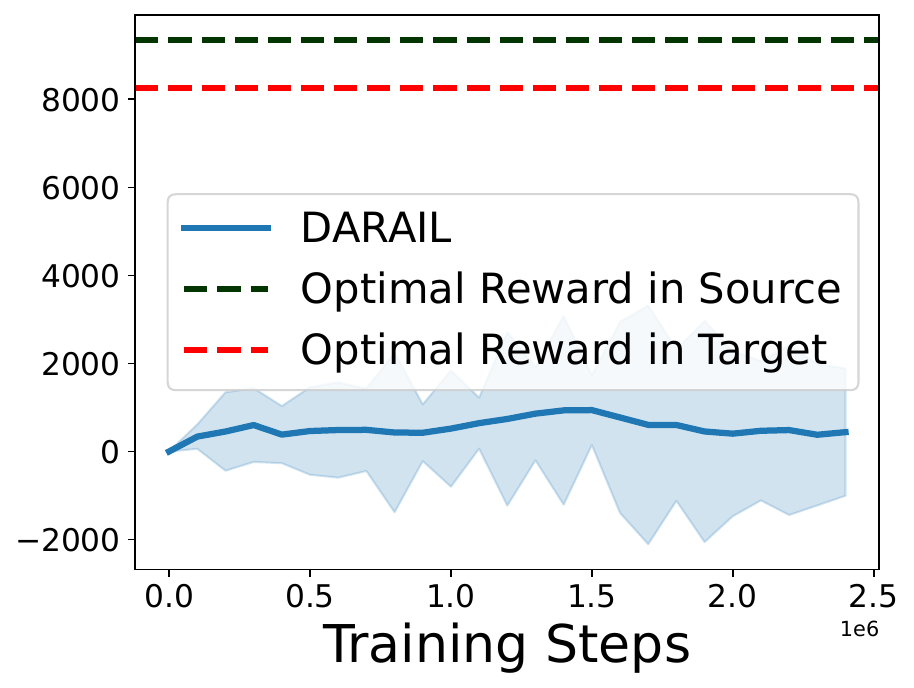}&
         \includegraphics[height=0.172\textwidth]{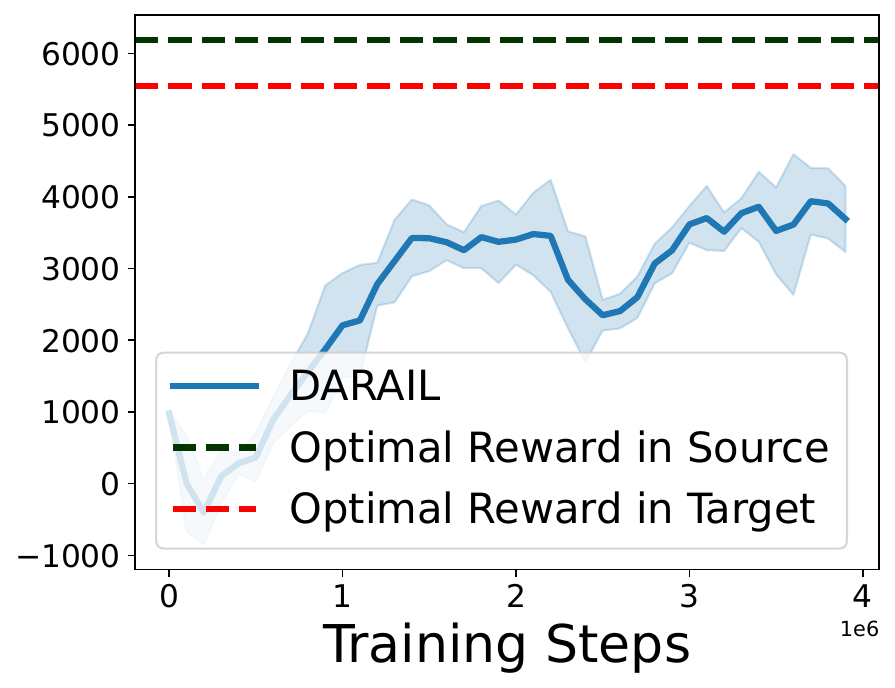}&
         \includegraphics[height=0.172\textwidth]{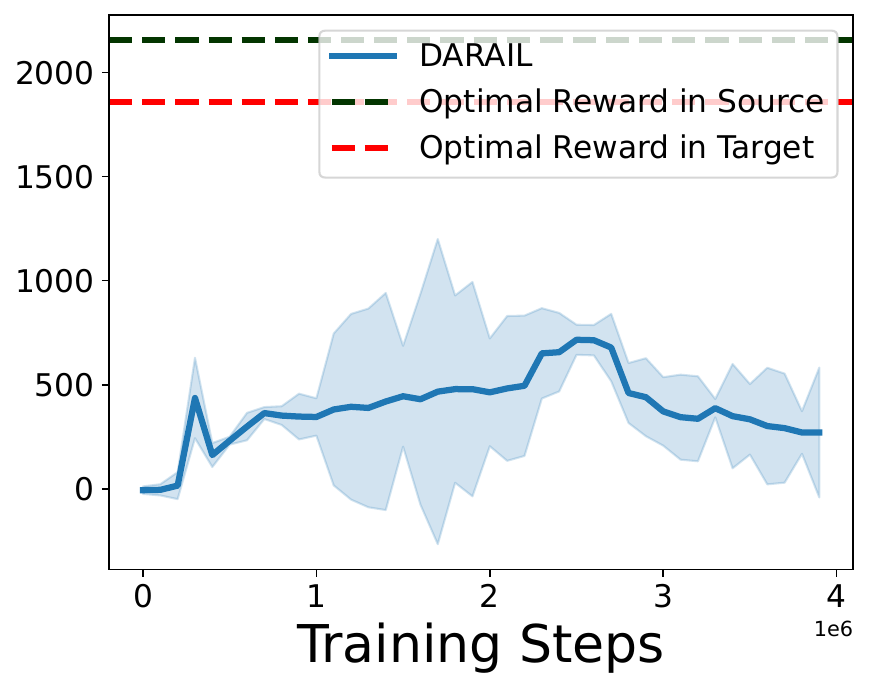}&
         \includegraphics[height=0.172\textwidth]{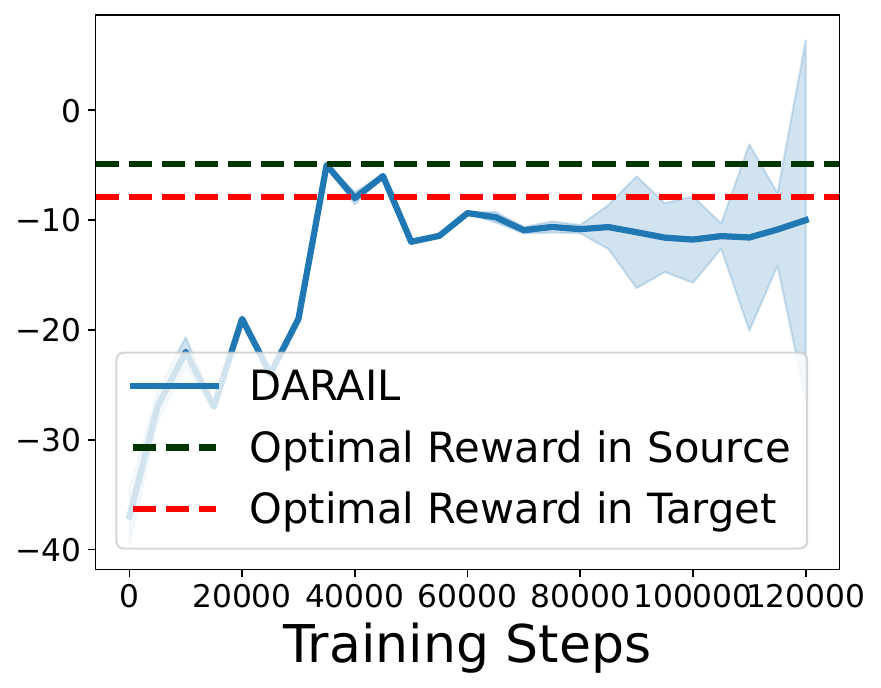}\\
          (a) HalfCheetah & (b) Ant &  (c) Walker2d  & (d) Reacher 
    \end{tabular}
    \caption{Experiments on using source optimal policy as the expert demonstration instead of the DARC policy as the expert demonstration. Mimicking the source optimal trajectories will not receive a similar performance as mimicking DARC performance, and there is a big performance gap between the source optimal reward and imitation learning performance in the target domain.
    } \label{fig: mimicking source optimal}
    % \label{fig:source-il}
\end{figure*}

\subsection{Access to the target domain data compared to DARC.} 
Both DARC and DARAIL require some limited access to the target rollouts. In DARAIL, the imitation learning step only rolls out data from the target domain every 100 steps of the source domain rollouts, which is 1\% of the source domain rollouts. We claim that more target domain rollouts will not improve DARC’s performance due to its suboptimality, and DARAIL is better not because of having more target domain rollouts. We verify it by comparing DARC and DARAIL with the same amount of rollouts from the target domain in the broken source environment setting in Tables \ref{table: same amount rollout 3} and \ref{table: same amount rollout r}. Specifically, we examine DARAIL with 5e4 target rollouts alongside DARC with 2e4 and 5e4 target rollouts. DARAIL has 5e3 target rollouts for the Reacher environment, while DARC has 3e3 and 5e3 rollouts. From the results, we see that increasing the target rollouts from 2e4 to 5e4 (or from 3e3 to 5e3 in the case of Reacher) does not yield a significant improvement in DARC's performance due to its inherent suboptimality. Notably, DARAIL consistently outperforms DARC when given comparable levels of target rollouts. 
\begin{table}[ht]
    \setlength{\abovecaptionskip}{0pt}
    \setlength{\tabcolsep}{2pt}
    \centering
    \caption{Comparison with DARC with the same amount of rollout from the target. The number in the columns represents the amount of rollout from the target. More target domain rollout will not improve the DARC's performance further. Experiment on broken source setting. }\label{table: same amount rollout 3}
    {\small{
    \begin{tabular}{c|c|c|c|c|c}
    \toprule
     & DARAIL 5e4 & DARC Evaluation 2e4  & DARC Training 2e4 & DARC Evaluation 5e4  & DARC Training 5e4  \\
    \midrule
    HalfCheetah&   7067 $\pm$ 176& 4133 $\pm$ 828& 6995 $\pm$ 30&4037 $\pm$ 798 & 6988 $\pm$ 27\\
    Ant  & 4752 $\pm$ 872 & 4280 $\pm$ 33 & 5197 $\pm$  155 &4342 $\pm$ 42 & 5207 $\pm$ 172\\
    Walker2d& 4366 $\pm$ 434 &2669 $\pm$ 788& 3896 $\pm$ 523 & 2538 $\pm$ 802& 3782 $\pm$ 510\\
    \bottomrule
    \end{tabular}}}
    \vspace{-10pt}
\end{table}

\begin{table}[ht]
    \setlength{\abovecaptionskip}{0pt}
    \setlength{\tabcolsep}{2pt}
    \centering
    \caption{Comparison with DARC with the same amount of rollout from target, on Reacher. The number in the columns represents the amount of rollout from the target. More target domain rollout will not improve the DARC's performance further. Experiment on broken source setting. }\label{table: same amount rollout r}
    {\small{
    \begin{tabular}{c|c|c|c|c|c}
    \toprule
     &DARAIL 5e3 & DARC Evaluation 3e3  & DARC Training 3e3& DARC Evaluation 5e3 & DARC Training 5e3   \\
    \midrule
    Reacher  & -13.7 $\pm$ 0.9 & -26.3 $\pm$ 3.3 & -11.2 $\pm$ 2.9 & -29.7 $\pm$ 4.1 & -10.2 $\pm$ 1.2\\

    \bottomrule
    \end{tabular}}}
    % \end{sc}
    \vspace{-10pt}
\end{table}

\newpage
\clearpage
\section{Ablation Study}
\label{appendix: Ablation studies}

% \subsection{Experiment Results}
% \subsection{Comparison of DARAIL, DAIL, and DAIL-Relax}
% In this section, we empirically show that setting $\rho(s_t,s_{t+1})$ to 1 in the policy optimization step of DAIL receives a similar performance as DAIL. In Figure \ref{fig:DAILRelax}, we can easily observe that DAIL and DAIL receive similar performance, while DARAIL performs much better by utilizing the source domain reward in reward estimation. 

% \begin{figure*}[t]
%     \centering
%     \setlength{\tabcolsep}{0pt}
%     % \includegraphics[height=0.21\textwidth]{Fig/dr/dailrr.pdf}
%     \begin{tabular}{ccc}
%     \label{appendix: dail relax}
%          \includegraphics[height=0.19\textwidth]{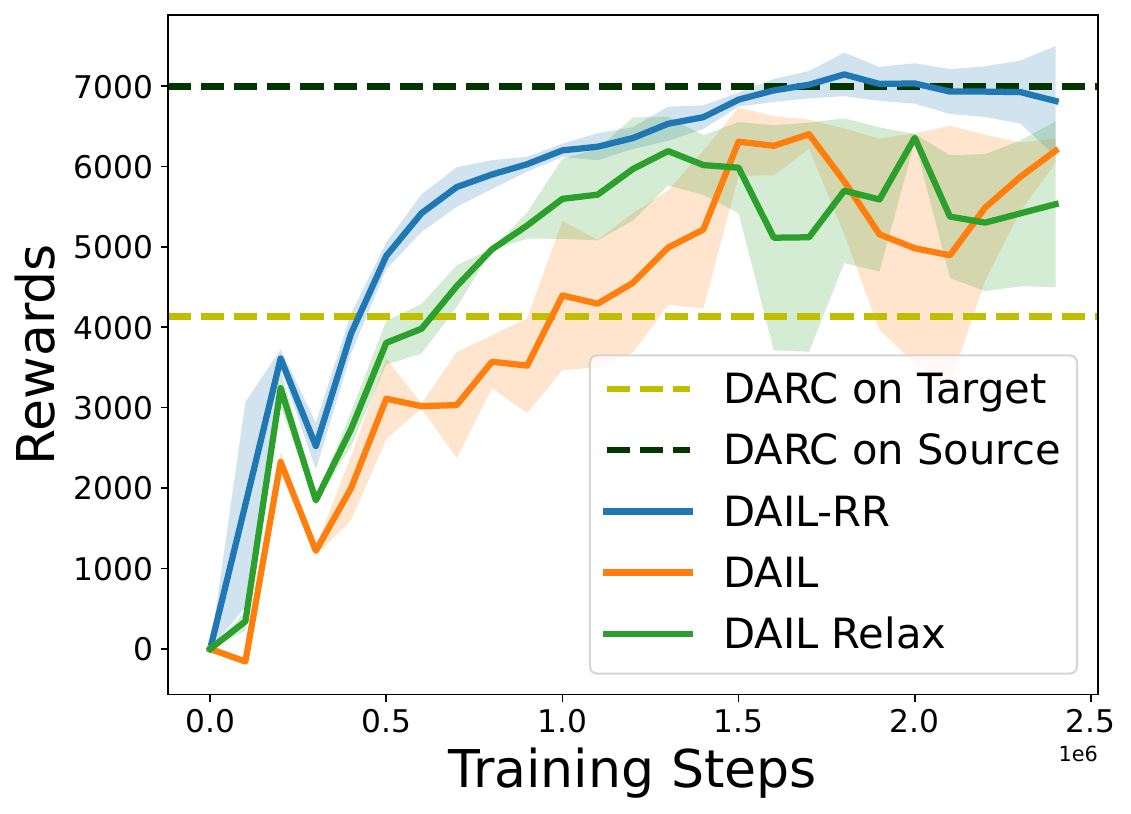}&
%          \includegraphics[height=0.19\textwidth]{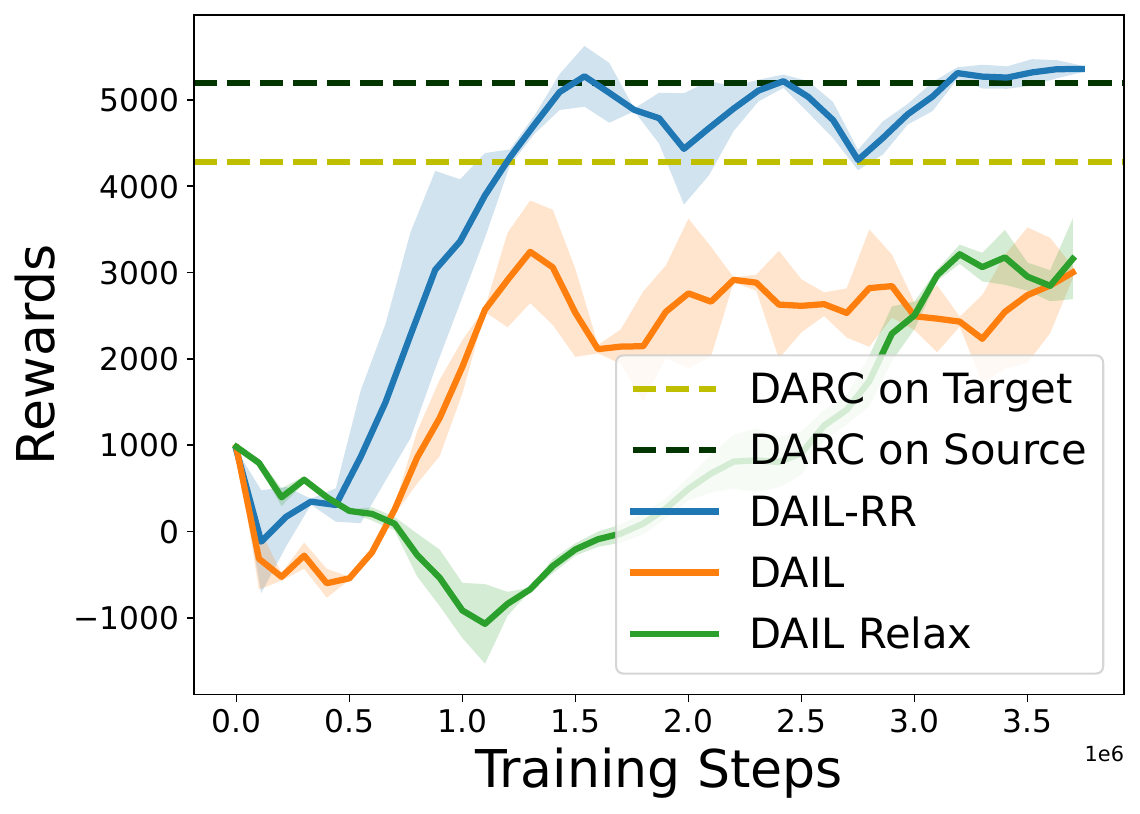}&
%          \includegraphics[height=0.19\textwidth]{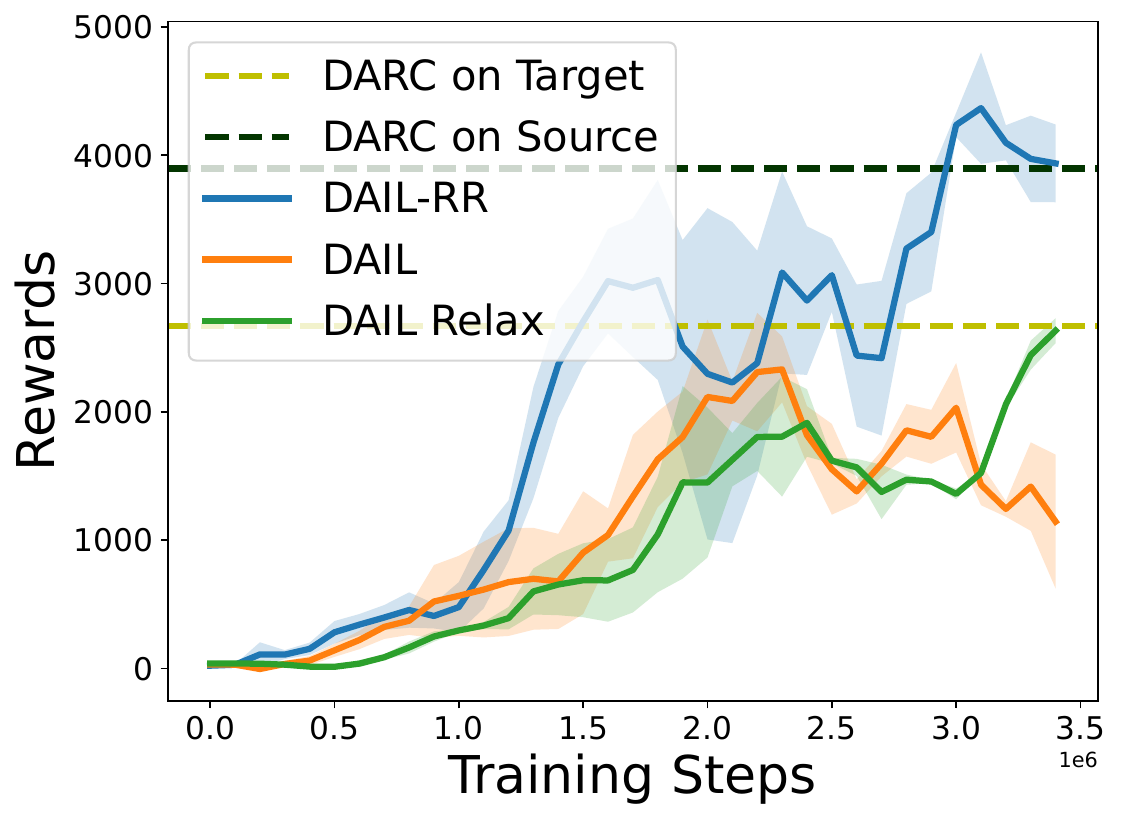}\\
%           (a) HalfCheetah-v2 & (b) Ant-v2 &  (c) Walker2d-v2 
%     \end{tabular}
%     \caption{Experiment results of DARAIL, DAIL, and DAIL-Relax on HalfCheetah, Ant, and Walker2d. DAIL and DAIL-Relax receive similar performance. DARAIL outperforms them by utilizing the source domain reward.
%     }
%     \label{fig:DAILRelax}
%     %\vspace{-0.05in}
% \end{figure*}

\subsection{Per-Step Importance Weight v.s Cumulative Importance weight}

\begin{figure*}[t]
    \centering
    \setlength{\tabcolsep}{0pt}
    
    \includegraphics[height=0.4\textwidth]{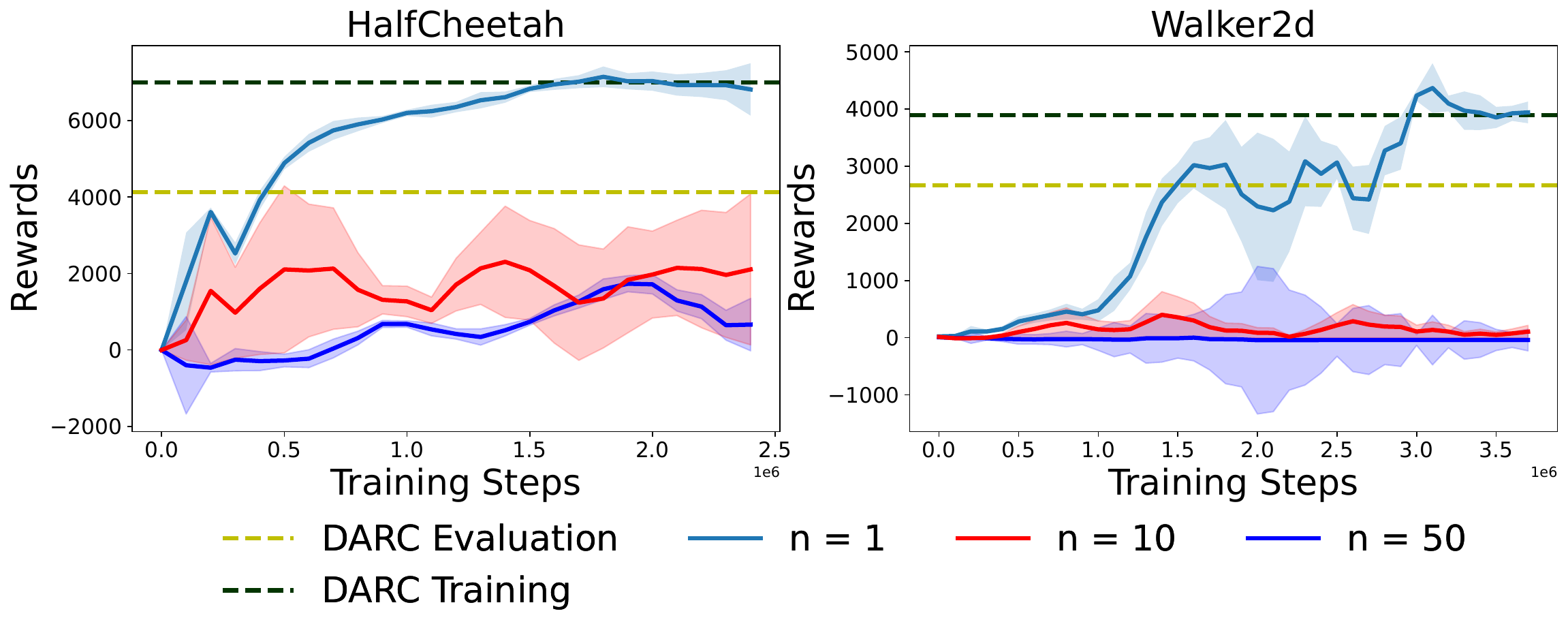}
    \caption{Experiment on how cumulative n-step importance weight performs on DARAIL. Per-step importance weight significantly outperforms using the last n-step multiplication of the importance weight.}
    \label{fig:last_n_steps_dail_rr}
    %\vspace{-0.05in}
\end{figure*}

\begin{figure*}[t]
    \centering
    \setlength{\tabcolsep}{0pt}
    
    \includegraphics[height=0.4\textwidth]{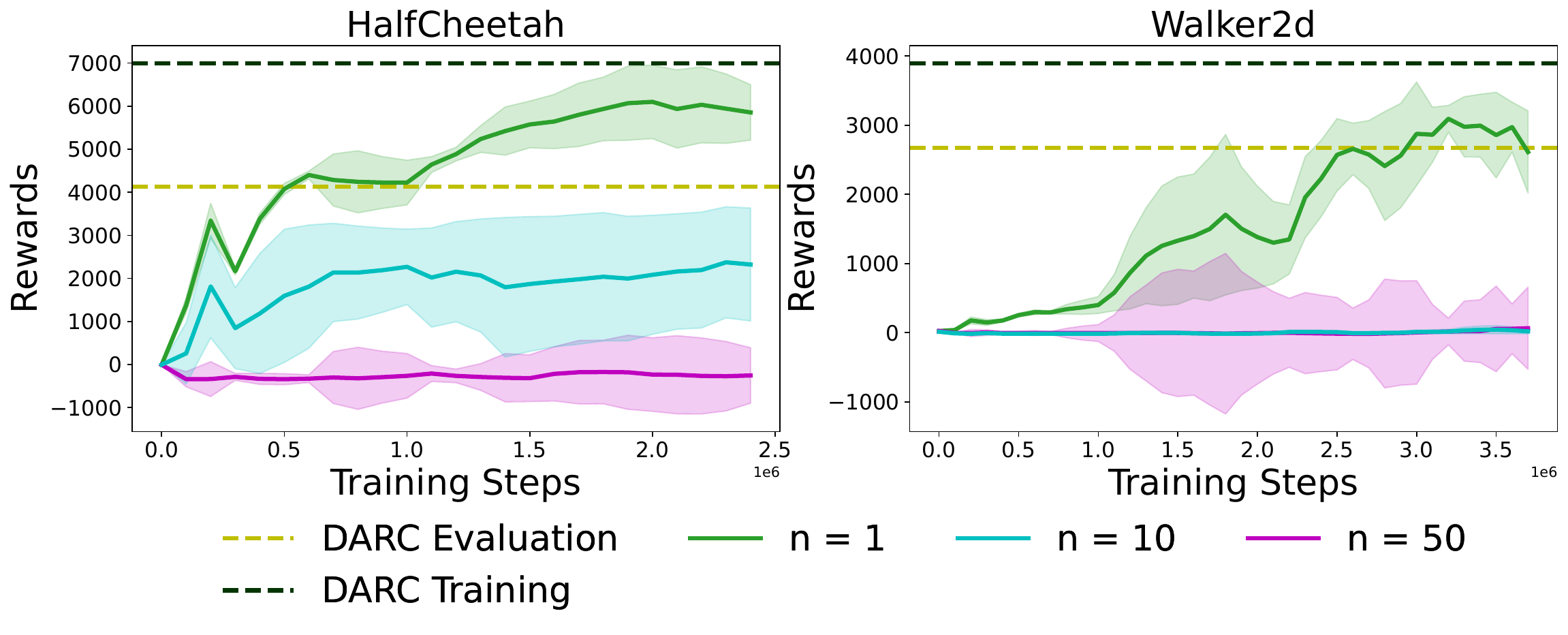}
    \caption{Experiment on how cumulative n-step importance weight performs on IS-R in broken source setting. Per-step importance weight significantly outperforms using the last n-step multiplication of the importance weight.}
    \label{fig:last_n_steps_is-r}
    %\vspace{-0.05in}
\end{figure*}
%\pan{I think we were saying whether add importance weight to each reward $\sum_{t} p1/p2 r_t$ or add importance weight to the sum of reward $p1/p2 \sum r_t$. Since in PPO we try to optimize the summation of rewards over one trajectory. When you use importance weight to correct the objective, you can either correct the individual reward or the trajectory together.
%In the first one the weight is $p1(a_t|s_t)/p2(a_t|s_t)$.
%In the second one the weight is $p1(a_1|s_1)p1(a_2|s_2).../p2(a_1|s_1)p2(a_2|s_2)...$}

In our paper, to reduce the variance, we use the per-step importance weight $\frac{p_{\text{trg}}(s_t,s_{t+1})}{p_{\text{src}}(s_t,s_{t+1})}$ for the importance sampling method and DARAIL. Here, we compare the per-step importance weight with the cumulative n-step importance weight, which is the multiplication of the weight before time step $t$:
\begin{align*}
    \rho_n(s_t,s_{t+1}) = \prod_{i=t-n}^t\frac{p_{\text{trg}}(s_{i+1}|s_i, a_i)}{p_{\text{src}}(s_{i+1}| s_i, a_i)}.
\end{align*}
Note that here, the importance weight is the multiplication of the last n steps weight instead of the multiplication from $i=0$ to $i = t$. Because the cumulative importance weight might have a \emph{NaN} value due to the product. Thus, the optimization step for the imitation learning of DARAIL is as follows:
\begin{align}
&\max_{\zeta}\mathbb{E}_{p_{\text{src}},\zeta}\Big[ \sum_t \rho_n(s_t,s_{t+1}) r(s_t,s_{t+1}) - (1-\rho_n(s_t,s_{t+1}))\log D_{\omega}(s_t,s_{t+1})\Big] \notag.
\end{align}
Similarly, the objective of IS-R is:
\begin{align*}
\max_{\pi}\EE_{p_{\text{src}},\pi} \left[\sum_t \rho_n(s_t,s_{t+1}) r(s_t,s_{t+1})\right]. \notag
\end{align*}
We compare the per-step importance weight and the cumulative n-step importance weight on DARAIL and IS-R. Specifically, we consider $n = [10, 50]$ for HalfCheetah and Walker2d, respectively. We show the results of DARAIL in Figure \ref{fig:last_n_steps_dail_rr} and the results of IS-R in Figure \ref{fig:last_n_steps_is-r}. We see that the cumulative importance weight doesn't perform well on both methods and environments. In HalfCheetah, we can observe that the 10-step cumulative importance weight performs better than the 50-step one. And similar patterns appear in the Walker2d. Thus, we can conclude that per-step importance weight will have a lower variance and be more favorable in our experiment.

\subsection{Update Steps of Discriminator}
\label{appendix: discriminator frequency}
\begin{figure*}[h]
    \centering
    \setlength{\tabcolsep}{0pt}
    \begin{tabular}{cccc}
    \includegraphics[height=0.24\textwidth]{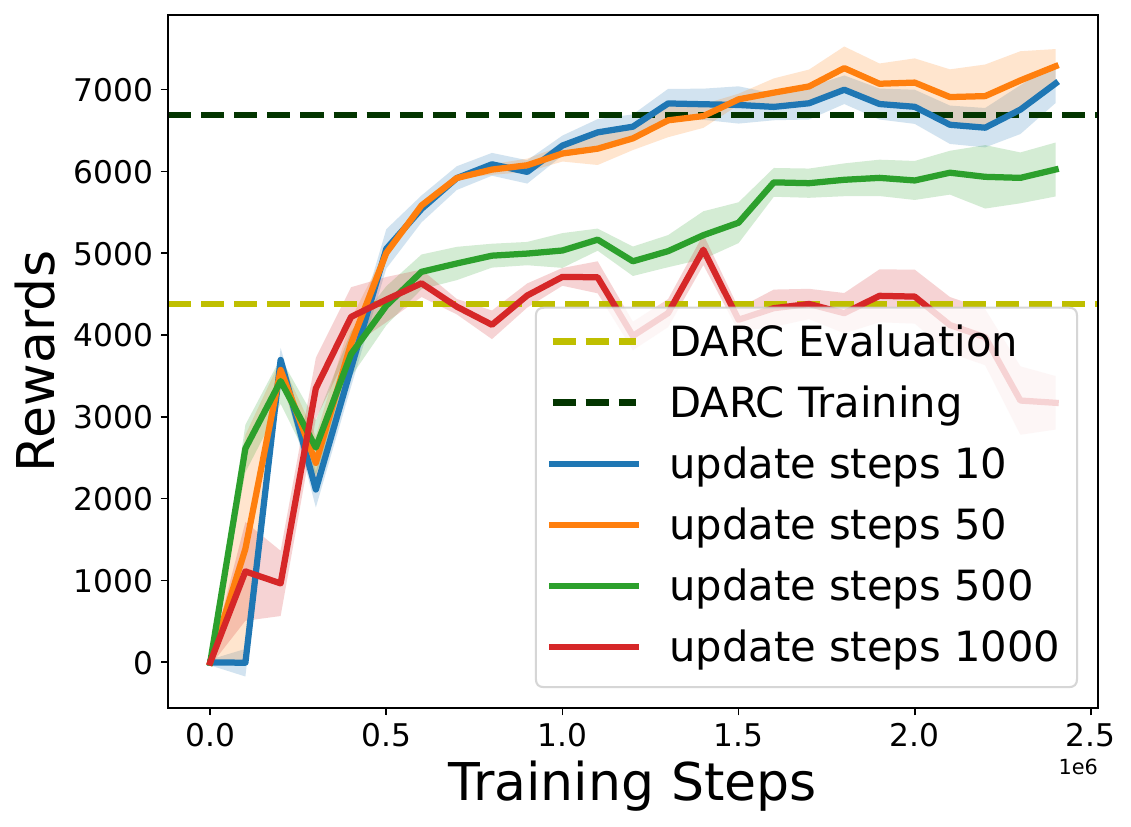}&
    \includegraphics[height=0.24\textwidth]{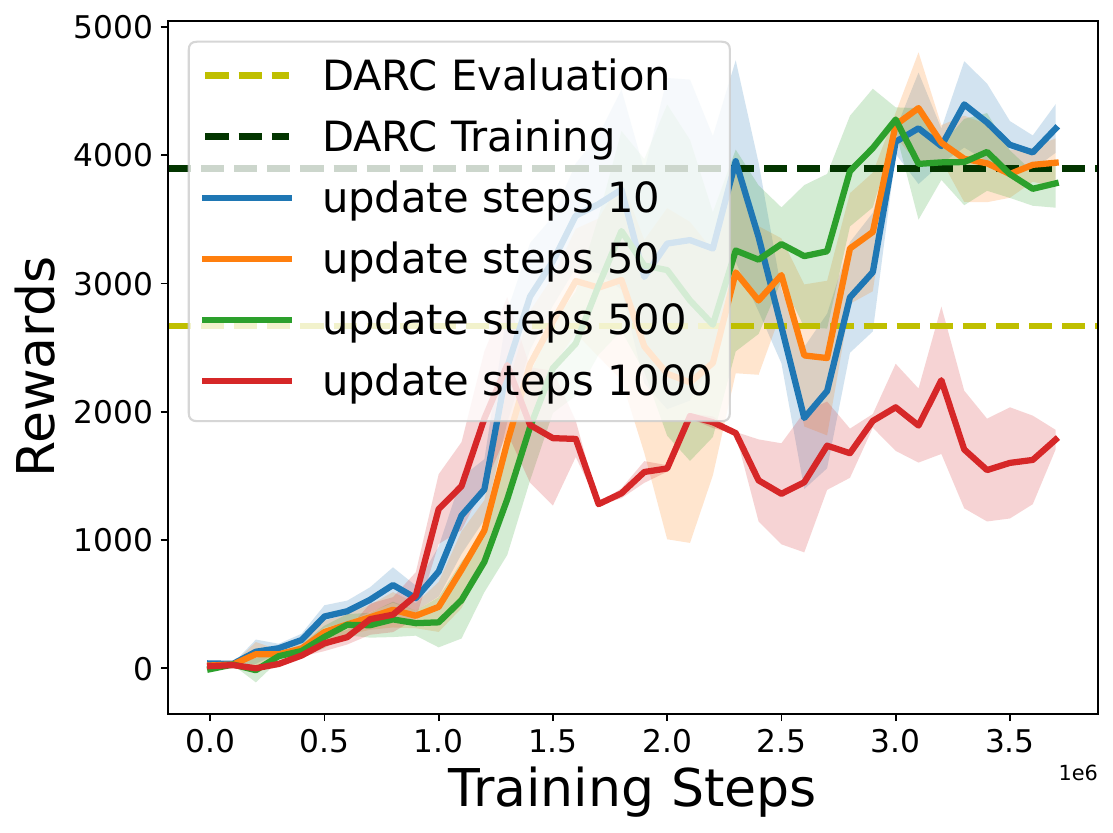}&
    \includegraphics[height=0.24\textwidth]{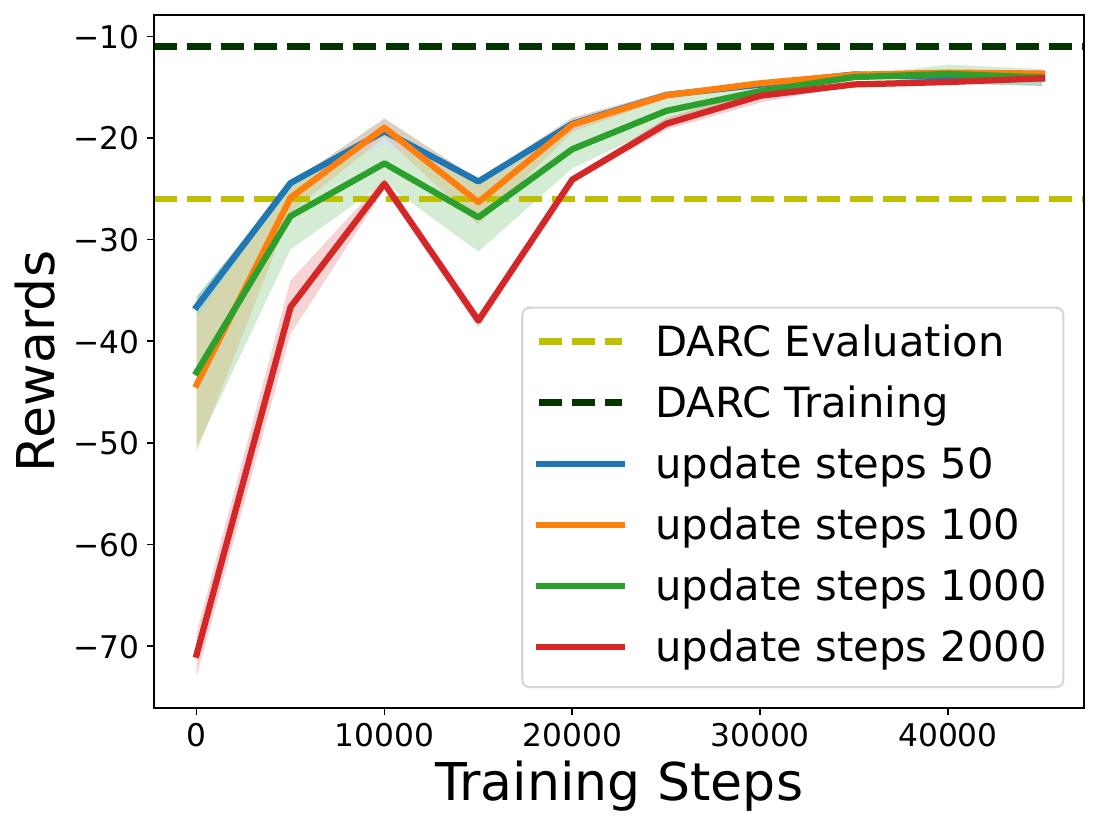}\\
    (a) HalfCheetah  & (b) Walker2d & (c) Reacher  \\
    \end{tabular}
    \caption{Experiment on the performance of DARAIL under different update steps of the discriminator in broken source setting.}
    \label{fig:update_freq}
    \vspace{-0.05in}
\end{figure*}

In imitation learning, we alternatively update the generator and discriminator. In practice, we normally update the generator several steps and then update the discriminator once. The update steps, updating the discriminator every how many training steps, is a hyperparameter and is important in GAN training. The smaller the update steps are, the higher the update frequencies are. We tune the update steps and show the result of it in different environments. The best discriminator update step in HalfCheetah, Walker2d, and Reacher are $50$, $50$, and $1000$, respectively. We varied the discriminator update steps in HalfCheetah and Walker2d in [$10$, $50$, $500$, $1000$] steps, and the update steps in Reacher are $[50,100,1000,2000]$ steps. Figure \ref{fig:update_freq} shows the effects of different discriminator update steps in the final performance. As we can see, for all three environments, a smaller update step (higher update frequency) is preferred as it can learn a better reward estimation. However, as we noticed, for example, for HalfCheetah and Walker2d, when the update step is 50, decreasing it to 10 will not further improve the performance. 

\subsection{Increase the weight on the modified reward of DARC.} 
We tested DARC algorithm with modified reward $r(s_t,a_t) + \eta \Delta(s_t,a_t,s_{t+1})$ with $\eta > 1$ instead of $\eta = 1$. And the $\eta = 1$ is derived from the distribution matching objective in Eq.\eqref{eq:darc loss}. We show the results in Figure \ref{fig: different delta r} under the broken source environment setting. We can see that increasing $\eta$ will not increase the DARC performance in the target domain but will hurt the performance of DARC in the target domain.

\begin{figure*}[h]
    \centering
    \setlength{\tabcolsep}{0pt}
    \begin{tabular}{cccc}
    \includegraphics[height=0.17\textwidth]{Fig/darc/h_darc.pdf}&
    \includegraphics[height=0.17\textwidth]{Fig/darc/a_darc.pdf}&
    \includegraphics[height=0.17\textwidth]{Fig/darc/w_darc.pdf}&
    \includegraphics[height=0.17\textwidth]{Fig/darc/r_darc.pdf}\\
    
    \includegraphics[height=0.17\textwidth]{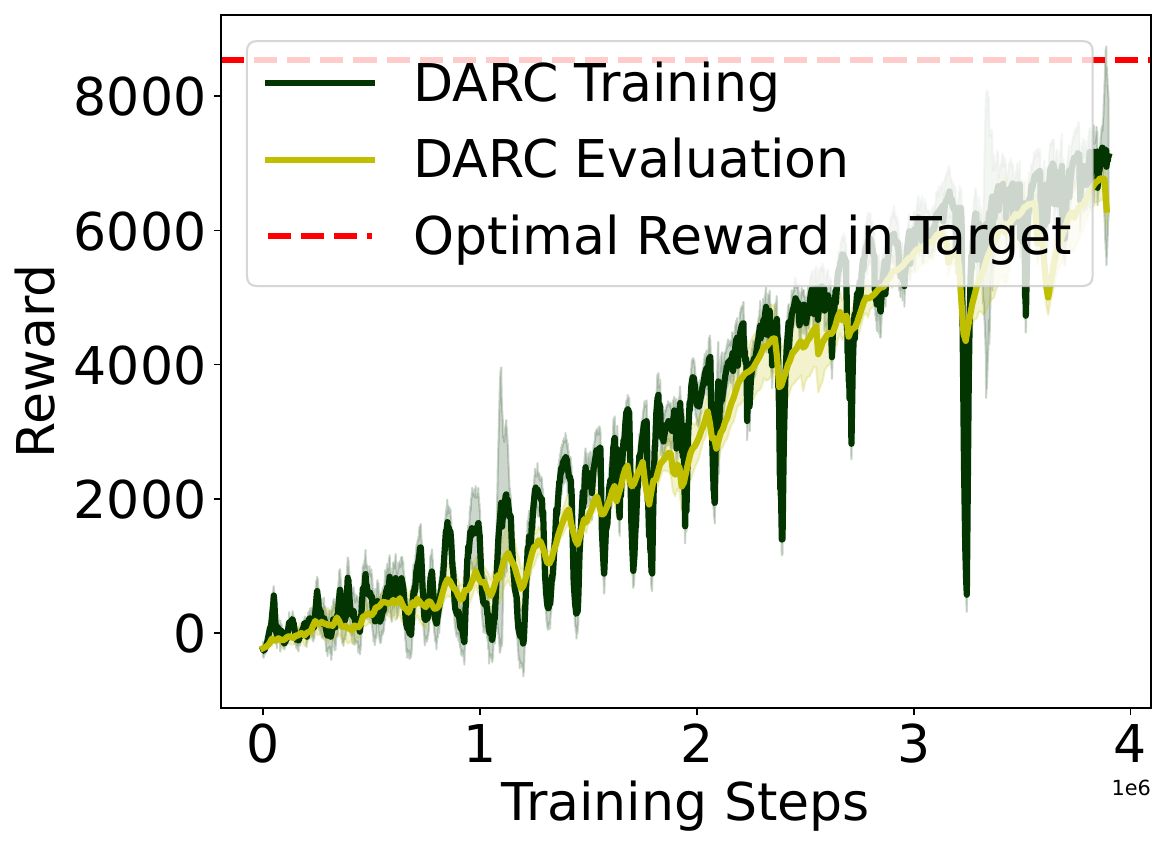}&
    \includegraphics[height=0.17\textwidth]{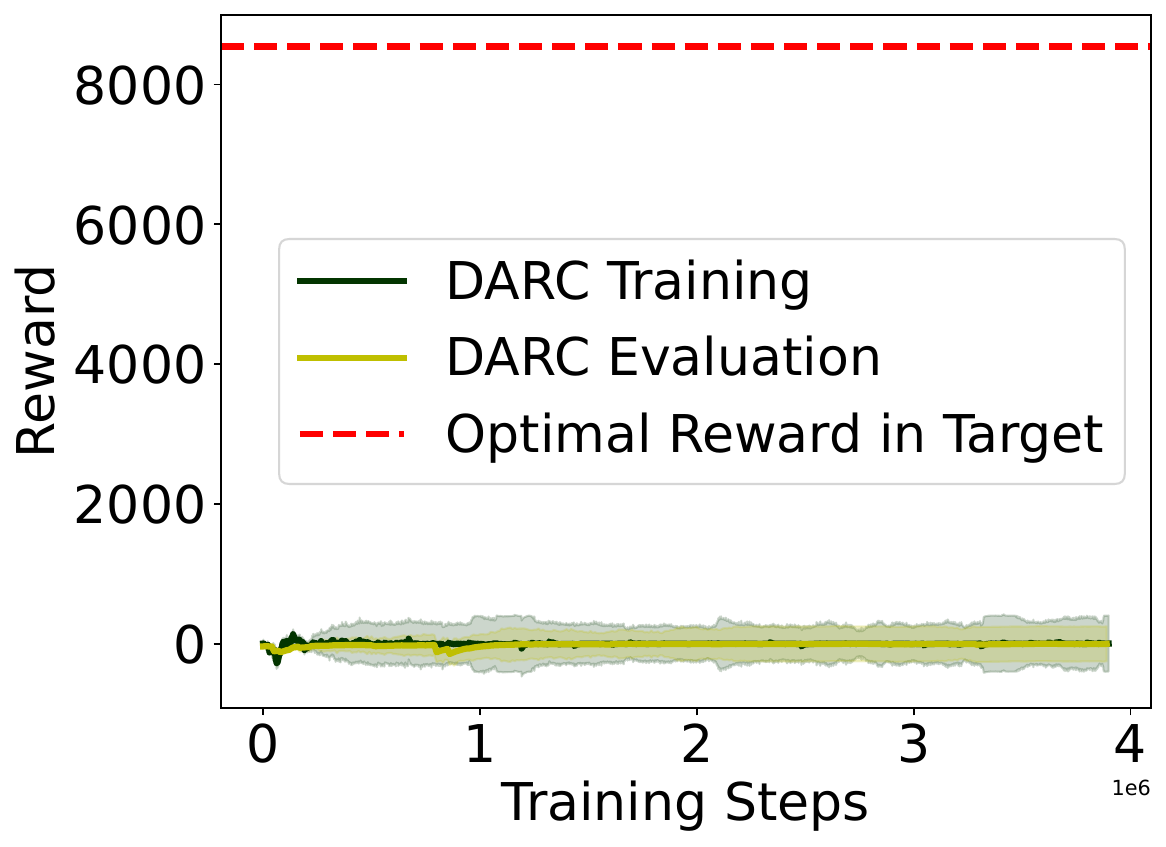}&
    \includegraphics[height=0.17\textwidth]{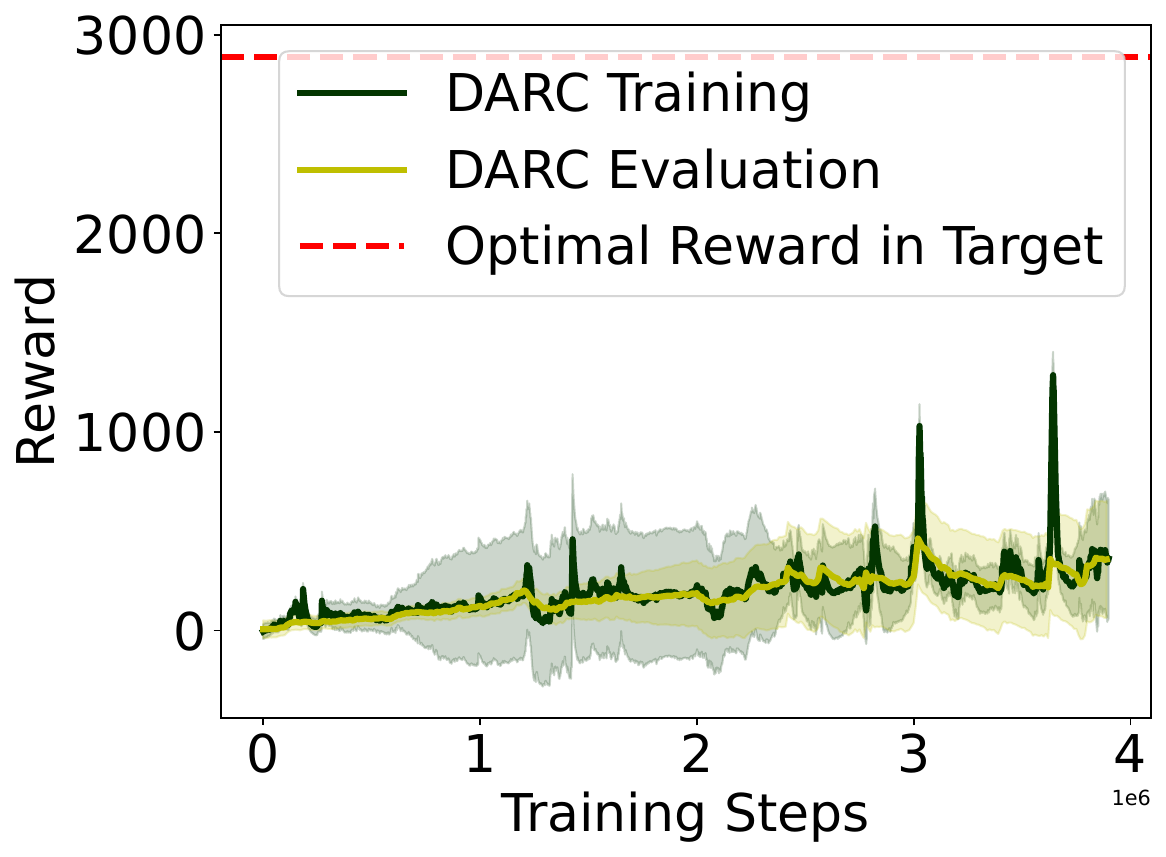}&
    \includegraphics[height=0.17\textwidth]{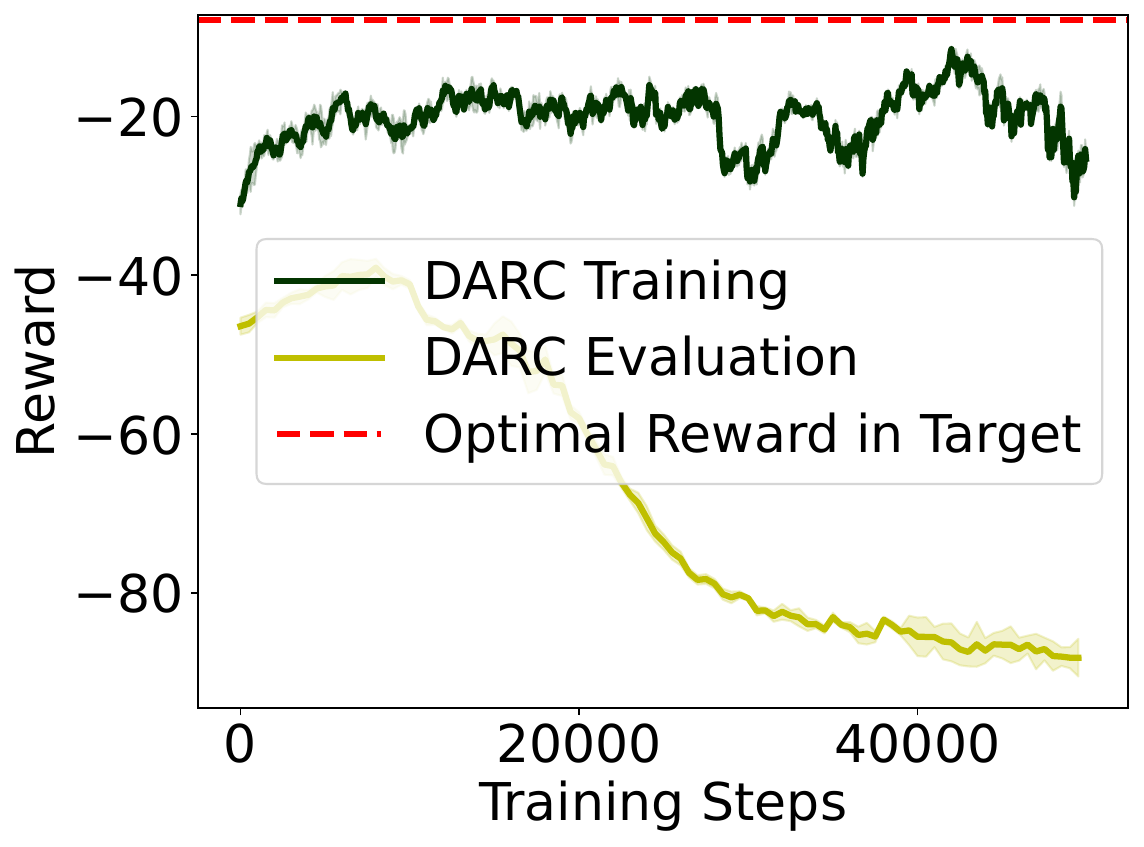}\\

    \includegraphics[height=0.17\textwidth]{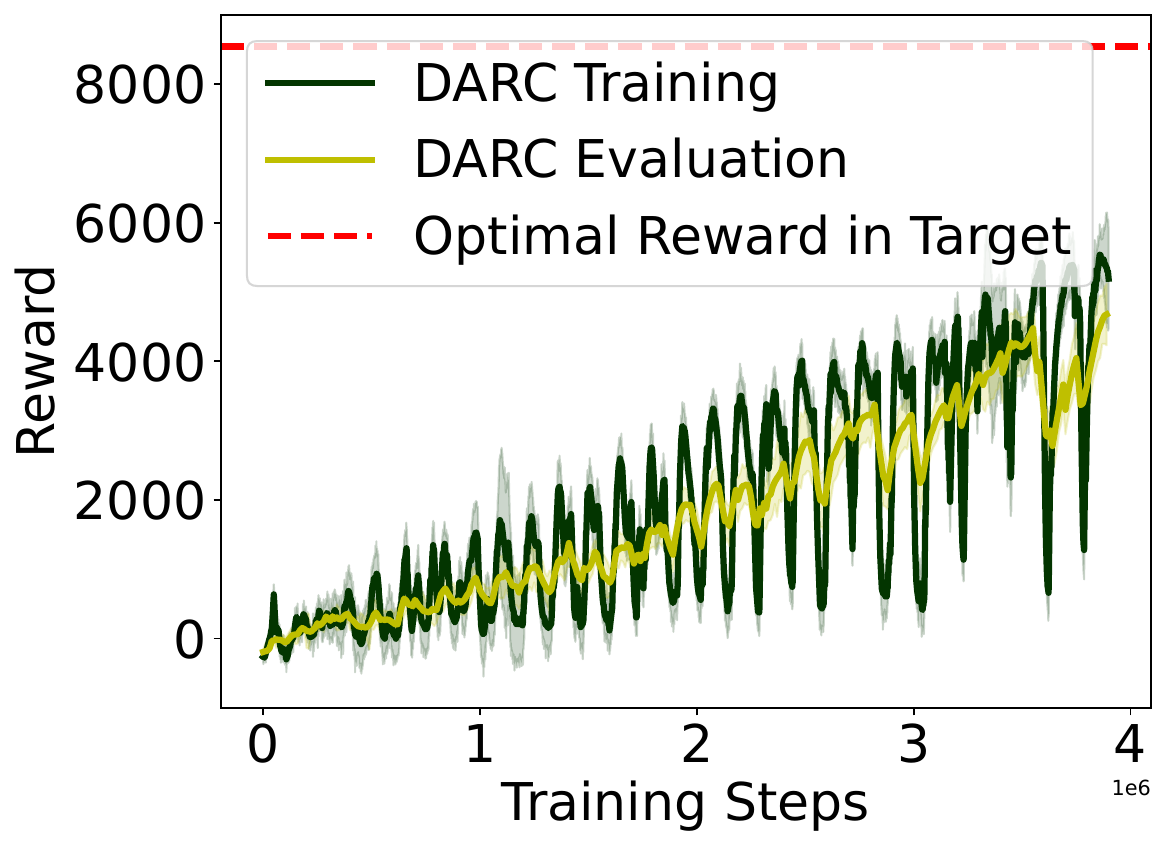}&
    \includegraphics[height=0.17\textwidth]{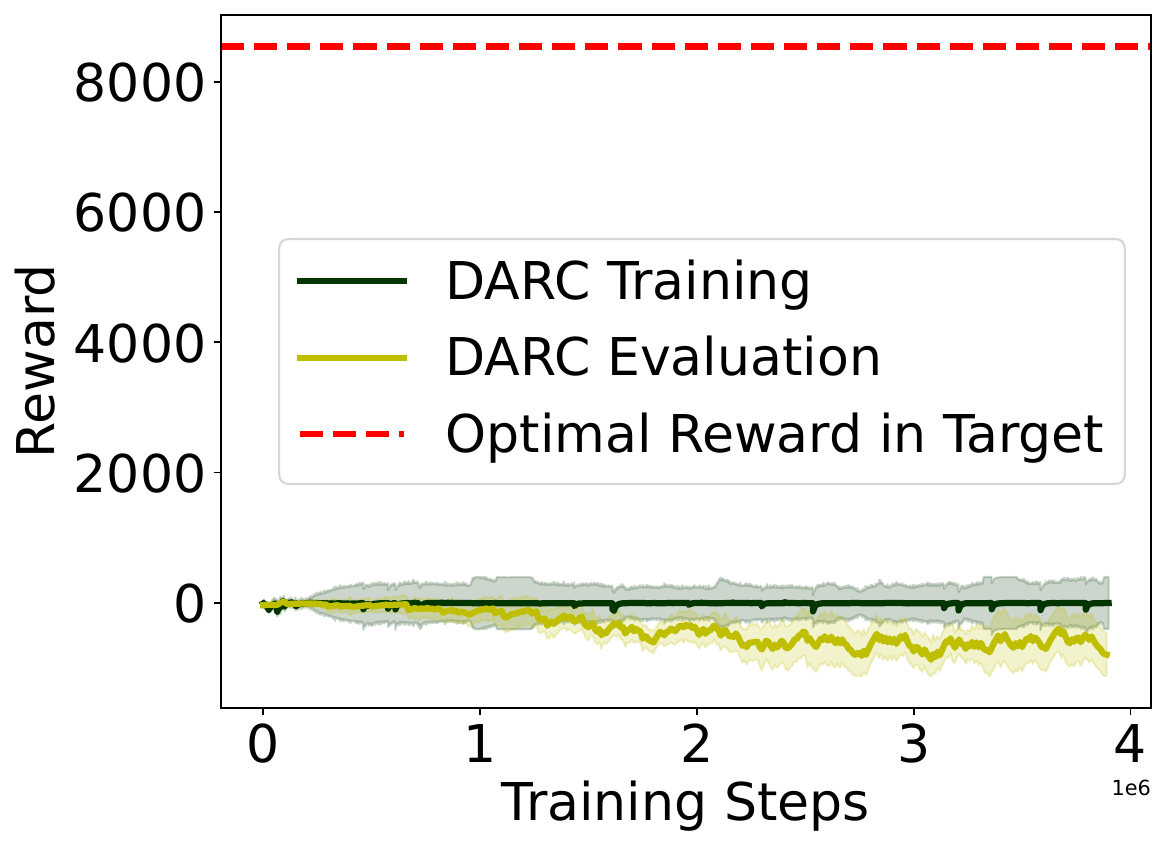}&
    \includegraphics[height=0.17\textwidth]{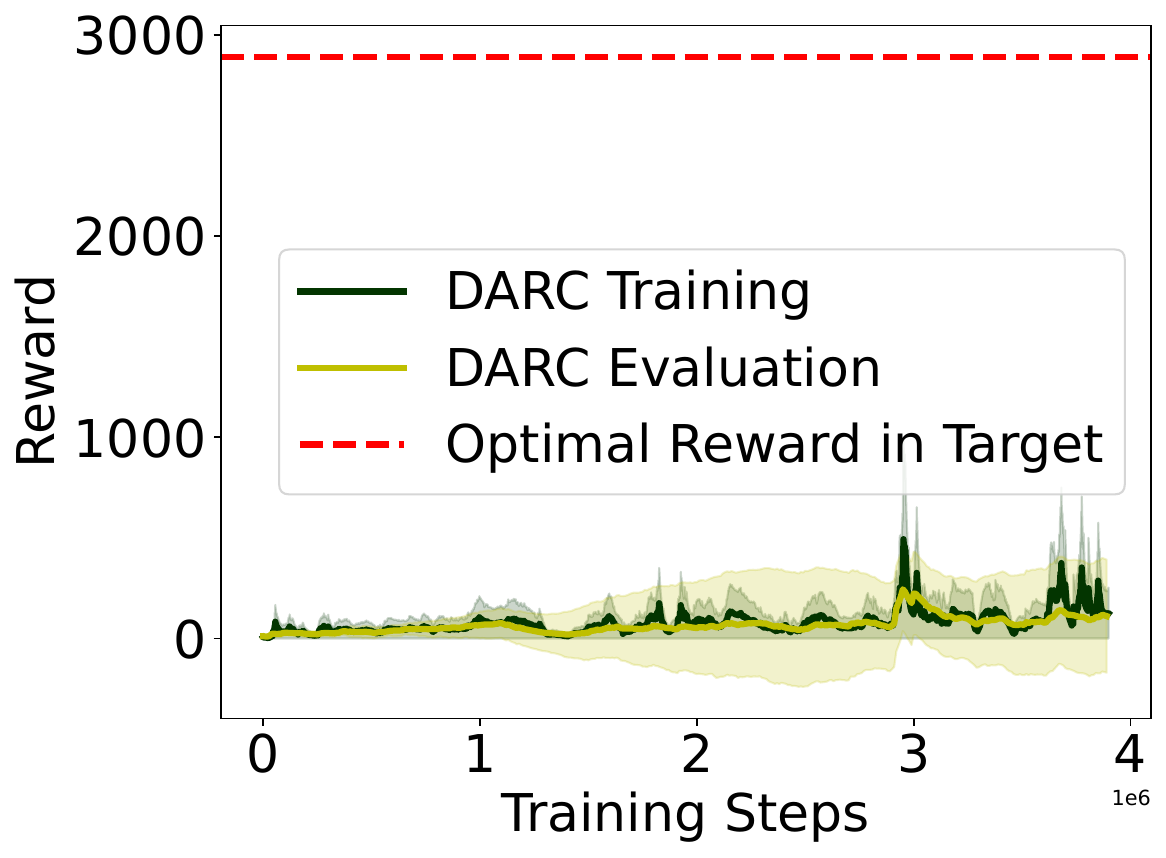}&
    \includegraphics[height=0.17\textwidth]{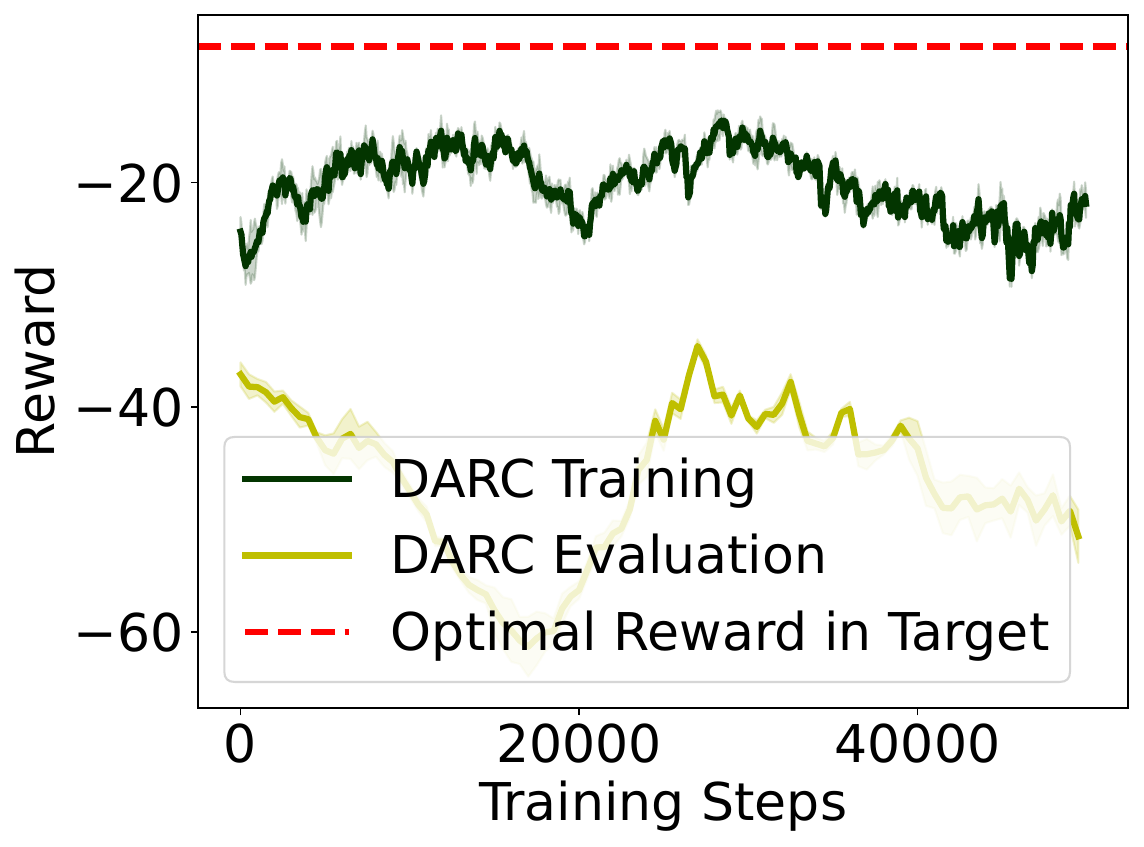}\\
    (a) HalfCheetah  & (b) Ant  & (c) Walker2d & (d) Reacher  \\
    \end{tabular}
    \caption{Experiment of different $\eta$ in the modified reward $r(s_t,a_t) + \eta  +\Delta(s_t,a_t,s_{t+1})$ for DARC in broken source environment setting. Top row: $\eta = 1$, middle row: $\eta = 1.5$ and button row: $\eta = 2$. We observe that increasing the $\eta$ will reduce the performance degradation in most cases, but it will also harm the performance of DARC in the target domain as increasing $\eta$ focuses more on making the DARC perform more similarly in both domains instead of maximizing the cumulative reward.} 
    \label{fig: different delta r}
    \vspace{-0.05in}
\end{figure*}

\subsection{Hyperparameters}
\label{appendix: hyperparameters}
For a fair comparison, we tune the parameters of baselines and our method. The hidden layers of the policy and value network are [256,256] for the HalfCheetah, Ant, and Walker2d and [64,64] for Reacher. And the hidden layer of the two classifiers is [64] for the HalfCheetah, Ant, and Walker2d and [32] for Reacher. The batch size is set to be 256. We regularize the state by adding the running average of the state. We fairly tune the learning rate from $[3e-4,1e-4,5e-5,1e-5]$. For those methods that require the importance weight $\rho$, we tune the update steps of the two classifiers trained to obtain the importance weight from $[10,50,100]$. We also add Gaussian noise $\epsilon \sim N(0,1)$ to the input of the classifiers for regularization, and the noise scale is selected from $[0.1,0.2,1.0]$. For the imitation learning component, the number of expert trajectories is 20. We further tune the update steps of the discriminator and add Gaussian noise to the input of the discriminator. 

\subsection{Computation Resources}
\label{appendix: computation}
We run the experiment on a single GPU: NVIDIA RTX A5000-24564MiB with 8-CPUs: AMD Ryzen Threadripper 3960X 24-Core. Each experiment requires  12GB RAM and require 20GB available disk space for storage of the data.
\section{Limitations} 
\label{appendix: limitation}
A potential limitation will be that we rely on DARC or similar methods to generate state pairs. An overly large dynamics shift, or data limitation may prevent us from obtaining high-quality state space data to imitate in the source domain. We do the experiment on the Mujoco environment instead of the real-world sim-2-real problem. We leave the investigation of this to future work.

\newpage
\section*{NeurIPS Paper Checklist}

\begin{enumerate}

\item {\bf Claims}
    \item[] Question: Do the main claims made in the abstract and introduction accurately reflect the paper's contributions and scope?
    \item[] Answer: \answerYes{} % Replace by \answerYes{}, \answerNo{}, or \answerNA{}.
    \item[] Justification: Abstract and Introduction section states the contribution. And in the introduction section, we have a contribution list.
    \item[] Guidelines:
    \begin{itemize}
        \item The answer NA means that the abstract and introduction do not include the claims made in the paper.
        \item The abstract and/or introduction should clearly state the claims made, including the contributions made in the paper and important assumptions and limitations. A No or NA answer to this question will not be perceived well by the reviewers. 
        \item The claims made should match theoretical and experimental results, and reflect how much the results can be expected to generalize to other settings. 
        \item It is fine to include aspirational goals as motivation as long as it is clear that these goals are not attained by the paper. 
    \end{itemize}

\item {\bf Limitations}
    \item[] Question: Does the paper discuss the limitations of the work performed by the authors?
    \item[] Answer: \answerYes{} % Replace by \answerYes{}, \answerNo{}, or \answerNA{}.
    \item[] Justification: We talk about the limitation of our method in the Appendix \ref{appendix: limitation}.
    \item[] Guidelines:
    \begin{itemize}
        \item The answer NA means that the paper has no limitation while the answer No means that the paper has limitations, but those are not discussed in the paper. 
        \item The authors are encouraged to create a separate "Limitations" section in their paper.
        \item The paper should point out any strong assumptions and how robust the results are to violations of these assumptions (e.g., independence assumptions, noiseless settings, model well-specification, asymptotic approximations only holding locally). The authors should reflect on how these assumptions might be violated in practice and what the implications would be.
        \item The authors should reflect on the scope of the claims made, e.g., if the approach was only tested on a few datasets or with a few runs. In general, empirical results often depend on implicit assumptions, which should be articulated.
        \item The authors should reflect on the factors that influence the performance of the approach. For example, a facial recognition algorithm may perform poorly when image resolution is low or images are taken in low lighting. Or a speech-to-text system might not be used reliably to provide closed captions for online lectures because it fails to handle technical jargon.
        \item The authors should discuss the computational efficiency of the proposed algorithms and how they scale with dataset size.
        \item If applicable, the authors should discuss possible limitations of their approach to address problems of privacy and fairness.
        \item While the authors might fear that complete honesty about limitations might be used by reviewers as grounds for rejection, a worse outcome might be that reviewers discover limitations that aren't acknowledged in the paper. The authors should use their best judgment and recognize that individual actions in favor of transparency play an important role in developing norms that preserve the integrity of the community. Reviewers will be specifically instructed to not penalize honesty concerning limitations.
    \end{itemize}

\item {\bf Theory Assumptions and Proofs}
    \item[] Question: For each theoretical result, does the paper provide the full set of assumptions and a complete (and correct) proof?
    \item[] Answer: \answerYes{} % Replace by \answerYes{}, \answerNo{}, or \answerNA{}.
    \item[] Justification: Yes. We present our theoretical result in Section \ref{secton: analysis} and the proof is in Appendix \ref{appendix: theoretical analysis}.
    \item[] Guidelines: 
    \begin{itemize}
        \item The answer NA means that the paper does not include theoretical results. 
        \item All the theorems, formulas, and proofs in the paper should be numbered and cross-referenced.
        \item All assumptions should be clearly stated or referenced in the statement of any theorems.
        \item The proofs can either appear in the main paper or the supplemental material, but if they appear in the supplemental material, the authors are encouraged to provide a short proof sketch to provide intuition. 
        \item Inversely, any informal proof provided in the core of the paper should be complemented by formal proofs provided in appendix or supplemental material.
        \item Theorems and Lemmas that the proof relies upon should be properly referenced. 
    \end{itemize}

    \item {\bf Experimental Result Reproducibility}
    \item[] Question: Does the paper fully disclose all the information needed to reproduce the main experimental results of the paper to the extent that it affects the main claims and/or conclusions of the paper (regardless of whether the code and data are provided or not)?
    \item[] Answer: \answerYes{} % Replace by \answerYes{}, \answerNo{}, or \answerNA{}.
    \item[] Justification: We provide details about how we create the dynamics shift and the hyperparameters that we used in the experiments in Appendix \ref{appendix: hyperparameters} and release the code. 
    \item[] Guidelines:
    \begin{itemize}
        \item The answer NA means that the paper does not include experiments.
        \item If the paper includes experiments, a No answer to this question will not be perceived well by the reviewers: Making the paper reproducible is important, regardless of whether the code and data are provided or not.
        \item If the contribution is a dataset and/or model, the authors should describe the steps taken to make their results reproducible or verifiable. 
        \item Depending on the contribution, reproducibility can be accomplished in various ways. For example, if the contribution is a novel architecture, describing the architecture fully might suffice, or if the contribution is a specific model and empirical evaluation, it may be necessary to either make it possible for others to replicate the model with the same dataset, or provide access to the model. In general. releasing code and data is often one good way to accomplish this, but reproducibility can also be provided via detailed instructions for how to replicate the results, access to a hosted model (e.g., in the case of a large language model), releasing of a model checkpoint, or other means that are appropriate to the research performed.
        \item While NeurIPS does not require releasing code, the conference does require all submissions to provide some reasonable avenue for reproducibility, which may depend on the nature of the contribution. For example
        \begin{enumerate}
            \item If the contribution is primarily a new algorithm, the paper should make it clear how to reproduce that algorithm.
            \item If the contribution is primarily a new model architecture, the paper should describe the architecture clearly and fully.
            \item If the contribution is a new model (e.g., a large language model), then there should either be a way to access this model for reproducing the results or a way to reproduce the model (e.g., with an open-source dataset or instructions for how to construct the dataset).
            \item We recognize that reproducibility may be tricky in some cases, in which case authors are welcome to describe the particular way they provide for reproducibility. In the case of closed-source models, it may be that access to the model is limited in some way (e.g., to registered users), but it should be possible for other researchers to have some path to reproducing or verifying the results.
        \end{enumerate}
    \end{itemize}

\item {\bf Open access to data and code}
    \item[] Question: Does the paper provide open access to the data and code, with sufficient instructions to faithfully reproduce the main experimental results, as described in supplemental material?
    \item[] Answer: \answerYes{} % Replace by \answerYes{}, \answerNo{}, or \answerNA{}.
    \item[] Justification: We provide a GitHub repository in the paper.
    \item[] Guidelines:
    \begin{itemize}
        \item The answer NA means that paper does not include experiments requiring code.
        \item Please see the NeurIPS code and data submission guidelines (\url{https://nips.cc/public/guides/CodeSubmissionPolicy}) for more details.
        \item While we encourage the release of code and data, we understand that this might not be possible, so “No” is an acceptable answer. Papers cannot be rejected simply for not including code, unless this is central to the contribution (e.g., for a new open-source benchmark).
        \item The instructions should contain the exact command and environment needed to run to reproduce the results. See the NeurIPS code and data submission guidelines (\url{https://nips.cc/public/guides/CodeSubmissionPolicy}) for more details.
        \item The authors should provide instructions on data access and preparation, including how to access the raw data, preprocessed data, intermediate data, and generated data, etc.
        \item The authors should provide scripts to reproduce all experimental results for the new proposed method and baselines. If only a subset of experiments are reproducible, they should state which ones are omitted from the script and why.
        \item At submission time, to preserve anonymity, the authors should release anonymized versions (if applicable).
        \item Providing as much information as possible in supplemental material (appended to the paper) is recommended, but including URLs to data and code is permitted.
    \end{itemize}

\item {\bf Experimental Setting/Details}
    \item[] Question: Does the paper specify all the training and test details (e.g., data splits, hyperparameters, how they were chosen, type of optimizer, etc.) necessary to understand the results?
    \item[] Answer: \answerYes{} % Replace by \answerYes{}, \answerNo{}, or \answerNA{}.
    \item[] Justification: We provide the details of the experiment setting, including how to create the dynamics shift in the Experiment section. We also describe the hyperparameter tuning in the Appendix \ref{appendix: hyperparameters}.
    \item[] Guidelines:
    \begin{itemize}
        \item The answer NA means that the paper does not include experiments.
        \item The experimental setting should be presented in the core of the paper to a level of detail that is necessary to appreciate the results and make sense of them.
        \item The full details can be provided either with the code, in appendix, or as supplemental material.
    \end{itemize}

\item {\bf Experiment Statistical Significance}
    \item[] Question: Does the paper report error bars suitably and correctly defined or other appropriate information about the statistical significance of the experiments?
    \item[] Answer: \answerYes{} % Replace by \answerYes{}, \answerNo{}, or \answerNA{}.
    \item[] Justification: We have multiple runs of each experiment and report the mean value and standard deviation in the paper.
    \item[] Guidelines:
    \begin{itemize}
        \item The answer NA means that the paper does not include experiments.
        \item The authors should answer "Yes" if the results are accompanied by error bars, confidence intervals, or statistical significance tests, at least for the experiments that support the main claims of the paper.
        \item The factors of variability that the error bars are capturing should be clearly stated (for example, train/test split, initialization, random drawing of some parameter, or overall run with given experimental conditions).
        \item The method for calculating the error bars should be explained (closed form formula, call to a library function, bootstrap, etc.)
        \item The assumptions made should be given (e.g., Normally distributed errors).
        \item It should be clear whether the error bar is the standard deviation or the standard error of the mean.
        \item It is OK to report 1-sigma error bars, but one should state it. The authors should preferably report a 2-sigma error bar than state that they have a 96\% CI, if the hypothesis of Normality of errors is not verified.
        \item For asymmetric distributions, the authors should be careful not to show in tables or figures symmetric error bars that would yield results that are out of range (e.g. negative error rates).
        \item If error bars are reported in tables or plots, The authors should explain in the text how they were calculated and reference the corresponding figures or tables in the text.
    \end{itemize}

\item {\bf Experiments Compute Resources}
    \item[] Question: For each experiment, does the paper provide sufficient information on the computer resources (type of compute workers, memory, time of execution) needed to reproduce the experiments?
    \item[] Answer: \answerYes{} % Replace by \answerYes{}, \answerNo{}, or \answerNA{}.
    \item[] Justification: We provide the GPU/CPU as well as the RAM and storage information for each experiment.
    \item[] Guidelines:
    \begin{itemize}
        \item The answer NA means that the paper does not include experiments.
        \item The paper should indicate the type of compute workers CPU or GPU, internal cluster, or cloud provider, including relevant memory and storage.
        \item The paper should provide the amount of compute required for each of the individual experimental runs as well as estimate the total compute. 
        \item The paper should disclose whether the full research project required more computing than the experiments reported in the paper (e.g., preliminary or failed experiments that didn't make it into the paper). 
    \end{itemize}
    
\item {\bf Code Of Ethics}
    \item[] Question: Does the research conducted in the paper conform, in every respect, with the NeurIPS Code of Ethics \url{https://neurips.cc/public/EthicsGuidelines}?
    \item[] Answer: \answerYes{} % Replace by \answerYes{}, \answerNo{}, or \answerNA{}.
    \item[] Justification: Our data is open source benchmarks in the RL research field. 
    \item[] Guidelines:
    \begin{itemize}
        \item The answer NA means that the authors have not reviewed the NeurIPS Code of Ethics.
        \item If the authors answer No, they should explain the special circumstances that require a deviation from the Code of Ethics.
        \item The authors should make sure to preserve anonymity (e.g., if there is a special consideration due to laws or regulations in their jurisdiction).
    \end{itemize}

\item {\bf Broader Impacts}
    \item[] Question: Does the paper discuss both potential positive societal impacts and negative societal impacts of the work performed?
    \item[] Answer: \answerYes{} % Replace by \answerYes{}, \answerNo{}, or \answerNA{}.
    \item[] Justification: In the conclusion, we briefly mentioned that our method avoids directly training a policy in a high-risk environment in safety-critical tasks. 
    % \justificationTODO{}
    \item[] Guidelines:
    \begin{itemize}
        \item The answer NA means that there is no societal impact of the work performed.
        \item If the authors answer NA or No, they should explain why their work has no societal impact or why the paper does not address societal impact.
        \item Examples of negative societal impacts include potential malicious or unintended uses (e.g., disinformation, generating fake profiles, surveillance), fairness considerations (e.g., deployment of technologies that could make decisions that unfairly impact specific groups), privacy considerations, and security considerations.
        \item The conference expects that many papers will be foundational research and not tied to particular applications, let alone deployments. However, if there is a direct path to any negative applications, the authors should point it out. For example, it is legitimate to point out that an improvement in the quality of generative models could be used to generate deepfakes for disinformation. On the other hand, it is not needed to point out that a generic algorithm for optimizing neural networks could enable people to train models that generate Deepfakes faster.
        \item The authors should consider possible harms that could arise when the technology is being used as intended and functioning correctly, harms that could arise when the technology is being used as intended but gives incorrect results, and harms following from (intentional or unintentional) misuse of the technology.
        \item If there are negative societal impacts, the authors could also discuss possible mitigation strategies (e.g., gated release of models, providing defenses in addition to attacks, mechanisms for monitoring misuse, mechanisms to monitor how a system learns from feedback over time, improving the efficiency and accessibility of ML).
    \end{itemize}
    
\item {\bf Safeguards}
    \item[] Question: Does the paper describe safeguards that have been put in place for responsible release of data or models that have a high risk for misuse (e.g., pretrained language models, image generators, or scraped datasets)?
    \item[] Answer: \answerNA{} % Replace by \answerYes{}, \answerNo{}, or \answerNA{}.
    \item[] Justification: We run the experiment on the simulated RL benchmarks; thus, no such issue exists.
    \item[] Guidelines:
    \begin{itemize}
        \item The answer NA means that the paper poses no such risks.
        \item Released models that have a high risk for misuse or dual-use should be released with necessary safeguards to allow for controlled use of the model, for example by requiring that users adhere to usage guidelines or restrictions to access the model or implementing safety filters. 
        \item Datasets that have been scraped from the Internet could pose safety risks. The authors should describe how they avoided releasing unsafe images.
        \item We recognize that providing effective safeguards is challenging, and many papers do not require this, but we encourage authors to take this into account and make a best faith effort.
    \end{itemize}

\item {\bf Licenses for existing assets}
    \item[] Question: Are the creators or original owners of assets (e.g., code, data, models), used in the paper, properly credited and are the license and terms of use explicitly mentioned and properly respected?
    \item[] Answer: \answerYes{} % Replace by \answerYes{}, \answerNo{}, or \answerNA{}.
    \item[] Justification: We provide citations to all the data and related work in our paper. 
    \item[] Guidelines:
    \begin{itemize}
        \item The answer NA means that the paper does not use existing assets.
        \item The authors should cite the original paper that produced the code package or dataset.
        \item The authors should state which version of the asset is used and, if possible, include a URL.
        \item The name of the license (e.g., CC-BY 4.0) should be included for each asset.
        \item For scraped data from a particular source (e.g., website), the copyright and terms of service of that source should be provided.
        \item If assets are released, the license, copyright information, and terms of use in the package should be provided. For popular datasets, \url{paperswithcode.com/datasets} has curated licenses for some datasets. Their licensing guide can help determine the license of a dataset.
        \item For existing datasets that are re-packaged, both the original license and the license of the derived asset (if it has changed) should be provided.
        \item If this information is not available online, the authors are encouraged to reach out to the asset's creators.
    \end{itemize}

\item {\bf New Assets}
    \item[] Question: Are new assets introduced in the paper well documented and is the documentation provided alongside the assets?
    \item[] Answer: \answerYes{} % Replace by \answerYes{}, \answerNo{}, or \answerNA{}.
    \item[] Justification: We include the code in our paper. Also, details about the implementation are included in the paper.
    \item[] Guidelines:
    \begin{itemize}
        \item The answer NA means that the paper does not release new assets.
        \item Researchers should communicate the details of the dataset/code/model as part of their submissions via structured templates. This includes details about training, license, limitations, etc. 
        \item The paper should discuss whether and how consent was obtained from people whose asset is used.
        \item At submission time, remember to anonymize your assets (if applicable). You can either create an anonymized URL or include an anonymized zip file.
    \end{itemize}

\item {\bf Crowdsourcing and Research with Human Subjects}
    \item[] Question: For crowdsourcing experiments and research with human subjects, does the paper include the full text of instructions given to participants and screenshots, if applicable, as well as details about compensation (if any)? 
    \item[] Answer: \answerNA{} % Replace by \answerYes{}, \answerNo{}, or \answerNA{}.
    \item[] Justification: Our experiments are conducted on the RL benchmarks and thus do not involve any crowdsourcing or research with human subjects.
    \item[] Guidelines:
    \begin{itemize}
        \item The answer NA means that the paper does not involve crowdsourcing nor research with human subjects.
        \item Including this information in the supplemental material is fine, but if the main contribution of the paper involves human subjects, then as much detail as possible should be included in the main paper. 
        \item According to the NeurIPS Code of Ethics, workers involved in data collection, curation, or other labor should be paid at least the minimum wage in the country of the data collector. 
    \end{itemize}

\item {\bf Institutional Review Board (IRB) Approvals or Equivalent for Research with Human Subjects}
    \item[] Question: Does the paper describe potential risks incurred by study participants, whether such risks were disclosed to the subjects, and whether Institutional Review Board (IRB) approvals (or an equivalent approval/review based on the requirements of your country or institution) were obtained?
    \item[] Answer: \answerNA{} % Replace by \answerYes{}, \answerNo{}, or \answerNA{}.
    \item[] Justification: Our research and experiment don't require IRB as we conducted experiments on simulated RL benchmarks.
    \item[] Guidelines:
    \begin{itemize}
        \item The answer NA means that the paper does not involve crowdsourcing nor research with human subjects.
        \item Depending on the country in which research is conducted, IRB approval (or equivalent) may be required for any human subjects research. If you obtained IRB approval, you should clearly state this in the paper. 
        \item We recognize that the procedures for this may vary significantly between institutions and locations, and we expect authors to adhere to the NeurIPS Code of Ethics and the guidelines for their institution. 
        \item For initial submissions, do not include any information that would break anonymity (if applicable), such as the institution conducting the review.
    \end{itemize}

\end{enumerate}

\end{document}